%% file: main.tex
\documentclass[10pt]{article} 
\usepackage[accepted]{tmlr}

\input{math_commands.tex}

\usepackage{hyperref}
\usepackage{url}
\usepackage{graphicx}
\usepackage{subcaption}
\usepackage{booktabs} 
\usepackage{algorithm}
\usepackage{algpseudocode}
\usepackage{amsmath}
\usepackage{amssymb}
\usepackage{amsthm}
\theoremstyle{plain}
\newtheorem{theorem}{Theorem}[section]
\newtheorem{proposition}[theorem]{Proposition}

\newtheorem{corollary}[theorem]{Corollary}

\usepackage{mathtools}
\usepackage{thmtools}
\usepackage{thm-restate}
\usepackage[capitalize,noabbrev]{cleveref}
\usepackage{adjustbox}
\usepackage{multicol}
\usepackage{multirow}
\usepackage{wrapfig}
\usepackage{tablefootnote}
\usepackage{titletoc}

\title{High-Dimensional Online Change Point Detection with Adaptive Thresholding and Interpretability}

\author{\name Sven Jacob \email Jacob.Sven@baua.bund.de \\
      \addr Federal Institute for Occupational Safety and Health, Dresden\\
      Technical University of Munich\\ 
      Munich Center for Machine Learning
      \AND
      \name Bardh Prenkaj \email prenkaj@di.uniroma1.it \\
      \addr Sapienza University of Rome 
      \AND
      \name Weijia Shao \email Shao.Weijia@baua.bund.de\\
      \addr Federal Institute for Occupational Safety and Health, Dresden
      \AND
      \name Gjergji Kasneci \email gjergji.kasneci@tum.de\\
      \addr Technical University of Munich \\ Munich Center for Machine Learning }

\def\month{08}  
\def\year{2026} 
\def\openreview{\url{https://openreview.net/forum?id=4ewaiYXoiv}} 

\begin{document}

\maketitle

\begin{abstract}
Change point detection (CPD) identifies abrupt and significant changes in sequential data, with applications in human activity recognition, financial markets, cybersecurity, manufacturing, and autonomous systems. Traditional CPD methods often face computational challenges in high-dimensional settings and typically provide limited explanations for detected changes, which can restrict their practical usability. This paper introduces a CPD framework that improves scalability and interpretability by leveraging the Sliced Wasserstein (SW) distance.
Our contributions are fourfold: (1) we transform multivariate sequential data into one-dimensional scores using the SW distance, making the resulting representation compatible with existing CPD methods; (2) we analyze the distributional behavior of random slices of the SW distance and show that, under suitable assumptions, they can be approximated by a Gamma distribution, providing a principled basis for threshold calibration; (3) we propose a self-adapting online CPD algorithm that combines this SW-based score with an adaptive quantile-based threshold; (4) we introduce a model-specific framework for generating contrastive explanations for annotated change points.
Empirically, our method reduces false positives by at least $48\%$ on average compared with popular online and offline CPD baselines, while maintaining competitive or superior detection performance\footnote{Code is available at \url{https://github.com/jsve96/SWCPD_Code}.}. At the same time, it produces interpretable change-point annotations, making it practical for deployment in high-stakes applications.
\end{abstract}

\section{Introduction}
Change point detection (CPD) is a fundamental problem in statistical analysis, focusing on identifying abrupt and significant changes in the underlying data-generating processes of sequential data. These changes can signal shifts in critical properties, such as distributions, relationships, or trends, making CPD pivotal in fields where timely detection of such shifts is crucial. Closely related to concept drift detection \citep{Gama.2014,Harel2014ConceptDD,Lu.2018}, CPD encompasses scenarios of both abrupt and gradual changes, with a direct impact on the accuracy and reliability of machine learning models and deployed systems. However, existing CPD methods are insufficient in both scaling to high dimensions and providing meaningful explanations, which poses a significant gap addressed by our approach.

The significance of CPD becomes evident in its multitude of real-world applications. In \emph{human activity recognition}, it can identify transitions between states, such as detecting when a person moves from walking to running \citep{HARExample1}. In \emph{financial markets}, CPD is essential for spotting regime shifts, such as the transition from a bull to a bear market, enabling traders and algorithms to adjust strategies \citep{FinancialExample1, FinancialExample2, FinancialExample3, FinancialExample4}. In \emph{cybersecurity}, CPD helps detect anomalies, such as cyberattacks or data breaches, by identifying abrupt deviations in network traffic \citep{CybersecurityExample1, CybersecurityExample2}. Similarly, in \emph{manufacturing quality control}, CPD can pinpoint defects or process anomalies to minimize waste and downtime. Furthermore, in \emph{autonomous driving}, detecting changes in environmental conditions or sensor data ensures safe operation under dynamic conditions \citep{DrivingExample1, DrivingExample2}. These examples underscore the critical role of CPD in enhancing decision-making and ensuring the safety, efficiency, and reliability of systems across domains.

Despite its utility, CPD faces significant challenges when applied to high-dimensional data, where both scalability and explainability are becoming increasingly challenging. Traditional methods often rely on comparing probability distributions or distances between data segments to detect changes \citep{Aminikhanghahi.2017, Lu.2018}. While effective in lower-dimensional settings, these methods struggle with computational efficiency and scalability in higher-dimensional spaces. For instance, the exact computation of the Wasserstein distance for multivariate data scales as $\mathcal{O}(n^3 \log(n))$, making it impractical for large datasets. Similarly, the computation of $U$- and $V$-statistics for the Maximum Mean Discrepancy (MMD) also scales quadratically in time. Alongside the computational aspects, most CPD methods fail to provide interpretable change points, narrowing down the root cause of the drifts. 

To address the lack of interpretable change point detection tailored for high-dimensional data, the Sliced Wasserstein (SW) distance \citep{SW_1} offers a promising alternative. Instead of computing a high-dimensional optimal transport directly, we can repeatedly project onto a single dimension, where Wasserstein distance has a closed form, and then average the results. By leveraging the closed-form expression of the Wasserstein distance for one-dimensional distributions, the SW distance reduces the computational complexity to $\mathcal{O}(n \log(n))$ by averaging over the Wasserstein distances of random one-dimensional projections. Additionally, by leveraging the geometric properties of the random projections, we can provide contrastive explanations for detected change points.

In this work, we bridge this gap by introducing a self-adapting online CPD algorithm that combines the SW distance with an adaptive quantile-based threshold. Our contributions are as follows:
\begin{enumerate}
\item \textbf{A Self-Adapting Online CPD Algorithm with Adaptive Thresholding (\S \ref{Sec:SWCPD}).} We propose a self-adapting online CPD algorithm that dynamically adjusts its threshold based on a quantile-based threshold. This enables robust and adaptive detection of change points in streaming high-dimensional data without requiring a fixed global detection threshold.
    \item \textbf{Distributional Analysis of SW-Based Random Slices (\S \ref{PS}).} We analyze the distributional behavior of random slices derived from the SW distance and show that, under suitable assumptions in mean-shift dominated drifts, they can be approximated by a Gamma distribution. This provides a principled motivation for threshold calibration and helps explain the empirical behavior of the proposed detection statistic.
    \item \textbf{Contrastive Explanations for Change Points Using Geometric Properties of SW Distance (\S \ref{Section:Explainability}).} We develop a novel, model-specific framework for generating contrastive explanations of detected change points. By leveraging the geometric properties of random projections, we provide fine-grained insights into which features contribute most to distributional shifts, enhancing interpretability.
    \item \textbf{Competitive Performance with Interpretability (\S \ref{CPD_Evaluation}).} We evaluate our proposed framework on multiple real-world datasets and show that it achieves competitive or superior performance compared to leading online and offline CPD methods. In particular, it reduces false positives while also producing interpretable change-point annotations, supporting its practical use in high-dimensional applications.
\end{enumerate}
\section{Related Work} \label{RW}
\noindent\textbf{Online change point detection.} Change point detection can be grouped into parametric and nonparametric methods \citep{truong2020selective}. Parametric methods assume that the data is drawn from some parametric family of probability distributions. Nonparametric approaches do not impose distributional assumptions. One of the most prominently known parametric approaches is the cumulative sum (CUSUM) method \citep{CUSUM_PAGE54}. Over the last years, several extensions of CUSUM were introduced \citep{CUSUM_Extension2006,CUSUM_Extension2023FOCuS,yu2023note}. Another popular parametric branch of change point detection are Bayesian methods including \citep{BOCPD_2,Knoblauch_robust_BOCPD}.
Nonparametric methods are often based on test statistics derived by distances, including Euclidean distances \citep{matteson2014nonparametric,madrid2019sequential} or divergence measures e.g. MMD \citep{gretton2012kernel,harchaoui2013kernel,ScanB} or test-statistics based on density-ratio estimation \citep{sugiyama2008direct-density,kanamori2009least_direc_density2,yamada2013relative_3,liu2013change_relative4}.  Recently, \citet{deng2025correlation} introduce a non-parametric online change-point detection method that captures changes in multivariate dependencies by measuring Riemannian distances between correlation matrices and incorporating these distances into a CUSUM detector.
More recently, deep generative models \citep{chang2019kernel,de2021change} and density-ratio estimation based on deep neural networks \citep{hushchyn2020online,hushchyn2021generalization}
were also used for sequential change point detection.

\noindent \textbf{Optimal transport based change detection.}
Over the past few years, optimal transport has become a popular choice for comparing two distributions. Naturally, optimal transport-based metrics, such as the Wasserstein distance or Sliced Wasserstein distance, can also be applied for sequential change point detection. This includes \citet{OT-CPD}, which proposes a change point detection framework computing the Wasserstein distance between a sliding window relying on a fixed threshold to detect changes. Similar approaches were introduced in \citet{faber2021watch,faber2022lifewatch}. In \citet{MatchingFilter_Cheng_SWD}, this framework was refined using a matched filter test statistic. Furthermore, one of the proposed test statistics is the Sliced Wasserstein distance, which is combined with a fixed threshold. Our work differs by introducing an adaptive threshold and primarily investigating the Sliced Wasserstein distance as a tool for interpretability. 

\noindent \textbf{Interpretability through random projections.} The motivation behind utilizing random projection is the lower computational cost for the Wasserstein distance. In \citet{projected_WD}, a projected Wasserstein distance was introduced, which finds a k-dimensional subspace through linear projections and calculates the Wasserstein distance in the lower-dimensional space. Analogously, in \citet{WassersteinProjectionKernel}, the kernel projected Wasserstein distance was motivated as a non-linear alternative to \citet{projected_WD}. Both approaches reduce the computational complexity and facilitate interpretability in a two-sample test. Our proposed framework goes beyond a single iteration to find a specific projection direction, maximizing the Wasserstein distance between projected samples. We propose an iterative approach to identify the most discriminative feature, leading to a more comprehensive and detailed explanation of the underlying drift. Recent literature \citep{hinder2023feature} has highlighted that random projections are not universally beneficial for drift detection. At the same time, several studies support their use in high-dimensional two-sample testing \citep{rabanser2019failing,projected_WD}. Taken together, these findings suggest that random projections should be viewed as a computational--statistical trade-off rather than as a transformation that uniformly improves detection performance.
\section{Problem Setup} \label{PS}
The general problem of CPD involves determining abrupt changes in a time series. We denote the time series $\mathcal{D}=\{x_{t} \in \mathbb{R}^{d}: t \in [T]\}$ with $[T] = \{1,2,\dots,T\}$ and assume that the time series follows some unknown underlying distribution $P$. The goal is to identify all timestamps $t_{*} \in [T]$ where the underlying distribution changes from $P$ to $Q$, such that $ t \leq t_{*}: x_{t} \sim P$ and $t > t_{*}: x_{t} \sim Q.$
Consider $P,Q$ to be two probability distributions with $p$ finite moments. The Wasserstein distance, denoted as, $W_{p}^{p}(\mathbb{P},\mathbb{Q})$ has a closed expression for univariate distributions,
\[
    W_{p}^{p}(P,Q)=\int_{0}^{1} |F^{-1}(u)-G^{-1}(u)|^{p} \text{d}u
\]
where $F^{-1},G^{-1}$ are the inverse CDF of $P$ and $Q$ respectively. The sliced Wasserstein distance (SW) exploits this closed expression by averaging over the Wasserstein distance between infinitely many random one-dimensional projections of $P$ and $Q$. In particular, for any direction $\theta \in \mathbb{S}^{d-1}$, we define the projection of $x \in \mathbb{R}^{d}$ as $T^{\theta}(x) = \langle x, \theta \rangle$ and denote the projected distribution with $P_{\theta}=T^{\theta}_{\#}P$, where $\#$ is the push-forward operator, defined as $T_{\#}P(A)=P(T^{-1}(A))$ for any Borel set $A \in \mathbb{R}^{d}$. Let us denote $\lambda$ the uniform measure on $\mathbb{S}^{d-1}=\{\theta \in \mathbb{R}^{d}: ||\theta||_{2} = 1\}$, then the $p$ Sliced Wasserstein distance between $P$ and $Q$ is defined as
\begin{equation}
    SW_{p}^{p}(P,Q) =\int_{\mathbb{S}^{d-1}} W_{p}^{p}(P_{\theta},Q_{\theta})\text{d}\lambda(\theta).\label{SlicedWasserstein}
\end{equation}
In practice, the computation of the SW boils down to a Monte Carlo approximation by uniformly sampling projection parameters $\{\theta_{\ell}\}_{\ell=1}^{L}$ on $\mathbb{S}^{d-1}$ and average over the one-dimensional Wasserstein distances obtained. 
The accuracy of this estimator heavily relies on the variance of the projected Wasserstein distance \citep{nietert2022statistical}.

Based on the following result, we derive the adaptive threshold calibration, which is based on the MoM estimated parameters of a Gamma distribution.

 Let $X\sim P$ and $Y\sim Q$ be random vectors in $\mathbb{R}^d$ with means
$\mu_P,\mu_Q\in\mathbb{R}^d$ and covariance matrices
$\Sigma_P,\Sigma_Q\in\mathbb{R}^{d\times d}$ (symmetric p.s.d.).
Let $\theta\sim \mathrm{Unif}(\mathbb{S}^{d-1})$ be independent of $(X,Y)$.
Denote,
\[
S_d(\theta) := W_2^2\!\big(P_\theta,Q_\theta\big).
\]
In the following, $S_d(\theta_\ell)$ is evaluated for i.i.d.\ $\theta_\ell$ and we model
the empirical slice set $\{S_d(\theta_\ell)\}_{\ell=1}^L$ by a Gamma law.
 
We assume the following high-dimensional regime.
 
\begin{enumerate}
\item[(A1)] (Moments) $X$ and $Y$ have finite third moments, i.e.
$\mathbb{E}\|X\|_2^3<\infty$ and $\mathbb{E}\|Y\|_2^3<\infty$.
 
\item[(A2)] (Spherical CLT for projections) As $d\to\infty$, the one-dimensional
projections satisfy a Gaussian approximation in the following sense:
conditionally on $\theta$, the laws of $\langle X,\theta\rangle$ and $\langle Y,\theta\rangle$
are asymptotically close (e.g.\ in Kolmogorov distance) to
$\mathcal{N}(m_P(\theta),v_P(\theta))$ and $\mathcal{N}(m_Q(\theta),v_Q(\theta))$ with
\begin{equation*}
    \begin{aligned}
        &m_P(\theta)=\theta^\top\mu_P,\quad v_P(\theta)=\theta^\top\Sigma_P\theta,\\
&m_Q(\theta)=\theta^\top\mu_Q,\quad v_Q(\theta)=\theta^\top\Sigma_Q\theta,
    \end{aligned}
\end{equation*}

(This holds exactly if $P,Q$ are Gaussian; it also holds under standard
high-dimensional projection CLTs for many non-Gaussian families.)
 
\item[(A3)] (Spectral regularity) There is a constant $C$ independent of $d$ such that
$\|\Sigma_P\|_{\mathrm{op}}\le C$, $\|\Sigma_Q\|_{\mathrm{op}}\le C$ and
\begin{equation*}
    \begin{aligned}
    \frac{1}{d}\mathrm{tr}(\Sigma_P)&\to \tau_P,&
\frac{1}{d}\mathrm{tr}(\Sigma_Q)&\to \tau_Q,\\
\frac{1}{d}\mathrm{tr}(\Sigma_P^2)&\to \kappa_P,&
\frac{1}{d}\mathrm{tr}(\Sigma_Q^2)&\to \kappa_Q, 
\end{aligned}
\end{equation*}

for some finite $\tau_P,\tau_Q,\kappa_P,\kappa_Q>0$.
 
\item[(A4)] (Asymptotic decorrelation) The pair of quadratic forms
$v_P(\theta)=\theta^\top\Sigma_P\theta$ and $v_Q(\theta)=\theta^\top\Sigma_Q\theta$
is asymptotically jointly normal after centering and scaling, with a limiting covariance
that is $O(1)$; and the linear form
$\theta^\top(\mu_P-\mu_Q)$ is asymptotically independent of $(v_P(\theta),v_Q(\theta))$.
(These properties hold, for instance, when $\theta=g/\|g\|$ with $g\sim\mathcal{N}(0,I_d)$
and $\mu_P-\mu_Q$ is orthogonal in the eigenbasis in which $\Sigma_P,\Sigma_Q$ are simultaneously diagonalizable,
or more generally under standard isotropic random direction asymptotics.)
\end{enumerate}
\begin{restatable}{theorem}{firstthm}(Asymptotic law of $S_{d}(\theta)$)\label{thm:genchisq} 
Assume (A1)--(A4). Let $\delta:=\mu_P-\mu_Q$.
Define the random vector
\[
U_d(\theta)
:=
\begin{pmatrix}
u_{1,d}(\theta)\\
u_{2,d}(\theta)
\end{pmatrix}
:=
\begin{pmatrix}
\theta^\top\delta\\[2mm]
\sqrt{\theta^\top\Sigma_P\theta}-\sqrt{\theta^\top\Sigma_Q\theta}
\end{pmatrix},
\]
and let $\sqrt d\,\Delta_d \to m_2,$ where
$\Delta_d
=
\sqrt{\frac{1}{d}\operatorname{tr}(\Sigma_P)}
-
\sqrt{\frac{1}{d}\operatorname{tr}(\Sigma_Q)}$, with $m_2 \in \mathbb{R}$.
Then, under (A2), the population slice statistic satisfies
\begin{equation}\label{eq:W2-gauss-form}
S_d(\theta)
= W_2^2(P_\theta,Q_\theta)
= \big(u_{1,d}(\theta)\big)^2 + \big(u_{2,d}(\theta)\big)^2 + r_d(\theta),
\end{equation}
where $r_d(\theta)\to 0$ in probability as $d\to\infty$ (the error stems only from the
projection-to-Gaussian approximation in (A2)).
Moreover, as $d\to\infty$, there exist centering/scaling constants such that
\[
\begin{pmatrix}
\sqrt{d}\,u_{1,d}(\theta)\\
\sqrt{d}\,u_{2,d}(\theta)
\end{pmatrix}
\ \Rightarrow\
\mathcal{N}\!\left(
\begin{pmatrix}0\\ m_2\end{pmatrix},
\ \Omega
\right),
\]
some $2\times 2$ covariance matrix $\Omega\succeq 0$.
Consequently, $d\,S_d(\theta)$ converges in distribution to a (possibly noncentral)
\emph{generalized chi-square} random variable, i.e.
\[
d\,S_d(\theta)\ \Rightarrow\ Z^\top A Z,
\]
where $Z\sim \mathcal{N}(\mu_Z,\Sigma_Z)$ is Gaussian and $A\succeq 0$ is a fixed matrix.
In particular, the limiting law is supported on $\mathbb{R}_+$.
\end{restatable}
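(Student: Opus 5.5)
The plan has three steps: obtain the decomposition \eqref{eq:W2-gauss-form} from the closed form of $W_2$ between univariate Gaussians, prove joint asymptotic normality of $\sqrt d\,U_d(\theta)$, and finish with the continuous mapping theorem.

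\textbf{Step 1: the decomposition.} For univariate Gaussians,
\[
W_2^2\big(\mathcal N(m_1,v_1),\mathcal N(m_2,v_2)\big)=(m_1-m_2)^2+(\sqrt{v_1}-\sqrt{v_2})^2 .
\]
This follows from the quantile formula, since the Gaussian quantile functions are affine in $\Phi^{-1}(u)$. Conditionally on $\theta$, the Gaussian surrogates from (A2) therefore give exactly $u_{1,d}^2+u_{2,d}^2$. The remainder $r_d(\theta)$ is the difference between $W_2^2(P_\theta,Q_\theta)$ and the $W_2^2$ of the surrogates. To control it I will use the triangle inequality for $W_2$:
\[
|r_d|\le \big(W_2(P_\theta,\mathcal N_P)+W_2(Q_\theta,\mathcal N_Q)\big)\big(2\,W_2(\mathcal N_P,\mathcal N_Q)+W_2(P_\theta,\mathcal N_P)+W_2(Q_\theta,\mathcal N_Q)\big).
\]
Kolmogorov closeness alone does not give $W_2$ closeness. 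Here I will invoke (A1): uniform integrability of the squared projections, together with $\mathbb E\langle X,\theta\rangle^2\le \|\mu_P\|^2+C$, upgrades convergence in distribution to $W_2$ convergence. This step is the main obstacle. If it is too delicate in general, I will interpret (A2) as closeness in $W_2$ directly; the Gaussian case, where $r_d\equiv 0$, is consistent with that reading.

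\textbf{Step 2: joint asymptotic normality.} Write $\theta=g/\|g\|$ with $g\sim\mathcal N(0,I_d)$.
\begin{itemize}
\item First coordinate: $\sqrt d\,\theta^\top\delta=(g^\top\delta)\,\sqrt d/\|g\|$. Since $\|g\|/\sqrt d\to1$ almost surely, this is asymptotically $\mathcal N(0,\|\delta\|^2)$. This requires $\|\delta\|$ to stay $O(1)$, which I will assume implicitly; if $\|\delta\|\to\infty$ the claim is vacuous in scale. The first coordinate has mean $0$ by symmetry of $\theta$.
\item Second coordinate: by (A4), $\sqrt d\,(v_P-\tfrac1d\mathrm{tr}\Sigma_P,\;v_Q-\tfrac1d\mathrm{tr}\Sigma_Q)$ is asymptotically jointly normal with $O(1)$ covariance. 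For Gaussian directions the variance is $2(\kappa-\tau^2)$-type, from $\mathrm{tr}(\Sigma^2)/d$ and (A3).
\item Delta method: apply it to $(a,b)\mapsto\sqrt a-\sqrt b$ at $(\tau_P,\tau_Q)>0$. This gives
\[
\sqrt d\,u_{2,d}=\sqrt d\,\Delta_d+\tfrac{1}{2\sqrt{\tau_P}}\sqrt d\,(v_P-\tfrac1d\mathrm{tr}\Sigma_P)-\tfrac{1}{2\sqrt{\tau_Q}}\sqrt d\,(v_Q-\tfrac1d\mathrm{tr}\Sigma_Q)+o_p(1).
\]
Using $\sqrt d\,\Delta_d\to m_2$, the limit is Gaussian with mean $m_2$.
\item Joint limit: the asymptotic independence in (A4) of the linear form from the quadratic forms yields the joint Gaussian limit, with $\Omega$ block-diagonal in the limit. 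Any mis-centering of the first coordinate is absorbed into the ``centering constants'' allowed by the statement.
\end{itemize}

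\textbf{Step 3: conclusion.} From Step 1, $d\,S_d=\|\sqrt d\,U_d\|^2+d\,r_d$. I then need $d\,r_d\to0$ in probability, which is stronger than $r_d\to0$. I will handle this either by assuming the projection error is $o(1/d)$ (exact in the Gaussian case), or by stating the result for $d\,(u_{1,d}^2+u_{2,d}^2)$ and treating $r_d$ as the approximation error, as the statement itself does.

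The continuous mapping theorem then gives
\[
d\,S_d\Rightarrow Z^\top A Z,\qquad A=I_2,\quad Z\sim\mathcal N\big((0,m_2)^\top,\Omega\big),
\]
which is a generalized noncentral chi-square supported on $\mathbb R_+$. The main obstacles are the $W_2$ (rather than Kolmogorov) control of the remainder at scale $1/d$ in Steps 1 and 3, and the delta method near $\tau_P,\tau_Q>0$, which is guaranteed by (A3).
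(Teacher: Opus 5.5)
Your proposal follows the paper's proof essentially step for step: the Gaussian closed form for $W_2$, the spherical CLT for $\sqrt d\,\theta^\top\delta$, the CLT for the quadratic forms with the delta method for $(a,b)\mapsto\sqrt a-\sqrt b$, joint convergence from (A4), then Slutsky and continuous mapping. Where you deviate, you are more careful than the paper.

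Expanding around $\tfrac1d\operatorname{tr}\Sigma_P$ and $\tfrac1d\operatorname{tr}\Sigma_Q$ and then inserting $\sqrt d\,\Delta_d\to m_2$ is the correct way to obtain the limiting mean. The paper instead expands at $(\tau_P,\tau_Q)$ and writes $m_2=\sqrt d(\sqrt{\tau_P}-\sqrt{\tau_Q})$. That diverges unless $\tau_P=\tau_Q$, and it ignores the $\sqrt d$-scale gap between $\tfrac1d\operatorname{tr}\Sigma$ and its limit.

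More importantly, you are right that $d\,r_d(\theta)\to0$ is the real obstacle. The paper simply asserts it ``by (A2) and boundedness in (A3)'', and this is false in general. Here is a counterexample.
\begin{itemize}
\item Take $X=R_dG$ with $G\sim\mathcal N(0,I_d)$ and $R_d=1+d^{-1/4}W$, where $W$ is bounded, centered, unit-variance and independent of $G$. Take $Y\sim\mathcal N(0,(1+d^{-1/2})I_d)$.
\item Every projection of $X$ has the law of $R_d\,\mathcal N(0,1)$. Hence (A2) holds because $R_d\to1$ uniformly.
\item (A1), (A3) and (A4) hold trivially, with $\Sigma_P=\Sigma_Q=(1+d^{-1/2})I_d$, $\delta=0$ and $u_{1,d}=u_{2,d}=0$.
\item Testing against the $1$-Lipschitz function $|x|$ (for $d$ large, so that $R_d>0$) gives $W_2(P_\theta,Q_\theta)\ge W_1(P_\theta,Q_\theta)\ge\sqrt{2/\pi}\,\bigl(\sqrt{1+d^{-1/2}}-1\bigr)\asymp d^{-1/2}$.
\item Therefore $d\,r_d=d\,S_d$ stays bounded away from zero.
\end{itemize}

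So the extra hypothesis you propose is genuinely necessary; it is not a shortcoming of your argument. Your own triangle-inequality bound gives its sharp form. Step 2 shows $W_2(\mathcal N_P,\mathcal N_Q)=\|U_d(\theta)\|=O_p(d^{-1/2})$. It is therefore enough to assume
\[
W_2(P_\theta,\mathcal N_P)+W_2(Q_\theta,\mathcal N_Q)=o_p(d^{-1/2}),
\]
equivalently a squared-$W_2$ projection error of $o_p(1/d)$. This holds trivially for Gaussian $P$ and $Q$, and the example above violates it.

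State this condition as a strengthened version of (A2), rather than trying to extract $W_2$-closeness from (A1). (A1) is a moment bound at each fixed $d$, so it gives no uniform integrability across $d$. And even full $W_2$ convergence without a rate would only give $r_d\to0$, not $d\,r_d\to0$.
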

The local variance-shift condition ($\sqrt{d}\Delta_d \to m_2$) is necessary because, if the average projected variances differ by a fixed nonzero amount, then $\sqrt d\,u_{2,d}(\theta)$ generally does not have a finite limiting mean and the scaled statistic $dS_d(\theta)$ may diverge. The condition is satisfied under equal average projected variances, in particular for pure mean-shift drifts with $\Sigma_P=\Sigma_Q$ and more generally when the difference in average projected standard deviations is of order $\mathcal{O}(d^{-1/2})$.
In the following, we model random slices $\{S_d(\theta_\ell)\}_{\ell =1}^{L}$ by a Gamma distribution. Theorem~\ref{thm:genchisq} shows the correct limit is generalized chi-square in general. It becomes \emph{asymptotically Gamma} for mean-shift regimes, which is also the regime where random projections are most diagnostic for change points.
 
\begin{corollary}[Mean-shift dominated drift $\Rightarrow$ Gamma limit]\label{cor:gamma}
Assume the conditions of Theorem~\ref{thm:genchisq} and, in addition,
\[
\sqrt{\theta^\top\Sigma_P\theta}-\sqrt{\theta^\top\Sigma_Q\theta}
= o_p(d^{-1/2}) \quad \text{as } d\to\infty,
\]
(i.e.\ the variance term is negligible compared to the mean term; this holds for
mean-shift drifts with nearly unchanged second moments).
Then
\[
d\,S_d(\theta)
= d\,(\theta^\top\delta)^2 + o_p(1)
\ \Rightarrow\ \|\delta\|_2^2\,\chi^2_1.
\]
Equivalently, the limit is:
\[
d\,S_d(\theta)\ \Rightarrow\ \Gamma\!\left(\frac{1}{2},\ \frac{1}{2\|\delta\|_2^2}\right)
.
\]
where $\Gamma(\alpha,\beta)$ denotes a Gamma distributions with shape- and rate parameter $\alpha,\beta$.
\end{corollary}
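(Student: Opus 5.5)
Take the decomposition \eqref{eq:W2-gauss-form} from Theorem~\ref{thm:genchisq} as given, multiply through by $d$, and show that everything except the mean term vanishes. This gives
\[
d\,S_d(\theta)=d\,u_{1,d}(\theta)^2+d\,u_{2,d}(\theta)^2+d\,r_d(\theta).
\]
The extra hypothesis states $u_{2,d}(\theta)=o_p(d^{-1/2})$, so $d\,u_{2,d}(\theta)^2=(\sqrt d\,u_{2,d}(\theta))^2=o_p(1)$ by the continuous mapping theorem. For the remainder, Theorem~\ref{thm:genchisq} only gives $r_d\to0$, which is not enough after multiplying by $d$.

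\textbf{Main obstacle.} One needs $d\,r_d(\theta)=o_p(1)$. My first attempt is to argue that under (A2) the Gaussian approximation holds at the scale relevant for the $\sqrt d$-normalised coordinates. Concretely, $W_2$ between the projected law and its Gaussian surrogate should be $o_p(d^{-1/2})$, using the third-moment bound (A1) together with a Berry--Esseen-type projection CLT. One then bounds $|W_2(P_\theta,Q_\theta)-W_2(\tilde P_\theta,\tilde Q_\theta)|$ by the triangle inequality, where $\tilde P_\theta,\tilde Q_\theta$ are the Gaussian approximations. Squaring, this gives $|r_d|\le 2\sqrt{S_d}\,\epsilon_d+\epsilon_d^2$, with $\epsilon_d=o_p(d^{-1/2})$ and $\sqrt{S_d}=O_p(d^{-1/2})$. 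Hence $d\,r_d=o_p(1)$. In the exactly Gaussian case $r_d\equiv0$. If this rate cannot be justified in general, I would state it as the implicit reading of the "$o_p(1)$" in the corollary. Granting it, Slutsky's lemma gives $d\,S_d(\theta)=d(\theta^\top\delta)^2+o_p(1)$.

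\textbf{Limit of the mean term.} It remains to identify the limit of $d(\theta^\top\delta)^2$. Write $\theta=g/\|g\|_2$ with $g\sim\mathcal N(0,I_d)$. Then $\sqrt d\,\theta^\top\delta=(g^\top\delta)\cdot(\sqrt d/\|g\|_2)$. Here $g^\top\delta\sim\mathcal N(0,\|\delta\|_2^2)$ exactly, and $\|g\|_2/\sqrt d\to1$ almost surely by the law of large numbers. By Slutsky, $\sqrt d\,\theta^\top\delta\Rightarrow\mathcal N(0,\|\delta\|_2^2)$, consistent with the first coordinate of the limit in Theorem~\ref{thm:genchisq}. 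Squaring via continuous mapping yields $d(\theta^\top\delta)^2\Rightarrow\|\delta\|_2^2\chi^2_1$. Here $\|\delta\|_2$ is treated as fixed in $d$, or converging to a limit, in which case that limit is used.

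\textbf{Gamma form.} Finally, $\chi^2_1=\Gamma(1/2,1/2)$ in shape--rate form, and scaling a Gamma variable by $c>0$ multiplies its scale by $c$, i.e.\ divides its rate by $c$. With $c=\|\delta\|_2^2$ this gives $\Gamma\!\left(\tfrac12,\tfrac{1}{2\|\delta\|_2^2}\right)$. This matches the corollary's rate parameter $\beta=1/(2\|\delta\|_2^2)$ and completes the argument.
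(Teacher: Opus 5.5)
Your route is the paper's: insert the decomposition of Theorem~\ref{thm:genchisq}, discard $u_{2,d}$ using the extra hypothesis, show $\sqrt d\,\theta^\top\delta\Rightarrow\mathcal N(0,\|\delta\|_2^2)$, square, and convert $\chi^2_1=\Gamma(1/2,1/2)$ by dividing the rate by $\|\delta\|_2^2$. Your Gamma bookkeeping is correct. Your $\theta=g/\|g\|_2$ argument is cleaner than the paper's appeal to a spherical CLT ``covered by (A4)'', because it is exact and does not use (A4).

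The gap is the one you flagged, $d\,r_d(\theta)=o_p(1)$, and your Berry--Esseen attempt does not close it. (A2) carries no rate, and (A1) imposes no independence across coordinates. Even with independent coordinates, the classical bound $\sum_i|\theta_i|^3\,\mathbb E|X_i|^3$ is of order $d^{-1/2}$ for a typical direction. That is exactly the critical scale, not $o(d^{-1/2})$. Separately, your estimate via $\sqrt{S_d}=O_p(d^{-1/2})$ presupposes what you are proving; use $W_2(P_\theta,Q_\theta)\le\sqrt{u_{1,d}^2+u_{2,d}^2}+\epsilon_d$ instead. The paper's proof has the same hole. Writing $S_d(\theta)=(\theta^\top\delta)^2+o_p(d^{-1})$ silently uses the claim in Step~4 of the proof of Theorem~\ref{thm:genchisq} that $d\,r_d\to0$ ``by (A2) and boundedness in (A3)'', and that claim is never justified.

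In fact no argument from (A1)--(A4) can supply it. Let $X=sG$ with $G\sim\mathcal N(0,I_d)$, $s^2=1+d^{-1/4}\xi$ for a Rademacher sign $\xi$ independent of $G$, and let $Y\sim\mathcal N(-\delta,I_d)$.
\begin{itemize}
\item (A1) holds. $\Sigma_P=\Sigma_Q=I_d$ gives (A3), (A4) (trivially, since $v_P=v_Q\equiv1$), $\Delta_d=0$ and $u_{2,d}\equiv0$.
\item For every $\theta$, the slice $P_\theta$ is the law of $sZ$, with $Z\sim\mathcal N(0,1)$ independent of $s$. Its Kolmogorov distance to $\mathcal N(0,1)$ is $O(d^{-1/2})$, so (A2) holds.
\item $W_2^2$ splits into the squared mean difference plus $W_2^2$ of the centred laws, so $S_d(\theta)=(\theta^\top\delta)^2+W_2^2(\mathrm{Law}(sZ),\mathcal N(0,1))$.
\item The kurtosis correction makes the two quantile functions differ by $\tfrac18\mathrm{Var}(s^2)(x^3-3x)$ at $x=\Phi^{-1}(u)$ to leading order, with $\Phi$ the standard normal CDF.
\item Hence $d\,r_d(\theta)\to 3/32$ and $d\,S_d(\theta)\Rightarrow\|\delta\|_2^2\chi^2_1+3/32$, which is not the claimed Gamma law.
\end{itemize}

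So your fallback is not a harmless reading of the ``$o_p(1)$''; it is a hypothesis the statement genuinely needs. The most natural form is a quantitative (A2), for example $W_2(P_\theta,\mathcal N(m_P,v_P))+W_2(Q_\theta,\mathcal N(m_Q,v_Q))=o_p(d^{-1/2})$. Under it, your triangle-inequality estimate completes the proof. Alternatively, for an exact translation $Y\overset{d}{=}X-\delta$ one has $S_d(\theta)=(\theta^\top\delta)^2$ identically, so the corollary holds there with no Gaussian approximation at all.
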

 
\begin{proof}
Under the stated condition, $u_{2,d}(\theta)=o_p(d^{-1/2})$, hence
$S_d(\theta) = (\theta^\top\delta)^2 + o_p(d^{-1})$ by \eqref{eq:W2-gauss-form}.
By the spherical CLT (Step 2 in the proof of Theorem~\ref{thm:genchisq}),
$\sqrt{d}\,\theta^\top\delta \Rightarrow \mathcal{N}(0,\|\delta\|_2^2)$.
Squaring yields $d(\theta^\top\delta)^2 \Rightarrow \|\delta\|_2^2\chi^2_1$.
The Gamma statement follows from the identity $\chi^2_1\sim\Gamma(1/2,1/2)$.
\end{proof}
Even when the full limit is a generalized chi-square (Theorem~\ref{thm:genchisq}),
the slice statistic is nonnegative and often well-approximated by a Gamma
distribution in practice. This is precisely the modeling assumption used by SWCPD:
given i.i.d.\ slice samples $\{S_d(\theta_\ell)\}_{\ell=1}^L$, fit a Gamma distribution
by matching the empirical mean and variance using the Method of Moments (MoM),
\begin{equation}\label{Mom_Gamma_estimates}
    \widehat{\alpha}=\frac{\overline{S}^2}{\mathrm{Var}(S)},\qquad
\widehat{\beta}=\frac{\overline{S}}{\mathrm{Var}(S)},
\end{equation}

\section{Proposed Detection Method}\label{Sec:SWCPD}
In the following, we describe an adaptive online change point detection method (SWCPD) that monitors the cumulative Sliced Wasserstein distances against a dynamic quantile-based threshold. Algorithmically, SWCPD follows a CUSUM-style monitoring principle applied to SW-induced slice statistics. The novelty of the method lies not in the use of cumulative monitoring itself, but in the Gamma-motivated adaptive calibration of the threshold grounded by the random slices of the SW distance, and the accompanying contrastive explanation mechanism.

At each time step, we fit a Gamma distribution to the collection of projected Wasserstein distances $\{S_d(\theta_\ell)\}_{\ell}^{L}$ and compute an adaptive control limit $\kappa_t(1-q)$, defined as the $(1-q)$-quantile of the fitted Gamma distribution, where $q$ controls the nominal upper-tail probability under the fitted model.

(1) \textsc{Update cumulative sum:} 
We compute the expected value of the test statistic as follows
\[
   C_{t} = C_{t-1} + \frac{\widehat{\alpha}}{\widehat{\beta}}.\]
(2) \textsc{Propagate MoM estimates:}
 In a sliding window, there are dependencies between successive data windows. We smooth past MoM estimates using a moving average over the most recent $m=\min\{K_{max},t\}$ steps with 
\[\mathbb{E}[\hat{\alpha}_{t+1}|C_{t}] =\frac{1}{m}\sum_{i=t-m}^{t}\hat{\alpha}_{i} \quad
\mathbb{E}[\hat{\beta}_{t+1}|C_{t}] = \frac{1}{m}\sum_{i=t-m}^{t}\hat{\beta}_{i}.\]
(3) \textsc{Bound cumulative sum:}
We use the smoothed MoM estimates to bound the next step in the cumulative sum via the quantile of the derived Gamma distribution:
\begin{align*}
    \mathbb{E}[C_{t+1}|C_{t}] = C_{t} + \mathbb{E}\left[\frac{\hat{\alpha}_{t+1}}{\hat{\beta}_{t+1}}\big\vert C_{t}\right] \leq C_{t} + \kappa_{t+1}(1-q)
\end{align*}
where $\kappa_{t+1}(1-q)$ denotes the $(1-q)$-quantile of $\Gamma(\hat{\alpha}_{t+1},\hat{\beta}_{t+1})$.

(4) \textsc{Validate deviations:}
After observing a new sample, we update $C_{t+1}$, and compare it against the upper bound. If it exceeds the bound, a change point is detected. The MoM estimates are then updated using the new data. 
Under stable dynamics and a well-calibrated fitted model, the next-step Sliced Wasserstein statistic exceeds $\kappa_{t+1}(1-q)$ with probability approximately $q$.
This yields an adaptive quantile thresholding mechanism. 

The resulting threshold should be interpreted as an adaptive calibration rule rather than a finite-sample hypothesis test with guaranteed type-I error control. In particular, the theoretical result concerns population slice statistics in a high-dimensional asymptotic regime, whereas the online detector operates with empirical windows, finitely many projections, and overlapping observations. These factors can affect calibration in finite samples.
For a detailed overview, we outline the proposed detection procedure in \Cref{algo:detector}. 
At each time step $t$, we partition the sliding window into two non-overlapping consecutive subsets (or sub-windows). Specifically, for window length $w$ at $t$, we set $X_{ref}=\{x_t,\dots,x_{t+\lfloor w/2\rfloor -1}\}$ and $X_{cur}=\{x_{t+\lfloor w/2\rfloor},\dots,x_{t+w}\}$. Both windows induce empirical distributions $\widehat{P},\widehat{Q}$ respectively. In practice, the computation of the SW distance boils down to a Monte Carlo approximation by uniformly sampling projection parameters $\{\theta_{l}\}_{l=1}^{L}$ on $\mathbb{S}^{d-1}$ and averaging over the one-dimensional Wasserstein distances $L^{-1}\sum_{l=1}^{L}W_{2}^{2}(\widehat{P}_{\theta},\widehat{Q}_{\theta})$. As noted, we write $S_{d}(\theta)=W^{2}_{2}(\widehat{P}_{\theta},\widehat{Q}_{\theta})$ and collect the one-dimensional slices (line 8) $S_{t}=\{S_d(\theta_{l})\}_{l=1}^{L}$, we show that under given assumptions $S_{t}$ has a Gamma limit. This means that $\text{SWD}_{t}=\mathbb{E}[S_t]=\alpha_t/\beta_t$ (line 9), thus the MoM estimator (line 22) reads $\hat{\alpha}_{t}=\mathbb{E}[S_t]^{2}/\text{Var}(S_t)$ and $\hat{\beta}_{t}=\mathbb{E}[S_t]/\text{Var}(S_t)$.

\subsection{Interpretability}\label{Section:Explainability}
 We interpret $S_{d}(\theta_\ell)$ as the loss associated with projection direction $\theta_{\ell}$, where the loss quantifies the Wasserstein distance between the corresponding one-dimensional projections. This establishes a direct link between projection directions and distributional discrepancy. We use this link to derive a feature-importance score by averaging the absolute projection parameters corresponding to the slices above the $q$-quantile of $S_{L}=\{S_{d}(\theta_\ell)\}_{\ell=1}^{L}$. The procedure is illustrated in \Cref{algo1}.
We then use a hierarchical approach to obtain contrastive explanations for detected change points. First, we identify the feature dimension with the highest feature contribution according to \cref{algo1}. We then eliminate the dissimilarity associated with this feature by replacing its values in the current sample with the empirical mean of the same feature in the reference sample. The feature-removal step is validated by recomputing the random projections $S_{L}$ between the updated sample sets. This validation step indicates whether the reduced samples still contain drifted feature dimensions: under $H_{0}$, both samples arise from the same underlying process, and the sliced Wasserstein discrepancy between their empirical distributions should approach zero. We propose a stopping criterion based on the norm of the mean difference, which is upper bounded by a constant depending on $d$, $N$, and the covariance matrix. The stopping criterion is derived in \Cref{stopping-criteria}. Our proposed model-specific explanation procedure is illustrated in \Cref{algo2}.

The key intuition is that large projected Wasserstein distances identify directions along which the reference and current samples differ most strongly. Since each projection direction is a weighted combination of the original feature dimensions, the weights of the most discrepant projections provide information about which features contribute most to the observed distributional discrepancy. Aggregating these weights over the most informative slices yields a feature-level attribution score. Thus, the method complements change-point detection with an interpretable summary of the feature dimensions most strongly associated with the detected drift. This attribution should be understood as a contrastive description of the observed distributional change, rather than as a causal explanation. This is particularly useful in high-dimensional settings, where directly inspecting the full multivariate shift is difficult.

We conduct a sensitivity analysis of \Cref{algo1} and \Cref{algo2} with respect to changes in the dimension and number of samples of the underlying distribution, as well as varying hyperparameters $L$ and $q$. The results are summarized in \Cref{app:Sensitvity_Algo2/1} and support the robustness and adaptivity of the proposed stopping criterion.
\begin{figure}[h]
\centering
\begin{minipage}[t]{0.5\textwidth}
\vspace{0pt}
\begin{algorithm}[H]
\caption{SWCPD\\
\textbf{Input:} Time series $\boldsymbol{D}$, Window length $\boldsymbol{w}$, Number of projections $\boldsymbol{L}$, Wasserstein order $\boldsymbol{p}$, max AR-lag $\mathbf{K_{\max}}$, quantile threshold $\boldsymbol{q}$\label{algo:detector}}
\begin{algorithmic}[1]
\State $\mathcal{D} \gets \textsc{TimeseriesDataset}(D,w)$
\State $C_{0} \gets 0$
\State $\texttt{detect} \gets \texttt{True}$
\State $\mathcal{A}, \mathcal{B}, \mathcal{CP}_{loc} \gets [\,]$
\For{$t = 0,1,\dots, |\mathcal{D}|-1$}
    \State $\Theta \gets \textsc{SampleTheta}(d,L)$ 
    \State $(X_{\mathrm{ref}},X_{\mathrm{cur}}) \gets \mathcal{D}[t]$
    \State $S_t \gets \textsc{Project}({\mathbb{\hat{P}}}_{X_{\mathrm{ref}}},\mathbb{\hat{P}}_{X_{\mathrm{cur}}},\Theta,p)$
    \State $\ell_t \gets \mathrm{mean}(S_t)$
    \State $C_{t+1} \gets C_{t} + \ell_t$
    \If{$t > 0$}
        \If{$C_{t+1} \geq U_{t-1}$}
            \If{$\texttt{detect} = \texttt{True}$}
                \State $\hat{\tau} \gets t + \lfloor w/2 \rfloor$
                \State Append $\hat{\tau}$ to $\mathcal{CP}_{loc}$
                \State $\texttt{detect} \gets \texttt{False}$
            \EndIf
        \Else
            \State $\texttt{detect} \gets \texttt{True}$
        \EndIf
    \EndIf
    \State $(\hat{a}_t,\hat{b}_t) \gets \textsc{MoMEstimates}(S_t)$
    \State Append $\hat{a}_t,\hat{b}_{t}$ to $\mathcal{A},\mathcal{B}$
    \State $h \gets \min(K_{\max}, t+1)$
    \State $\hat{a}_{t+1} \gets \frac{1}{h}\sum_{j=t-h+1}^{t} \mathcal{A}[j]$
    \State $\hat{b}_{t+1} \gets \frac{1}{h}\sum_{j=t-h+1}^{t} \mathcal{B}[j]$
    \State $U_t \gets C_{t} + \kappa_{t+1}(1-q)$
\EndFor
\State \textbf{Return} $\mathcal{CP}_{loc}$
\end{algorithmic}
\end{algorithm}
\end{minipage}
\hfill
\begin{minipage}[t]{0.45\textwidth}
\vspace{20pt}
\begin{algorithm}[H]
\caption{Calculate Feature Contribution\\
\textbf{Input:} Slices $\mathbf{S_{L}}$, Projection parameters \boldmath$\theta$, Wasserstein order: $\mathbf{p}$, upper-tail probability: $\mathbf{z}$\label{algo1}}
\begin{algorithmic}[1]
\State $S_{L}^{\to}=[S_{d}(\theta_{\pi(1)}),\dots,S_{d}(\theta_{\pi(L)})]$ \Comment{Sort}
\State $\theta_{1:L}^{\to} = [\theta_{\pi(1)},\dots,\theta_{\pi(L)}]$
\State $i_q \gets \lceil zL \rceil$
\State $I_s= \frac{1}{L-i_q}\sum_{i=i_q}^{L}|\theta_{\pi(i)}|$
\State \textbf{Return} $I_s$
\end{algorithmic}
\end{algorithm}
\vspace{0.5em}
\begin{algorithm}[H]
\caption{Hierarchical validated explanations\\
\textbf{Input:} Data: \textbf{X},\textbf{Y}, Wasserstein order: $\textbf{p}$, upper-tail probability: \textbf{z}, Number of projections: \textbf{L}\label{algo2}}
\begin{algorithmic}[1]
\State $\text{cl} \leftarrow [1,\dots,d]$
\State $\text{cr} \leftarrow \emptyset$
\State $C \leftarrow \sqrt{\frac{2}{N}\operatorname{tr}(\Sigma_X)}$
\While{$\|D\| \geq C$ \textbf{and} $|\text{cl}| > 0$}
    \State Calculate random projections $\mathbf{S}_{L}$
    \State Calculate $I_s$ (\Cref{algo1})
    \State $i_* \leftarrow \arg\max I_s$
    \State $\text{cr} \leftarrow \operatorname{add}(i_*,\text{cr})$
    \State $\mathbf{Y}[:,i_*] \leftarrow \mathbb{E}[\mathbf{X}[:,i_*]]$
    \State $D \leftarrow \frac{1}{N}\sum_{i=1}^{N}X_i-\frac{1}{N}\sum_{i=1}^{N}Y_i$
\EndWhile
\State \textbf{Return} $\text{cr}$
\end{algorithmic}
\end{algorithm}
\end{minipage}
\end{figure}
\section{Experiments} \label{ES}
We first evaluate the alignment of feature explanations obtained with \Cref{algo2} to popular feature explanation methods. We demonstrate that \Cref{algo2} leads to informative insights that enable contrastive explanations for change detection. In the second part of this section, we demonstrate the feasibility of our method against several popular offline and online change-point detection methods, achieving results that are comparable or better in terms of predictive performance and reliability.
\subsection{Interpretability}
In our study, we use Integrated Gradients (IG) \citep{IG}, Gradient Shap (GS) \citep{GradSHAP}, and DeepLIFT (DL) \citep{DeepLift} to obtain baseline feature importance for synthetic data and real-world data. 

\textbf{Synthetic Data.}
We generate data $X_{1:N} \!\sim \! \mathcal{N}(\mu_{d},\Sigma_{d})$ for $N=5000$ and $d=10,20$, with mean $\mu_{d}$ and covariance $\Sigma_{d}$. Each component of $\mu_{d}^{i} $ follows a normal distribution and is sampled independently. 
We randomly select $k\leq d$ indices in $\mu_{d}$ and sample an individual severity $\epsilon_{i} \sim \mathcal{N}(2,1)$ for each selected index, which is added to the mean prior to the drift $\Tilde{\mu}=\mu + \epsilon$.
This ensures that some feature dimensions are more important for the total drift and should show a higher contribution to the explanation scores. We generate data after the drift $\tilde{X}_{1:N}\sim \mathcal{N}(\tilde{\mu}_{d}, \Sigma_{d})$, throughout the experiments, we vary the number of drifted components $k=1,3,7,9$ and set $\Sigma_{d}=\mathbb{I}_{d}$. For a binary classification of samples before and after the drift, we train a simple fully connected neural network with three hidden layers with $128, 64,$ and $32$ units, respectively. We use IG, GS, and DL to calculate feature attributions $\phi(X),\phi(\tilde{X})$ for data before and after the drift occurred. For SWD, we follow \Cref{algo2} to assign explanation vector $e_{\text{\tiny{SWD}}}$. To quantify how severe the differences in the attribution scores for IG, GS, and DL are, we assign some explanation scores by calculating the absolute differences between both attributions $e:=|\phi(X)-\phi(\tilde{X})|$.
We use the cosine similarity to quantify the alignment between IG, GS, DL, and SWD explanations and the reference ground truth explanation vectors,
\begin{equation}\label{eq:alignment}
    s(e,e_{\scalebox{.8}{$\text{gt}$}})=\frac{\langle e,e_{\scalebox{.8}{$ \text{gt}$}}\rangle}{||e||_{2}||e_{\scalebox{.8}{$\text{gt}$}}||_{2}}.
 \end{equation}
We investigate the alignment for different scenarios by varying $d=10,20$ and $k=1,3,7,9$. For each parameter pair, we simulate data and calculate alignment between ground truth explanations and SWD explanation, IG, GS, and DL for five different runs. In \Cref{table:alignment}, we report the average alignment between ground truth explanations and reference explanations obtained by IG, GS, DL, and SWD after the first iteration of \Cref{algo2}.   
\begin{table}[t]
\caption{Mean alignment (\cref{eq:alignment}) of ground truth features with SWD, IG, GS, and DL explanations for dimensions $d=10,20$ and various number of drifted components $k=1,3,7,9$ over 5 different runs.}
\label{table:alignment}
    \begin{center}
\adjustbox{max width = 0.85\textwidth}{
\begin{tabular}{lcccccccc}
\toprule
 & \multicolumn{4}{c}{$d = 10$} & \multicolumn{4}{c}{$d = 20$} \\
 \cmidrule(lr){2-5}\cmidrule(lr){6-9}
 & IG & GS & DL & SWD & IG & GS & DL & SWD \\
\midrule
$k = 1$ & \textbf{0.991} $\pm$ \textbf{0.010} & $0.987 \pm 0.015$ & $0.982 \pm 0.021$ & $0.961 \pm 0.057$ & \textbf{0.999} $\pm$ \textbf{0.001} & $0.999 \pm 0.000$ & $0.998 \pm 0.002$ & $0.991 \pm 0.003$ \\
$k = 3$ & $0.928 \pm 0.052$ & $0.929 \pm 0.051$ & $0.930 \pm 0.052$ & \textbf{0.997} $\pm$ \textbf{0.001} & $0.958 \pm 0.020$ & $0.957 \pm 0.021$ & $0.959 \pm 0.016$ & \textbf{0.991} $\pm$ \textbf{0.003} \\
$k = 7$ & $0.915 \pm 0.041$ & $0.916 \pm 0.040$ & $0.911 \pm 0.040$ & \textbf{0.997} $\pm$ \textbf{0.001} & $0.933 \pm 0.007$ & $0.933 \pm 0.007$ & $0.934 \pm 0.016$ & \textbf{0.994} $\pm$ \textbf{0.003} \\
$k = 9$ & $0.893 \pm 0.040$ & $0.894 \pm 0.040$ & $0.898 \pm 0.034$ & \textbf{0.998} $\pm$ \textbf{0.001} & $0.913 \pm 0.020$ & $0.913 \pm 0.020$ & $0.908 \pm 0.031$ & \textbf{0.994} $\pm$ \textbf{0.001}\\
\bottomrule
\end{tabular}
}
\end{center}
\end{table}
\noindent\textbf{Real World Data.}
We employ a Vision Transformer (ViT) model \citep{ViT} for image classification on the MNIST \citep{lecun2010mnist} dataset. Details on the model architecture can be found in \cref{VIT_details}. We simulate a streaming behavior of samples from a particular class, which then abruptly changes to another class. 
The average feature attribution per class shows the most important features for a given concept, e.g., number 7 has distinct characteristics (edges, curvature) to number 0. However, the general representation of number 1 should be similar to 7 on a feature level, such that the classification model indicates a substantial overlap in the feature attributions. We calculate the absolute differences of the average feature attributions for two classes using IG, GS, and DL, which we use as a qualitative measure to explain the drift. We modify the projection procedure in \Cref{algo2} by using the unit vectors to obtain a pixelwise importance and terminate after $250$ iterations. We found that for two distinct digits, there are $245.08$ pixels on average, which show an absolute deviation above $0.1$. Changes below this are generally indistinguishable, such that this reduced set captures the most important pixels which are a valid representation of the original class, therefore $250$ is a conservative qualitative stopping criterion. \Cref{fig:MNIST_Example1} shows the results for three challenging drifts. IG and GS show similar results, which is plausible since GS computes expected gradients and can be seen as an extension of IG.
\begin{figure}
    \centering
    \includegraphics[width=0.85\linewidth]{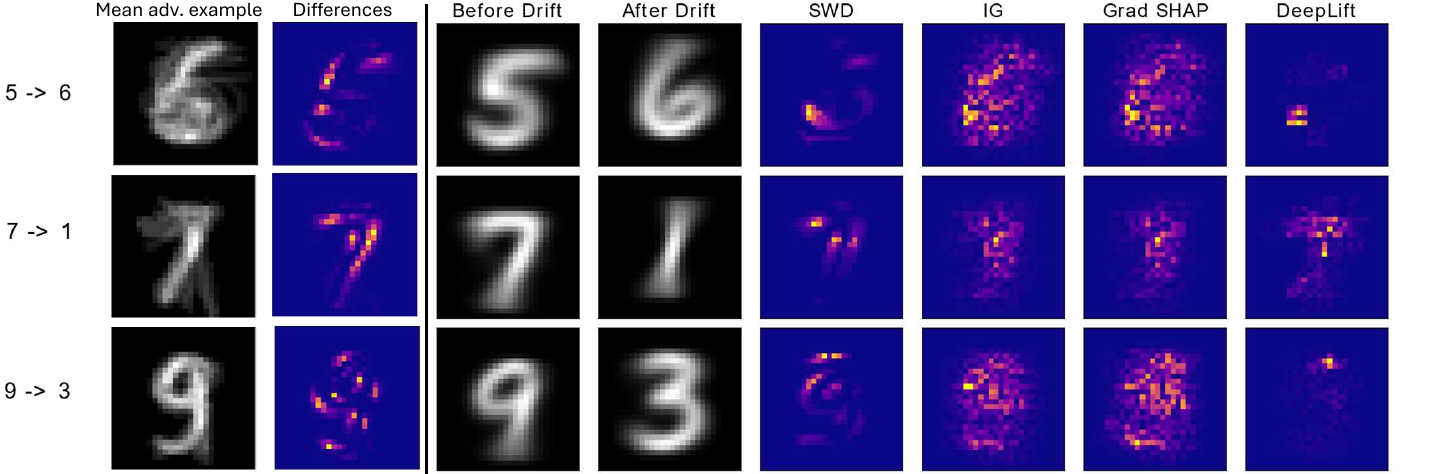}
    \caption{Shows the average adv. example and its corresponding differences for three different drifts (left). On the right-hand side, we see the average example of each class before and after the drift alongside the highlighted feature attributions with SWD, IG, GS, and DL.}
    \label{fig:MNIST_Example1}
\end{figure}
We simulated adversarial attacks on the ViT model using FGSM \citep{goodfellow2014explaining} with $\epsilon=5\times10^{-4}$ and compared the average adversarial example to the average non-adversarial example, \Cref{fig:MNIST_Example1}. 

\subsection{Change point detection}\label{CPD_Evaluation}
In this section, we evaluate our proposed method on one synthetic dataset and four real-world datasets: MNIST, Human Activity Recognition (HAR) \citep{anguita2013public}, Human Activity Segmentation Challenge (HASC) \citep{segmentation_challenge}, and Occupancy \citep{Occupancy}. MNIST is primarily challenging due to its high dimensionality, whereas the sensor datasets HAR and HASC exhibit changes in both mean and variance.
We report Area Under the Curve (AUC), segmentation covering scores (COV), average detection delay (DD), and the average number of false positives (FP). These metrics capture complementary aspects of change point detection performance.

Following the evaluation protocol of \citet{Benchmark_1} and \citet{clasp2023}, AUC and covering are used as comparative benchmark metrics over the full sequence and are therefore mainly informative from an offline evaluation perspective. In contrast, average detection delay directly reflects the sequential alarm perspective and is the primary metric for assessing online detection performance. The number of false positives is relevant in both settings, as it measures the reliability of the detector and its tendency to raise spurious alarms. However, there is a trade-off between FP and AUC, COV. As low FP indicate reduced spurious detection, it can miss true changes and merge ground-truth segments, lowering AUC and COV.
For a detailed description and motivation of the evaluation metrics, we refer the reader to \citet{Benchmark_1} and \citet{clasp2023}. 
In our experimental setup, the reported standard deviations are computed across sequences within each benchmark dataset.  Occupancy consists of a single sequence in our evaluation, therefore, we report mean and standard deviation across ten different random seeds. Zero standard deviation are due to deterministic behavior.
 
 We compare our method against six popular change point detection methods: BOCPD \citep{Adams.19.10.2007}, e-divisive \citep{matteson2014nonparametric}, KCP \citep{KCP}, OT-CPD \citep{OT-CPD}, RuLSIF \citep{RuLSIF}, and RIO-LE, Rio-LC which are two instances of RIO-CPD \cite{deng2025correlation}; one time-series segmentation method, ClaSP \cite{clasp2023}; and two deep-learning-based methods, ONNR and ONNC from \citet{hushchyn2020online,hushchyn2021generalization}, which we refer to as DeepRuLSIF and DeepCLF, respectively. In general, an appropriate hyperparameter choice consists of a window length $w$ smaller than the average segment length, $K_{\max}$ chosen equal to $w$ or as a smaller fraction of $w$ to obtain a more adaptive threshold with a shorter autoregressive lag, Wasserstein order $p \in \{2,4\}$, a sufficiently large number of projections $L>500$, and a quantile level $q<0.15$ for a robust detection threshold.

 \begin{table*}[!h]
	\caption{Shows average AUC scores with standard deviation, and average number of false positives and detection delay with min-max values for synthetic data}
	\begin{center}
	\adjustbox{max width = 0.95\textwidth}{
		\begin{tabular}{cccccccccccccc}
			\toprule
			\multicolumn{6}{c}{Exponential} & \multicolumn{6}{c}{Mixture} \\
			\cmidrule(l){1-6}  \cmidrule(l){7-12}
			& \multicolumn{2}{c}{AUC $(\uparrow)$} & \multicolumn{2}{c}{FP $(\downarrow)$} & \multirow{2}{*}{DD ($\downarrow$)}& & \multicolumn{2}{c}{AUC $(\uparrow)$} & \multicolumn{2}{c}{FP $(\downarrow)$}& \multirow{2}{*}{DD ($\downarrow$)} \\
			\cmidrule(l){2-3} \cmidrule(l){4-5}  \cmidrule(l){8-9} \cmidrule(l){10-11}
			 $\lambda$ &  $\tau=10$ & $\tau=20$ &  $\tau=10$ & $\tau=20$ & & $\sigma$ /$\lambda$&$\tau=10$ & $\tau=20$ &   $\tau=10$ & $\tau=20$ &\\
			\midrule
			 $0.5$ & $0.6 \pm 0.13$ & $0.93 \pm 0.13$ &  $1.2 \hspace{0.1cm} (1;2)$&$0.2 \hspace{0.1cm} (0;1)$ &$14.8 \hspace{0.1cm} (11;18.5)$& $0.25$ &$1.0 \pm 0.0$& $1.0 \pm 0.0$&$0 \hspace{0.1cm} (0;0)$&$0.0 \hspace{0.1cm} (0;0)$&$5.6 \hspace{0.1cm} (3.5;7.5)$\\
			 $0.1$ & $0.47 \pm 0.1$ & $0.55 \pm 0.17$ & $0.8 \hspace{0.1cm} (0;1)$ &$0.6 \hspace{0.1cm} (0;1)$&$16.6 \hspace{0.1cm} (0;22)$&$0.5$ &$0.53 \pm 0.16$&$0.87 \pm 0.16$&$1.4 \hspace{0.1cm} (1;2)$&$0.4 \hspace{0.1cm} (0;1)$ &$14.9 \hspace{0.1cm} (10.5;20.5)$\\
			\bottomrule
		\end{tabular}
	}
	\end{center}
	\label{tab:AUC_SYNCPD2}
\end{table*}

 \begin{figure*}[!h]
    \centering
    \includegraphics[width=0.85 \linewidth]{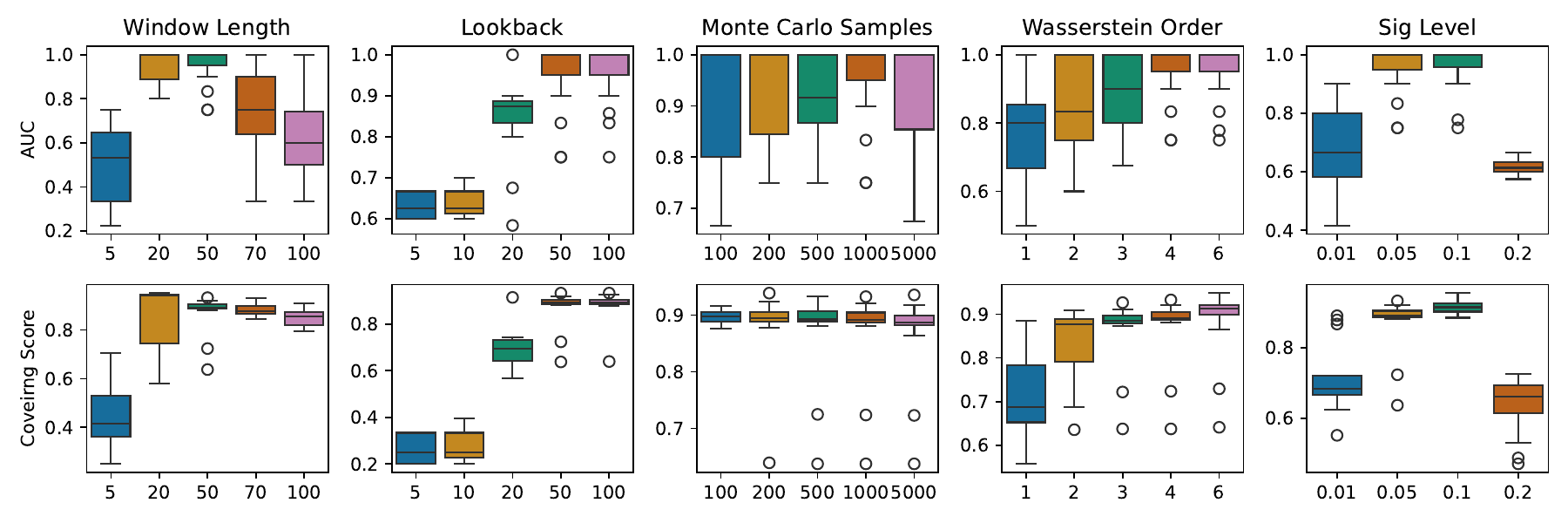}
    \caption{Boxplots of AUC and Covering scores for each parameter variation while keeping the other parameters fixed.}
    \label{Ablation_Main}
\end{figure*}

\textbf{Synthetic Data:}
We construct a data stream of $d=50$ exponential distributions $x_{i}\sim \text{Exp}(\lambda)+ c_{i}$, where $c_{i}$ is randomly sampled within $(-3,3)$ for $i = 1, \dots, d$. We simulate 3 segments, where each segment consists of 500 samples. We randomly select a total of 3 features for which we inject a drift by offsetting the mean $c_{i}$ randomly sampled within $(-3,3)$ for each drifted feature. Additionally, we generated a mixture distribution consisting of 20 Exponential distributions and 30 Gaussian distributions.
In \Cref{App:SynDataCPD}, we provide a detailed description of the sampling procedure.
For all experiments on synthetic data, we set the window length $w=50$, the lookback window for the estimation of shape- and rate parameters $K_{\max}=50$, $p=2$, and $L=5000$. 
\Cref{tab:AUC_SYNCPD2} shows the average AUC scores, number of false positives, and detection delay for Exponential- and mixture distributions for different distributional parameters $\lambda, \sigma$, and different margin of errors $\tau$ in the calculation of AUC scores, false positives, see \cite{Benchmark_1}.

\noindent\textbf{Faithfulness:}
Additionally, we investigate the faithfulness of \textit{discriminative features} derived using \Cref{algo2}. For this matter, we simulate a $50/50$ mixture distribution of Gaussian and Exponential random variables with $d=50$ with $500$ observations. We randomly select $10$ features for which we inject a mean shift at $t=250$ with a magnitude uniformly sampled in $[-\delta,\delta]$. We let our method identify the $10$ most discriminative features and mask the time series by removing the identified features. We use an independent oracle (KCP) with an AUC and covering score of $1.0$ on the original data, and evaluate it on the masked data. We report the True Positive change $\Delta_{\text{TP}}=\text{TP}_{\text{clean}}-\text{TP}_{\text{masked}}$ and covering change $\Delta_{\text{Cov}}=\text{Cov}_{\text{clean}}-\text{Cov}_{\text{masked}}$. Since, $\text{TP}_{\text{clean}}=1.0$, the desired $\Delta_{\text{TP}}=1.0$ which indicates that without the discriminative features, the oracle no longer detects any change point. Thus, the desired covering change is $\Delta_{\text{Cov}}=0.5$ as no segmentation leads to Cov$=0.5$. Additionally, we calculate the discriminative accuracy as the fraction of identified discriminative features relative to the ground-truth discriminative features, results are summarized in \Cref{tab:faithfull_table}. 

\begin{table}[!t]
    \centering
    \caption{Shows the average discriminative accuracy of \Cref{algo2} and the influence on the detection ability measured by the change of true positives and covering score.}
    \label{tab:faithfull_table}
    \adjustbox{max width = 0.65\columnwidth}{
    \begin{tabular}{lllllll}
    \toprule
        $\delta$ & $0.2$ & $0.3$ & $0.5$ & $0.7$ & $1.0$ & $2.0$  \\
    \midrule
    Acc &$0.68 \pm 0.11$ & $0.77 \pm 0.08$ & $0.87\pm 0.08$ & $0.90 \pm 0.08$ & $0.90 \pm 0.08$ & $0.95 \pm 0.08$ \\
    $\Delta_{\text{TP}}$ & $1.0 \pm 0.0$ &$1.0 \pm 0.0$&$1.0 \pm 0.0$&$1.0 \pm 0.0$&$1.0 \pm 0.0$&$1.0 \pm 0.0$\\
    $\Delta_{\text{Cov}}$ & $0.49 \pm 0.01$ & $0.49 \pm 0.01$ & $0.50 \pm 0.0$ & $0.50 \pm 0.0$ & $0.50 \pm 0.0$ & $0.50 \pm 0.0$ \\
    \bottomrule
    \end{tabular}
    }
\end{table}

\noindent \textbf{False Positive Control:}
We evaluate the false-alarm behavior of SWCPD in a controlled no-change regime using MNIST sequences containing samples from a single digit class only. Since the class label remains fixed throughout each sequence, no semantic change point is present, and all detected change points are counted as false positives.
For each digit, we repeat the detection procedure for different adaptive threshold levels \(q \in \{0.01, 0.05, 0.10, 0.15, 0.20\}\) and report the mean number of false alarms together with the standard deviation across runs. 
\Cref{fig:null_calibration} shows that the false positives decreases as the threshold becomes more conservative, i.e., as \(1-q\) increases. \begin{wrapfigure}{r}{0.48\textwidth}
  \begin{center}
    \includegraphics[width=0.425\textwidth]{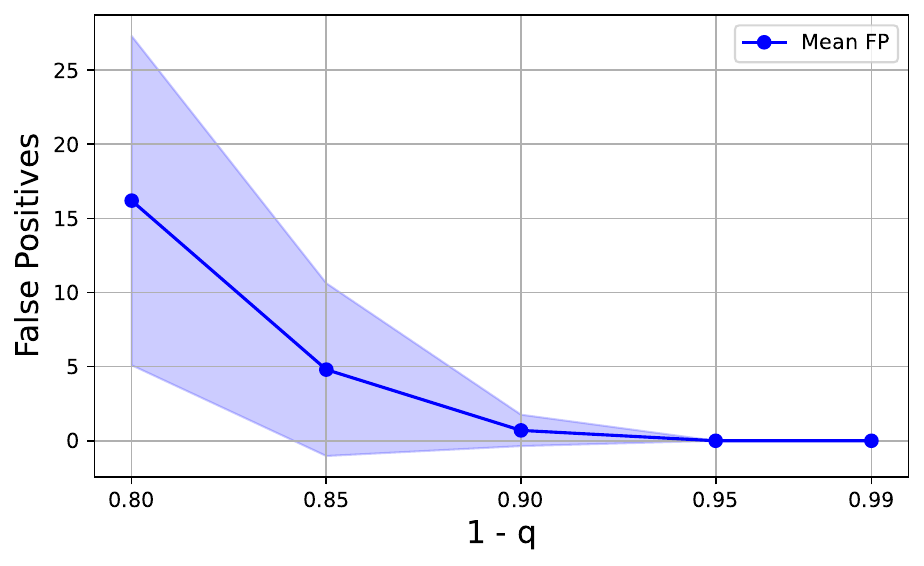}
  \end{center}
  \caption{False-alarm behavior of SWCPD on no-change regimes using single-label MNIST streams.}
  \label{fig:null_calibration}
\end{wrapfigure}
For lower threshold levels, SWCPD produces a larger number of false alarms and exhibits higher variability across runs. In contrast, for \(1-q \geq 0.95\), the detector produces essentially no false alarms in this no-change regime. These results indicate that the quantile parameter \(q\) provides effective control over the sensitivity of the detector, with smaller values of \(q\) yielding more conservative and better-calibrated behavior under stable streams.
 
\noindent \textbf{Real-world data:} We use three popular datasets commonly used for the evaluation of change point detection, containing sensor measurements over time, Occupancy, HASC, and HAR. Additionally, we sampled $15$ sequences containing $600$-$1000$ digits. A detailed description of each dataset and sampling procedure applied for MNIST can be found in \Cref{app:Data}. We describe the full set of hyperparameters for each method and dataset in \Cref{Appendix:Methods}.
 
\textbf{SWCPD is the most reliable detection method among popular offline methods.}
We report the results in \cref{tab:offline_CPD_exp_result}. Across the considered datasets, SWCPD achieves competitive AUC scores while consistently maintaining a low number of false positives. In particular, SWCPD exhibits strong false-positive control on all datasets and substantially reduces false alarms compared with several traditional baselines, such as KCP on the HASC dataset. The method also achieves favorable detection delays in several settings, most notably on HASC. While other methods obtain better results on some individual metrics, such as COV or DD on specific datasets, SWCPD provides a robust overall trade-off between detection accuracy, detection delay, and false-positive control. However, the lower COV score is mainly due to SWCPD’s conservative behavior. Because it is tuned to avoid false positives, it predicts fewer change points than more sensitive baselines. While this reduces spurious detections, it can miss true changes and merge ground-truth segments, lowering COV.

\begin{table}[!h]
    \centering
    \caption{Shows the average AUC \& Covering scores, average detection delay (DD), and false positives (FP) together with the standard deviation of SWCPD and offline methods over real-world datasets. \textbf{Bold} numbers indicate best performance; \underline{underlined} values are statistically equal to best results \protect\footnotemark.}
    \label{tab:offline_CPD_exp_result}
    \adjustbox{max width=0.75\columnwidth}{
    \begin{tabular}{lllllll} \toprule
    \multicolumn{2}{c}{\multirow{2}{*}{\textbf{Dataset}}} & \multicolumn{5}{c}{\textbf{Method}}\\
     \cmidrule(lr){3-7}
      &  & \texttt{e-divisive} & \texttt{KCP} & \texttt{CLasP} & \texttt{OT-CPD} & \texttt{SWCPD} \\
     \midrule
     \multirow{4}{*}{Occupancy} & AUC $(\uparrow)$ & $0.34\pm 0.0$ & $0.52\pm 0.0  $  & $\underline{0.58}\pm 0.0$ & $0.40 \pm 0.0$ & $\textbf{0.59} \pm .008$ \\
     & COV $(\uparrow) $ & $0.64 \pm 0.0$ & $0.64 \pm 0.0$ & $0.19 \pm 0.0$  & $0.73 \pm 0.0$ & $\textbf{0.81} \pm .001$ \\
     & DD $(\downarrow)$ & $\underline{53} \hspace{0.1cm} (53;53)$ & $77 \hspace{0.1cm} (77;77)$ &  $- \hspace{0.1cm}(-;-)$ &  $129 \hspace{0.1cm} (129;129)$ & $\textbf{52} \hspace{0.1cm} (49.5;52.8)$ \\
     & FP $(\downarrow)$ & $12 \hspace{0.1cm} (12;12)$ & $11 \hspace{0.1cm} (11;11)$ &  $- \hspace{0.1cm}(-;-)$ & $11 \hspace{0.1cm} (11;11)$ & $\textbf{4} \hspace{0.1cm} (4;4)$ \\
     
     \cmidrule(lr){1-7}
     \multirow{4}{*}{MNIST} & AUC $(\uparrow)$ & $\underline{0.96}\pm0.05$ & $0.91\pm0.06$ & $0.63\pm 0.03$ & $0.95 \pm 0.05$ & $\textbf{0.97} \pm 0.07$\\
     
     & COV $(\uparrow)$ & $\underline{0.95} \pm 0.05$ & $0.93 \pm 0.05$ &  $0.26 \pm 0.06$ & $\textbf{0.96} \pm 0.10$ & $0.89\pm0.07$ \\
     
     & DD $(\downarrow)$ & $9.41 \hspace{0.1cm} (0;23)$ & $21.7 \hspace{0.1cm} (0;71)$ & $- \hspace{0.1cm} (-;-)$ & $\textbf{6.2} \hspace{0.1cm} (0;26)$ & $11.8 \hspace{0.1cm} (8;14.5)$ \\
     & FP $(\downarrow)$ & $0.4 \hspace{0.1cm} (0;1)$ & $0.66\hspace{0.1cm} (0;2)$ &  $-\hspace{0.1cm} (-;-)$ & $0.4 \hspace{0.1cm} (0;1)$ & $\textbf{0.13} \hspace{0.1cm} (0;1)$ \\
     
     \cmidrule(lr){1-7}
     \multirow{4}{*}{HASC} & AUC $(\uparrow)$ & $0.73\pm0.12$ & $0.66\pm0.14$ &$0.84\pm 0.15$ &  $0.79 \pm 0.2$ & $\textbf{0.87} \pm 0.12$ \\
     
     & COV $(\uparrow)$ & $0.57 \pm 0.19$ & $0.59 \pm 0.32$ &  $\textbf{0.79} \pm 0.18$ & $0.75 \pm 0.25$ & $\underline{0.78}\pm0.19$ \\
     
     & DD $(\downarrow)$ & $357 \hspace{0.1cm} (0;1264)$ & $334 \hspace{0.1cm} (0;1540)$  & $180 \hspace{0.1cm} (0;1054)$ & $233 \hspace{0.1cm} (0;1342)$ & $\textbf{39} \hspace{0.1cm} (0;688)$ \\
     
     & FP $(\downarrow)$ & $3.8 \hspace{0.1cm} (0;8)$ & $14\hspace{0.1cm} (0;47)$ &  $0.78\hspace{0.1cm} (0;4)$ &$3.7 \hspace{0.1cm} (0;18)$ & $\textbf{0.09} \hspace{0.1cm} (0;1)$ \\
     
     \cmidrule(lr){1-7}
     \multirow{4}{*}{HAR} & AUC $(\uparrow)$ & $0.82\pm0.07$ & $\textbf{0.85}\pm0.06$ & $0.53 \pm 0.05$ &  $0.73 \pm 0.06$ & $\textbf{0.85} \pm 0.07$ \\
     
     & COV $(\uparrow)$ & $0.76 \pm 0.12$ & $\textbf{0.82} \pm 0.07$ &  $0.11 \pm 0.04$ & $0.52 \pm 0.07$ & $0.56\pm0.04$ \\
     
     & DD $(\downarrow)$ & $4.7 \hspace{0.1cm} (1.25;9.3)$ & $3.7 \hspace{0.1cm} (1.0;7.7)$ & $10.3 \hspace{0.1cm}(9;12)$ & $\textbf{1.8} \hspace{0.1cm} (0.5;4.2)$ & $4.8 \hspace{0.1cm} (2.8;6.5)$ \\
     
     & FP $(\downarrow)$ & $4.9 \hspace{0.1cm} (1;14)$ & $2.5\hspace{0.1cm} (0;8)$ & $0.33 \hspace{0.1cm}(0;1)$ & $0.2 \hspace{0.1cm} (0;1)$ & $\textbf{0.1} \hspace{0.1cm} (0;1)$ \\
    \bottomrule 
    \end{tabular}
    }
\end{table}
\footnotetext{Best performance is determined after applying a paired t-test, bold numbers indicate best absolute performance, underlined numbers indicate equal performance with a smaller reported metric.}

\textbf{SWCPD provides a favorable trade-off compared with popular online and deep learning-based detection methods, achieving consistently low false-positives on average while maintaining competitive or superior AUC across datasets.}
The results from \Cref{tab:online_CPD_exp_result} demonstrate that SWCPD is competitive with prominent online detection methods, including BOCPD, RuLSIF, DeepRuLSIF, and DeepCLF. Across the evaluated datasets, SWCPD generally achieves strong AUC performance while maintaining particularly low false-positive. Although deep learning-based methods can obtain comparable or better results on some individual metrics (COV,DD), SWCPD provides a favorable balance between predictive performance and reliability without requiring a learned detection model. Overall, these results suggest that SWCPD is a robust and practical alternative for high-dimensional online change-point detection, especially in settings where false alarms are costly.
\begin{table}[!t]
    \centering
    \caption{Shows the average AUC \& Covering scores, average detection delay (DD), and false positives (FP) together with the standard deviation of SWCPD and online methods over real-world datasets. \textbf{Bold} numbers indicate best performance; \underline{underlined} values are statistically equal to best results.}
    \label{tab:online_CPD_exp_result}
    \adjustbox{max width=0.75\columnwidth}{
    \begin{tabular}{lllllllll} \toprule
    \multicolumn{2}{c}{\multirow{2}{*}{\textbf{Dataset}}} & \multicolumn{5}{c}{\textbf{Method}}\\
     \cmidrule(lr){3-9}
      &  & \texttt{BOCPD} & \texttt{RuLSIF}&  \texttt{DeepRuLSIF}& \texttt{DeepCLF}& \texttt{RIO-LE} & \texttt{RIO-LC} &\texttt{SWCPD} \\
     \midrule
     \multirow{4}{*}{Occupancy} & AUC $(\uparrow)$ &  $0.57 \pm 0.0$ & $0.38 \pm 0.0$ &$0.34 \pm .062$ &$0.33\pm .034$ &\underline{0.58} $\pm 0.0$ &  \underline{0.58} $\pm 0.0$ & $\textbf{0.59}\pm .008$ \\
     & COV $(\uparrow)$ &  $0.73\pm 0.0$ &  $0.79 \pm 0.0$ &$0.77\pm .024$ &$0.77 \pm .029 $ & \underline{0.19} $\pm 0.0$ &  \underline{0.19} $\pm 0.0 $& $\textbf{0.81}\pm .001$ \\
     & DD $(\downarrow)$ &  $105 \hspace{0.1cm} (105;105)$ & $85 \hspace{0.1cm}(85;85)$ &$100 \hspace{0.1cm} (75;117)$ &$96 \hspace{0.1cm} (76;124)$  & $- \hspace{0.1cm}(-;-)$& $- \hspace{0.1cm}(-;-)$& $\textbf{52} \hspace{0.1cm} (49.5;52.8)$ \\
     & FP $(\downarrow)$ &  $11 \hspace{0.1cm} (11;11)$ & $8 \hspace{0.1cm} (8;8)$&$8 \hspace{0.1cm} (8;9)$ &$8 \hspace{0.1cm} (7;10)$ & $- \hspace{0.1cm}(-;-)$&$- \hspace{0.1cm}(-;-)$ &   $\textbf{4} \hspace{0.1cm} (4;4)$ \\
     
     \cmidrule(lr){1-9}
     \multirow{4}{*}{MNIST} & AUC $(\uparrow)$ & $0.69\pm 0.15$  & $0.63 \pm 0.03$& $0.91 \pm 0.17$& $0.93 \pm 0.1$ & $0.54 \pm 0.14$ &  $0.62 \pm 0.18$& $\textbf{0.97} \pm 0.07$\\
     
     & COV $(\uparrow)$ &  $0.78 \pm 0.11$ & $0.26 \pm 0.05$&$0.92\pm 0.04$&$\textbf{0.94} \pm 0.02$&$0.55 \pm 0.13$ & $0.42 \pm 0.16$& $0.89\pm0.07$ \\
     
     & DD $(\downarrow)$ & $17.8 \hspace{0.1cm} (11;27)$ & $- \hspace{0.1cm}(-;-)$&$7.5 \hspace{0.1cm}(3;23)$&$\textbf{6.5} \hspace{0.1cm} (2;16)$& $41.3 \hspace{0.1cm} (16;73)$ & $11.8 \hspace{0.1cm} (0,63)$& $11.8 \hspace{0.1cm} (8;14.5)$ \\
     & FP $(\downarrow)$ &  $0.93\hspace{0.1cm} (0;2)$ & $-\hspace{0.1cm}(-;-)$&$0.4 \hspace{0.1cm}(0;2)$&$0.33 \hspace{0.1cm} (0;1)$ & $1.7 \hspace{0.1cm} (0;4)$ & $0.33 \hspace{0.1cm} (0;2)$& $\textbf{0.13} \hspace{0.1cm} (0;1)$ \\
     
     \cmidrule(lr){1-9}
     \multirow{4}{*}{HASC} & AUC $(\uparrow)$ &  $0.65\pm 0.10$ &  $0.75 \pm 0.16$&$0.81 \pm 0.13 $&$\underline{0.85} \pm 0.12$  &  $0.73 \pm 0.19$& $0.76 \pm 0.16$ & $\textbf{0.87} \pm 0.12$ \\
     
     & COV $(\uparrow)$ &  $0.66 \pm 0.24$ & $0.66\pm0.26$&$0.75\pm0.10$&$\underline{0.78} \pm 0.13$  & $0.74 \pm 0.22$ & $0.65 \pm 0.25$ &  $\underline{0.78}\pm0.19$ \\
     
     & DD $(\downarrow)$ &  $445 \hspace{0.1cm} (0;1866)$ & $559 \hspace{0.1cm}(3.5;4040)$&$496 \hspace{0.1cm} (0;3678)$&$454 \hspace{0.1cm} (0;4006)$ &$243 \hspace{0.1cm} (0;1089) $ & $73 \hspace{0.1cm} (0;573) $ & $\textbf{39} \hspace{0.1cm} (0;688)$ \\
     
     & FP $(\downarrow)$ &  $9.0\hspace{0.1cm} (0;46)$ & $4.7 \hspace{0.1cm}(0;24)$&$1.5 \hspace{0.1cm}(0;5)$&$1.3 \hspace{0.1cm}(0;4)$ & $1.5 \hspace{0.1cm} (0;9) $& $0.43 \hspace{0.1cm} (0;4)$ & $\textbf{0.09} \hspace{0.1cm} (0;1)$ \\
     
     \cmidrule(lr){1-9}
     \multirow{4}{*}{HAR} & AUC $(\uparrow)$ &  $0.76\pm 0.06$ &  $0.72 \pm 0.09$&$0.81 \pm 0.1 $&$0.80 \pm 0.06$  &  $0.71 \pm 0.14$& $0.65 \pm 0.13$ &  $\textbf{0.85} \pm 0.07$ \\
     
     & COV $(\uparrow)$ &  $0.53 \pm 0.09$ & $0.54 \pm 0.06$&$\textbf{0.67}\pm0.08$&$\underline{0.66} \pm 0.07$ & $0.51\pm .014$ & $0.46 \pm 0.17$ &  $0.56\pm0.04$ \\
     
     & DD $(\downarrow)$ &  $\textbf{2.8} \hspace{0.1cm} (1.8;4.2)$ & $7.1 \hspace{0.1cm}(4.9;9.1)$&$3.2 \hspace{0.1cm} (1.1;6.5)$&$3.4 \hspace{0.1cm} (1;5.9)$  & $7.4 \hspace{0.1cm} (0.0;10.5)$& $7.1 \hspace{0.1cm} (0.0;15)$ & $4.8 \hspace{0.1cm} (2.8;6.5)$ \\
     
     & FP $(\downarrow)$ &  $\textbf{0.1}\hspace{0.1cm} (0;1)$ & $2.2 \hspace{0.1cm} (0;4)$&$0.7 \hspace{0.1cm} (0;3)$&$0.8 \hspace{0.1cm}(0;2)$& $2.7 \hspace{0.1cm} (0;6)$ &  $1.9 \hspace{0.1cm} (0;5)$&  $\textbf{0.1} \hspace{0.1cm} (0;1)$ \\
    \bottomrule 
    \end{tabular}
    }
\end{table}

\section{Limitations \& Conclusion}\label{Sec:Limitations & conclusions}
Despite the demonstrated effectiveness of SWCPD, some limitations merit attention. First, the reliance on random one-dimensional projections can reduce sensitivity to subtle, local changes in high-dimensional spaces, as these may not always be captured by a limited sampling of directions. Future refinements might involve adaptive or learned projection strategies that more selectively probe feature dimensions most likely to exhibit drift. Second, our adaptive thresholding scheme is motivated by the approximate Gamma behavior of the proposed SW-based slices under suitable assumptions. In practice, however, small sample sizes, heavy-tailed data, or deviations from these assumptions may weaken this approximation and affect threshold calibration. Future work should therefore investigate finite-sample behavior, robustness to heavy-tailed distributions, and alternative data-driven calibration schemes.

We introduced SWCPD, a framework for interpretable online change point detection in high-dimensional data streams, leveraging Sliced Wasserstein (SW) distance. By transforming multivariate windows into a one-dimensional score, our method circumvents the computational bottlenecks of traditional CPD techniques. SWCPD combines this score with a Gamma-motivated adaptive quantile threshold and a contrastive explanation module that highlights feature dimensions associated with detected shifts.
Across several benchmarks, SWCPD achieves competitive or superior detection performance, with particularly strong false-positive control. The proposed attribution mechanism complements detection by providing interpretable summaries of the observed distributional changes. These results suggest that SWCPD is a promising approach for high-dimensional settings where both reliability and interpretability are important.

\section*{Acknowledgements}
This research was funded by the German Federal Ministry of Labour and Social Affairs through the establishment of a Junior Research Group on Artificial Intelligence at the Federal Institute of Occupational Safety and Health (BAuA). The presented results contribute to the development and evaluation of reliable and safe AI for industrial applications, with the overarching aim of laying the scientific foundations necessary to meet the requirements of the European Machinery Directive (2023) and the European AI Act (2024).

\bibliography{main}
\bibliographystyle{tmlr}

\appendix
\section*{Appendix}
\section{Ablation study}\label{sec:Ablation}
In the following we are going to investigate the sensitivity and influence of SWCPD for variations in its key hyperparameters. Our proposed method relies on the following hyperparameter: 
\begin{itemize}
    \item $\mathtt{L}=500$: Number of random projections (Monte Carlo samples)
    \item $\mathtt{w}=50$: Window length 
    \item $\mathtt{p}=2$: Order of Wasserstein distance
    \item $\mathtt{q}=0.05:$ Significance level 
    \item $\mathtt{K_{max}}=k$: Maximum length of lookback window (for moving average calculation)
\end{itemize}
We conducted experiments using the same MNIST datasets as in the experimental section of the paper, hence the number of change points varies from 2 to 4 with 200 samples for each sub-sequence forming one segment. We defined the following parameter sets, $\mathtt{w} \in [5,20,50,70,100]$, $\mathtt{K}_{\max}\in[5,10,20,50,100]$, $\mathtt{L}\in[100,200,500,1000,5000]$, $\mathtt{p}\in[1,2,3,4,6]$, and $q\in[0.01,0.05,0.1,0.2]$. Across all simulation on all 15 datasets, we fixed the random seed for the Monte Carlo samples to obtain reproducible results. We choose the default parameter $\mathtt{L}=5000$, $\mathtt{p}=4$, $\mathtt{w}=50$, $\mathtt{K}_{\max}=50$, $q=0.05$ which we fixed, only varying one parameter within its parameter set respectively. \Cref{Ablation_Main} shows the parameter sensitivity of SWCPD for this exemplary dataset. This shows, that the most sensitive parameter are the window length, and lookback window, whereas the number of Monte Carlo samples may be sufficiently large if chosen $\mathtt{L} \approx d$. The Wasserstein order should be set above 2, depending on the severity of the drifts, since it amplifies low signals (small distances). The same holds for the significance level as it may be irrelevant if the abrupt changes are significant itself. To further emphasize the influence of the Wasserstein order and significance level, we run additional experiments on synthetic datasets with low drift severities.

We used the sampling scheme described in \cref{App:SynDataCPD}, where we set $N=1500$, $d=10$ with initial base center $c_{0}\in [-4,4]^{10}$ and 10 different segments. We selected $\mathcal{V}=\{1,2,3\}$ and drift severity was set to $\delta_{j}\sim\text{Uniform}(-1)$ for each feature index in $\mathcal{V}$. In contrast we sampled the remaining data with i.i.d. Gaussian distribution with mean at each base center respectively and $\sigma=0.5$ for each component. The result highlights the influence of the significance level for the propagated upper bound as increasing the variable leads to a decrease in the AUC and Covering score since the number of false negatives increases when the upper bound is to close to the cumulative sum. In this example, the Wasserstein order was of secondary importance as changing it lead to similar scores across the datasets, however increasing the Wasserstein order has a smoothing effect on the cumulative sum as small Wasserstein distances nearly vanishes. This can be benefiting for noisy signals. For weak signals, where the abrupt changes are small, we suggest decreasing the Wasserstein order amplifying small changes in the underlying data.
\begin{figure}[!h]
    \centering
    \includegraphics[scale=0.5]{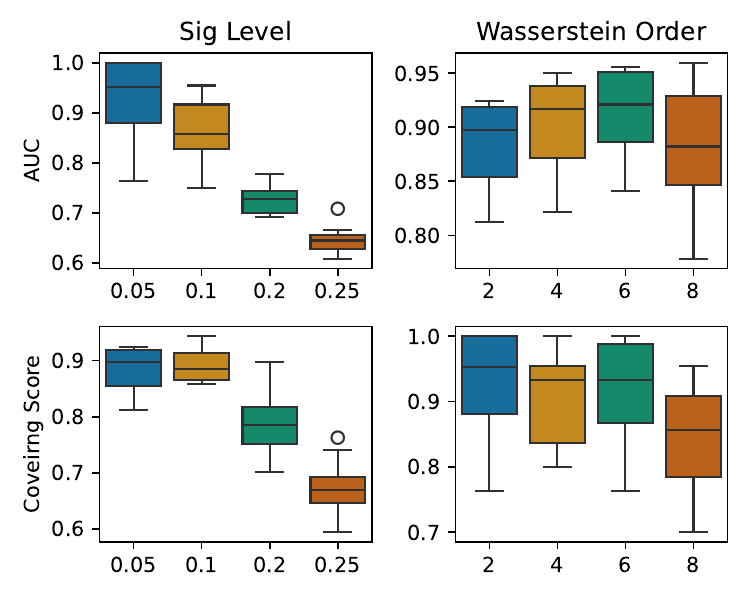}
    \caption{Summary of AUC and Covering scores for varying significance level and Wasserstein order on 10 different synthetic datasets with $d=10, N=1500$ and 10 drifts in 3 features simultaneously.}
    \label{fig:Severity-Exp}
\end{figure}
Additionally, we performed a Grid Search on MNIST and Occupancy. For both experiments, we fixed $\mathtt{p}=4, \mathtt{L}=5000$ while varying the significance level $q,$ window size $\mathtt{w}$, and Lookback $\mathtt{K_{max}}$. We limited the possible parameter values for MNIST to $\mathtt{w} \in [20,30,40,50,100]$, $\mathtt{K}_{\max}=[0.5\mathtt{w},\mathtt{w}]$, and $q=[0.01,0.05,0.1]$. We report the average AUC scores for each parameter combination in \cref{fig:GS_MNNIST}, we see multiple parameter sets achieving high AUC scores.
\begin{figure}
    \centering
    \includegraphics[width=0.7\linewidth]{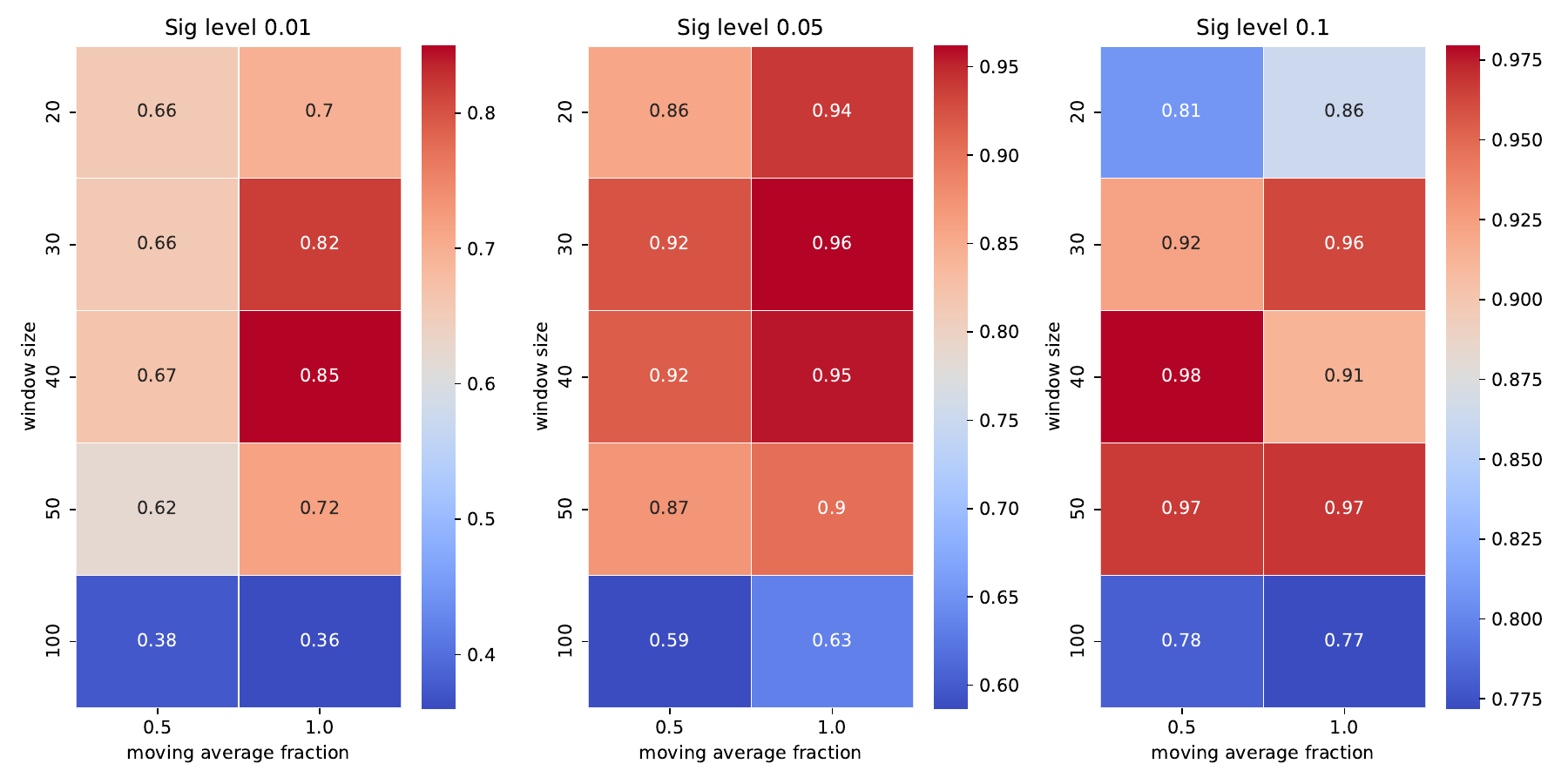}
    \caption{Average AUC scores for various parameter combinations using SWCPD on MNIST sequences.}
    \label{fig:GS_MNNIST}
\end{figure}
For Occupancy, we limited the possible parameter values to $\mathtt{w} \in [200,300,400,500,600]$, $\mathtt{K}_{\max}=[0.25\mathtt{w},0.5\mathtt{w},0.75\mathtt{w},\mathtt{w}]$, and $q=[0.01,0.05,0.1]$. We report the AUC scores for each parameter combination in \cref{fig:GS_Occ}, we see multiple parameter sets achieving high AUC scores in comparison to the baseline methods.
\begin{figure}
    \centering
    \includegraphics[width=0.7\linewidth]{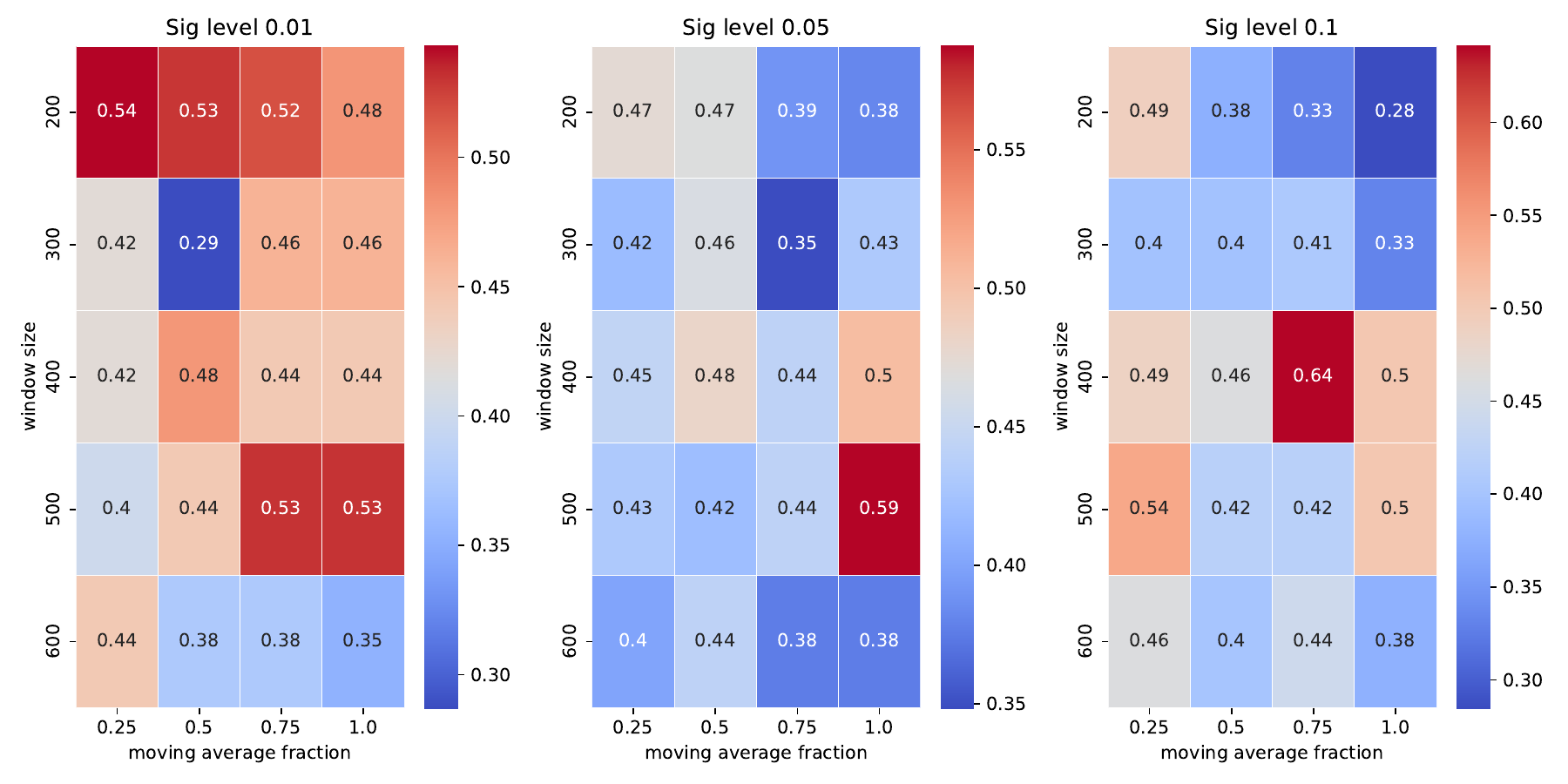}
    \caption{AUC scores for various parameter combinations using SWCPD on Occupancy.}
    \label{fig:GS_Occ}
\end{figure}
\section{Hyperparameter setting} \label{Appendix:Methods}
In the following part, we will describe the reference methods used within the Change Point Detection experiments. Alongside its main parameters and their default values, we also describe the setting for each dataset. We provide an overview of the computational complexity in \Cref{tab:Complexity}. We set the margin or error ($M$ in \cite{Benchmark_1}) in the calculation of AUC, Covering, and False Positives to $\tau=100$ (HASC), $\tau=10$ (HAR), $\tau=20$ (MNIST), and $\tau=10$.(Occupancy).
\begin{table*}[ht]
    \centering
    \caption{Overview of reference methods and respective time complexity for online and offline change point detection, $K$: number of change points, $d$: dimension, $N$: total samples, $w$: sliding window.}
    \label{tab:Complexity}
    \adjustbox{max width=0.95\textwidth}{
    \begin{tabular}{lcccccc}
    \toprule
       Method  & parametric& non parametric & online & offline& Offline Complexity\tablefootnote{Complexity for offline change point detection for a multivariate time series with $d$ dimensions and $N$ observations} & Online Complexity\tablefootnote{Accrued complexity for change point detection at time step $t=N$ for a multivariate time series with $d$ dimensions and in total $N$ observations} \\ \hline
         e-divisive& (\checkmark)& & &(\checkmark)&$\mathcal{O}(KN^{2})$&$\mathcal{O}(KN^{4})$ \\
         KCP& &(\checkmark)& & (\checkmark) &$\mathcal{O}(KdN^{2})$&$\mathcal{O}(KdN^{4})$\\
         ClaSP& &(\checkmark)&&(\checkmark)&$\mathcal{O}(KN^{2})$&$\mathcal{O}(KN^{4})$ \\
          BOCPD& (\checkmark) &&(\checkmark)&&$(-)$& $\mathcal{O}(Nd)$\\
          OT-CPD & &(\checkmark)&
          &(\checkmark)&$\mathcal{O}(N(w^{3}\log(w)+w^{2}d))$&$\mathcal{O}(N(w^{3}\log(w)+w^{2}d))$\\
          RuLSIF & & (\checkmark) & (\checkmark) & && $\mathcal{O}(N(w^3+dw^2))$\\
          SWCPD (ours) &&(\checkmark)&(\checkmark)&&$(-)$&$\mathcal{O}(N(wdL+Lw\log w))$\\
    \bottomrule
    \end{tabular}
    }
\end{table*}
\paragraph{\underline{SWCPD:}}
\paragraph{Hyperparameter selection.}
For all experiments, hyperparameters were selected according to a fixed set of heuristic rules. In general, we choose the window length \(w\) to be smaller than the average segment length, so that the reference and current sub-windows are unlikely to contain multiple change points. The lookback parameter \(K_{\max}\) is chosen either equal to \(w\) or as a smaller fraction of \(w\); smaller values make the adaptive threshold more responsive by using a shorter autoregressive lag. We use Wasserstein orders \(p \in \{2,4\}\), a sufficiently large number of projections \(L>500\), and a quantile level \(q<0.15\), which yields a conservative and robust detection threshold. Dataset-specific values are reported in the following:
\begin{itemize}
\item \textbf{HASC} \begin{itemize}
    \item $\mathtt{L,w,p,q,K_{\max}}:500,500,2,0.05,20$
    \end{itemize}
 \item \textbf{HAR} \begin{itemize}
     \item $\mathtt{L,w,p,q,K_{\max}}:5000,20,2,0.075,20$
    \end{itemize}
    \item \textbf{MNIST}
    \begin{itemize}
        \item $\mathtt{L,w,p,q,K_{\max}}:5000,50,4,0.1,25$
    \end{itemize}
    \item \textbf{Occupancy}
    \begin{itemize}
        \item $\mathtt{L,w,p,q,K_{\max}}:1000,500,2,0.05,500$
    \end{itemize}
\end{itemize}

\paragraph{\underline{BOCPD (online):}} Bayesian Online Change Point Detection (BOCPD) \citep{Adams.19.10.2007} is a method used to detect change points in streaming data in real time. It has some desirable properties, such that it can be applied online, is applicable to multivariate data, and quantifies uncertainty \citep{Knoblauch.14.05.2018}. The underlying concept of this approach is to monitor the probability of a change point occurring at each time step by maintaining and updating the posterior distribution over potential segmentations of the data. It assumes that data within a segment follows a consistent probabilistic model (e.g., Gaussian), and a change point indicates a shift in the underlying model. There exist many implementation, we use the implementation that comes with the $\mathtt{ocp}$ package \citep{OCP-Package}. The key parameters for this method are:
\begin{itemize}
    \item $\mathtt{prob\_model}$: the underlying probability model of the posterior distribution
    \item $\mathtt{init\_params}$: the initial parameters for the probability model consiting of $m,k,a,b$
    \item $\mathtt{hazard\_function}$: normally set to a constant function with certain hazard rate $\lambda$
\end{itemize}
We run the experiments with the following parameter sets:
\begin{itemize}
\item \textbf{HASC} \begin{itemize}
    \item $\mathtt{prob\_model}:"gaussian"$
    \item $\mathtt{init\_params}: m=0, k=10, a=0.1,b=0.01$
    \item $\mathtt{hazard\_function}:$ type=constant, $\lambda=100$
    \end{itemize}
 \item \textbf{HAR} \begin{itemize}
    \item $\mathtt{prob\_model}:"gaussian"$
    \item $\mathtt{init\_params}: m=0, k=0.01, a=0.01,b=1e-4$
    \item $\mathtt{hazard\_function}:$ type=constant, $\lambda=100$
    \end{itemize}
    \item \textbf{MNIST}
    \begin{itemize}
    \item $\mathtt{prob\_model}:"gaussian"$
    \item $\mathtt{init\_params}: m=0.3, k=0.01, a=0.01,b=1e-4$
    \item $\mathtt{hazard\_function}:$ type=constant, $\lambda=100$
    \end{itemize}
    \item \textbf{Occupancy}
    \begin{itemize}
    \item We additionally applied z-score normalization of the data beforehand to obtain a reasonable distributional setting and obtain change points
    \item $\mathtt{prob\_model}:"gaussian"$
    \item $\mathtt{init\_params}: m=0, k=0.01, a=0.01,b=1e-4$
    \item $\mathtt{hazard\_function}:$ type=constant, $\lambda=100$
    \end{itemize}
\end{itemize}
\paragraph{\underline{E-divisive (offline)}:}
The e-divisive combines binary bisection together with a permutation test based on an energy divergence measure \citep{matteson2014nonparametric}. It is a non-parametric offline change point detection method for multivariate data, making it applicable to a wide range of complex data. We use the implementation from the $\mathtt{ecp}$ package \citep{ecp-package}.
The method relies on the following parameters with default specification:
\begin{itemize}
    \item $\mathtt{R}=199:$ specifies the number of permutations test applied 
    \item $\mathtt{sig.lvl}=0.05:$ the significance level of the permutation test
    \item $\mathtt{min.size}=30:$ the minimum observations between two subsequent 
    change points
\end{itemize}
We run the experiments with the following parameter sets:
\begin{itemize}
    \item \textbf{HASC:} \hspace{0.1cm}
   $\mathtt{R}=199, \hspace{0.1cm} \mathtt{sig.lvl}=0.05, \hspace{0.1cm} \mathtt{min.size}=500$
 \item \textbf{HAR:} \hspace{0.1cm}
   $\mathtt{R}=199, \hspace{0.1cm} \mathtt{sig.lvl}=0.05, \hspace{0.1cm} \mathtt{min.size}=30$
    \item \textbf{MNIST:} \hspace{0.1cm}
   $\mathtt{R}=199, \hspace{0.1cm} \mathtt{sig.lvl}=0.05, \hspace{0.1cm} \mathtt{min.size}=30$
    \item \textbf{Occupancy:} \hspace{0.1cm}
    $\mathtt{R}=30, \hspace{0.1cm} \mathtt{sig.lvl}=0.05, \hspace{0.1cm} \mathtt{min.size}=400$
\end{itemize}
\paragraph{\underline{KCP (offline)}:}
Kernel change-point detection (KCP) transforms the data into a RKHS with an associated kernel, which is used to calculate the dissimilarity (cost). The goal is to obtain an optimal segmentation of the input data in the sense of a minimized averaged cost within each segment obtained \cite{KCP}. An efficient implementation of this method can be found in \cite{truong2020selective}, we assume that the number of change points is unknown, hence we rely on $\mathtt{KerneCPD}$ with $\mathtt{PELT}$. The methods relies on the following parameter:
\begin{itemize}
    \item $\mathtt{kernel}="linear"$: specifies the kernel, cost function
    \item $\mathtt{min\_size}=1$: minimum segmentation length
    \item $\mathtt{pen}$: penalty or regularization of number of change points identified
\end{itemize}
The penalty value needs to be specified if the number of change point is unknown. Usually a higher value will lead to fewer change points identified, while a lower value encourages the method to annotate more change point with a more fine grained segmentation. We used the following parameter settings:
\begin{itemize}
 \item \textbf{HASC:} \hspace{0.1cm}
   $\mathtt{kernel}=$"rbf", \hspace{0.1cm} $\mathtt{min\_size}=2, \hspace{0.1cm} \mathtt{pen}=10$
 \item \textbf{HAR:} \hspace{0.1cm}
   $\mathtt{kernel}=$"rbf", \hspace{0.1cm} $\mathtt{min\_size}=2, \hspace{0.1cm} \mathtt{pen}=1$
    \item \textbf{MNIST:}\hspace{0.1cm}
   $\mathtt{kernel}=$"rbf", \hspace{0.1cm} $\mathtt{min\_size}=2, \hspace{0.1cm} \mathtt{pen}=1$
    \item \textbf{Occupancy:}\hspace{0.1cm}
   $\mathtt{kernel}=$"rbf", \hspace{0.1cm} $\mathtt{min\_size}=2, \hspace{0.1cm} \mathtt{pen}=50$
\end{itemize}
\paragraph{\underline{ClaSP (offline)}:}
ClaSP (Classification Score Profile) is a self-supervised time series segmentation method \citep{clasp2023}. The implementation is available at \url{https://github.com/ermshaua/claspy}. It is a dynamic windowing approach which creates a binary classification problem across different split points of the time series using $k$-Nearest Neighbors (k-NN) which is evaluated using corss validation. The score obtained from k-NN is used to evaluate the similarity of both segments, where higher scores indicate a stronger dissimilarity. The main parameters to choose are:
\begin{itemize}
    \item $\mathtt{windwo\_size} = "suss"$: size of the sliding window, default Summary Statistics Subsequence (suss)
    \item $\mathtt{k\_neighours}=3$: number of nearest neighbours for k-NN
    \item $\mathtt{distance} = "znormed\_euclidean\_distance"$: distance used for k-NN
\end{itemize}
We used the following parameters:
\begin{itemize}
    \item \textbf{HASC:} \hspace{0.1cm} $\mathtt{windwo\_size} = 50$
    \item \textbf{HAR:} \hspace{0.1cm} $\mathtt{windwo\_size} = 30$
    \item \textbf{MNIST:} \hspace{0.1cm} $\mathtt{windwo\_size} = 100$
    \item \textbf{Occupancy:} \hspace{0.1cm} $\mathtt{windwo\_size} = 30$
\end{itemize}
\paragraph{\underline{OT-CPD (offline)}:}
OT-CPD \citep{OT-CPD} is a optimal transport based change point detection method which calculates the Wasserstein distance between two sliding windows. After obtaining all available data, it applies a matched filter on the Wasserstein test statistic to obtain a more persistent test statistic reducing false positives. OT-CPD annotates a change if the filtered test statistic exceeds a pre-defined threshold. In our experiments, we relied on the implementation available at \url{https://github.com/kevin-c-cheng/OtChangePointDetection/tree/master}. The main parameters for the change point detection method to choose are:
\begin{itemize}
    \item $\mathtt{window}$: size of the sliding window
\end{itemize}
We used the following parameters:
\begin{itemize}
    \item \textbf{HASC:} \hspace{0.1cm} $\mathtt{window} =1000 $
    \item \textbf{HAR:} \hspace{0.1cm} $\mathtt{window} =25 $
    \item \textbf{MNIST:} \hspace{0.1cm} $\mathtt{window} = 150 $
    \item \textbf{Occupancy:} \hspace{0.1cm} $\mathtt{window} = 750 $
\end{itemize}

\paragraph{\underline{RuLSIF}:} Relative unconstrained least-squares importance fitting (RuLSIF) estimates a relative density ratio that mixes the two distributions using a parameter $\alpha$. The relative ratio is approximated using a kernel model, and its parameters are obtained by solving a simple least-squares problem with a closed form solution. From this estimated ratio, the method computes a divergence score that becomes large when the two windows differ. In a sliding window approach this scores is computed for which peaks indicate change points.
The main parameters for the change point detection method to choose are:
\begin{itemize}
    \item $\alpha$: mixture coefficient in $\alpha$-relative density ratio
    \item $\mathtt{window}$: size of the sliding window
    \item $\mathtt{kernel\_num}$: number of kernels used
    \item $\mathtt{steps}$: stride of sliding window
\end{itemize}
We used the following parameters:
\begin{itemize}
    \item \textbf{HASC:}\hspace{0.1cm} $\mathtt{window} = 200$, $\alpha=0.1$, $\mathtt{kernel\_num}=10$
    \item \textbf{HAR:} \hspace{0.1cm} $\mathtt{window} = 20$, $\alpha=0.1$, $\mathtt{kernel\_num}=10$
    \item \textbf{MNIST:}\hspace{0.1cm} $\mathtt{window} = 100$, $\alpha=0.1$, $\mathtt{kernel\_num}=10$
    \item \textbf{Occupancy:} \hspace{0.1cm} $\mathtt{window} = 250$, $\alpha=0.1$, $\mathtt{kernel\_num}=10$
\end{itemize}
\paragraph{\underline{DeepRuLSIF}:}
DeepRuLSIF \citep{hushchyn2020online,hushchyn2021generalization} follows the framework of RuLSIF where the $\alpha$ relative density ratio is estimated using a deep neural network. We rely on the implementation given by \footnote{\url{https://gitlab.com/lambda-hse/change-point/online-nn-cpd}}. 
The main parameter for the change point detection method which we varied where:
\begin{itemize}
    \item $\mathtt{lag\_size}:$ the gap between batches
\end{itemize}
All other hyperparameter were kept as default.
We used the following parameters:
\begin{itemize}
    \item \textbf{HASC:}\hspace{0.1cm} $\mathtt{lag} = 250$
    \item \textbf{HAR:} \hspace{0.1cm} $\mathtt{lag} = 20$
    \item \textbf{MNIST:}\hspace{0.1cm} $\mathtt{lag} = 100$
    \item \textbf{Occupancy:} \hspace{0.1cm} $\mathtt{lag} = 250$
\end{itemize}
\paragraph{\underline{DeepCLF}:}
This method trains a neural network to distinguish a reference window from a more test window based on a divergence metric. By sliding the windows forward in time and measuring their divergence, peaks in the score curve reveal where the underlying data distribution has changed \cite{hushchyn2020online}.
The main parameter for the change point detection method which we varied where:
\begin{itemize}
    \item $\mathtt{lag\_size}:$ the gap between batches
\end{itemize}
All other hyperparameter were kept as default.
We used the following parameters:
\begin{itemize}
    \item \textbf{HASC:}\hspace{0.1cm} $\mathtt{lag} = 250$
    \item \textbf{HAR:} \hspace{0.1cm} $\mathtt{lag} = 20$
    \item \textbf{MNIST:}\hspace{0.1cm} $\mathtt{lag} = 100$
    \item \textbf{Occupancy:} \hspace{0.1cm} $\mathtt{lag} = 250$
\end{itemize}

\paragraph{\underline{RIO}:}
Rio-CPD \cite{deng2025correlation} represents successive windows of multivariate time-series data through their correlation matrices and treats these matrices as points on a Riemannian manifold. It measures the geodesic distance between the current correlation matrix and the Fréchet mean of previous correlation matrices, using metrics such as Log-Euclidean (RIO-LE) or Log-Cholesky (RIO-LC), thereby capturing changes in the dependence structure without assuming a particular data distribution. These distances are then accumulated using a CUSUM statistic for sequential change detection, allowing the method to efficiently identify both individual-variable and correlation changes in an online setting. 
The main hyperparameter are window size $w$ and cumsum threshold $\rho$. For both methods, we run a grid search with $\texttt{w} \in [10,15,20,\dots 100]$ and $\rho \in [0.4,0.5,\dots,2.0]$ for HASC, HAR, and Occupancy. For MNIST, we extended the threshold grid to $\rho \in [5,10,\dots,30]$. The following hyperparameter resulted in the best AUC scores for RIO-LE:
\begin{itemize}
    \item \textbf{HASC:}\hspace{0.1cm} $\mathtt{w} = 100, \rho=1.0$
    \item \textbf{HAR:} \hspace{0.1cm} $\mathtt{lag} = 20, \rho = 1.5$
    \item \textbf{MNIST:}\hspace{0.1cm} $\mathtt{w} = , \rho=$
    \item \textbf{Occupancy:} \hspace{0.1cm} $\mathtt{w} = 100, \rho =1.0$
\end{itemize}
The following hyperparameter resulted in the best AUC scores for RIO-LC:
\begin{itemize}
    \item \textbf{HASC:}\hspace{0.1cm} $\mathtt{w} = 100, \rho=1.0$
    \item \textbf{HAR:} \hspace{0.1cm} $\mathtt{lag} = 20, \rho = 1.0$
    \item \textbf{MNIST:}\hspace{0.1cm} $\mathtt{w} = , \rho=$
    \item \textbf{Occupancy:} \hspace{0.1cm} $\mathtt{w} = 100, \rho =1.0$
\end{itemize}
\section{Data}\label{app:Data}
This section describes the datasets used to evaluate the online/offline Change Point Detection methods. We consider one synthetic dataset and four real-world datasets. We have empirically checked the validity of Assumptions (A1)--(A4) in the experimental settings considered in this study.

(A2) and (A4) are standard assumptions in the analysis of random projections. In particular, (A2) is motivated by the spherical central limit theorem, according to which one-dimensional random projections of high-dimensional data are approximately Gaussian under mild regularity conditions. To empirically assess the adequacy of this approximation in our setting, we report additional results in \Cref{tab:normality} for different combinations of the number of projections $L$ and the ambient dimension $d$. These results indicate that the Gaussian approximation is stable across the considered configurations.

Assumptions (A1) and (A3) are more restrictive, as they impose conditions directly on the empirical distributions and their projected distances. Specifically, (A1) may be undermined in practice by heavy-tailed or strongly correlated high-dimensional data. Nevertheless, they are satisfied for the empirical datasets considered in our experiments. The following table provides an empirical verification of these assumptions across all datasets used in the study. This supports the practical relevance of the theoretical conditions and indicates that, although (A1) and (A3) may not hold universally, they are not violated in the experimental regimes considered here.
\begin{table}[h]
    \centering
    \begin{tabular}{lcc}
        \textbf{Dataset} & $\mathbb{E}[||X||]^{3} < \infty$ & $||\Sigma_{P}||_{\text{op}} \leq C$ \\
        \midrule
          Occupancy & 3.73 ($\checkmark)$ & 0.004 ($\checkmark)$\\
           MNIST & 0.10 ($\checkmark)$ & 20.2 ($\checkmark)$\\
            HASC & 38.1 ($\checkmark)$ & 247 ($\checkmark)$\\
             HAR & 5.05 ($\checkmark)$ & 64.4 ($\checkmark)$\\
    \end{tabular}
    \caption{Validity check of assumptions (A1) and (A3).}
    \label{tab:placeholder}
\end{table}

We report the maximum value of each sliding window and time series (most conservative empirical bound).
\subsection{Synthetic Data}\label{App:SynDataCPD}
The proposed sampling scheme generates synthetic data with customizable cluster centers and variable feature dimensions. The process begins by defining an initial base center \( \mathbf{c}_0 \in \mathbb{R}^d \), where \( d \) is the number of features. This base center serves as the reference point for all subsequent cluster centers.

To generate additional cluster centers, a perturbation process is applied to \( \mathbf{c}_0 \). Specifically, for each new cluster center \( \mathbf{c}_i \), \( i = 1, \ldots, k-1 \), the following transformation is applied:
\[
c_{i,j} =
\begin{cases}
c_{0,j} + \Delta_j & \text{if } j \in \mathcal{V}, \\
c_{0,j} & \text{otherwise},
\end{cases}
\]
where
 $ c_{i,j}$ is the $j$-th feature of the $i$-th cluster center, $\mathcal{V} \subseteq \{1, 2, \ldots, d\}$ is the set of varying feature indices, and $\Delta_j \sim \text{Uniform}(-\delta, \delta)$ is a random offset sampled from a uniform distribution with range $[-\delta, \delta]$.
 
The sampling process ensures that only the features indexed by \( \mathcal{V} \) are modified, while other features remain constant across all cluster centers.
After generating the cluster centers, the data points are sampled from a multivariate Gaussian distribution. For each cluster \( i \), the samples \( \mathbf{x}_i^{(n)} \), \( n = 1, \ldots, N_i \), are drawn as:
\[
\mathbf{x}_i^{(n)} \sim \mathcal{N}(\mathbf{c}_i, \Sigma),
\]
where \( \Sigma \in \mathbb{R}^{d \times d} \) is the covariance matrix (diagonal for simplicity) and \( N_i \) is the number of samples assigned to cluster \( i \). The total number of samples \( N \) is distributed evenly across clusters, i.e., \( N_i = N / k \).

This scheme allows for precise control over the features that vary between groups \( \mathcal{V} \), the degree of variation \( \delta \), and the variance of data points within each cluster with \( \Sigma \). By adjusting these parameters, synthetic datasets can be tailored for specific experimental purposes, such as evaluating clustering algorithms or analyzing feature-specific effects.
In \Cref{tab:appendix_synexample} we report AUC scores for different variances and drift severities for Gaussian synthetic data with $d=10$ and $1500$ samples with $3$ segments. Additionally, \Cref{fig:SynExplainableExp} illustrates the contrastive explanations for the obtained change points by SWCPD. We set the window length $w=50$, the lookback window for the estimation of shape- and rate parameters $K_{\max}=50$, $p=2$, and $L=5000$.
\begin{figure}
    \centering
    \includegraphics[width=0.95\linewidth]{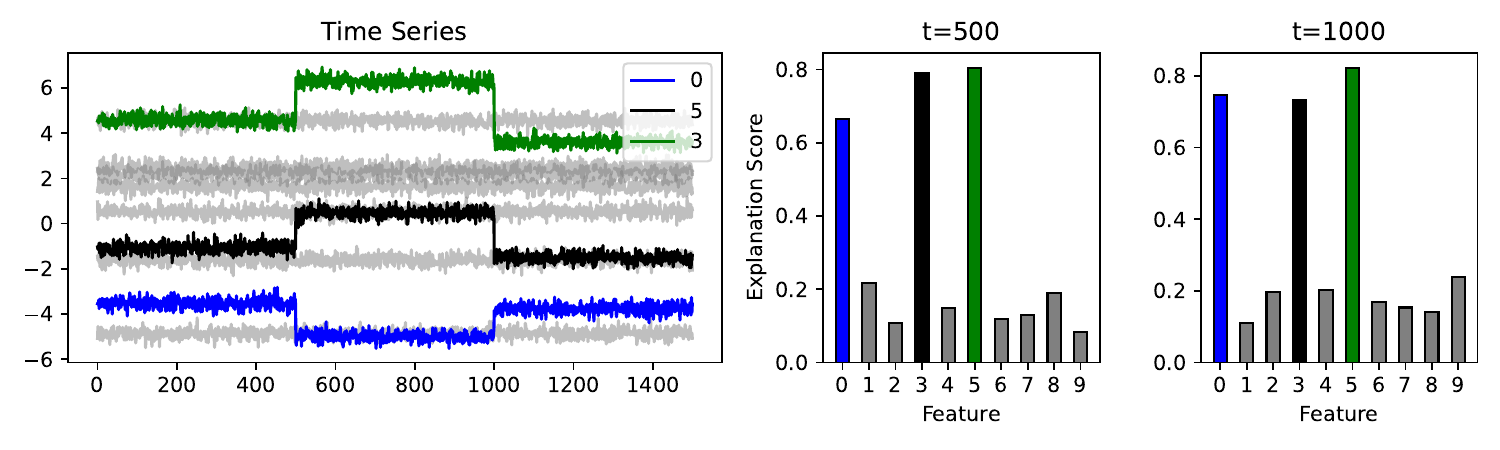}
    \caption{Interpretable change points obtained with SWCPD. Two right plots show feature attributions obtained using \Cref{algo2}, showing alignment with ground truth root causes of the drifts.}
    \label{fig:SynExplainableExp}
\end{figure}

\begin{table}[!h]
\centering
\caption{AUC for different variances $\sigma^{2}$ and drift severity $|\delta|$}
\label{tab:appendix_synexample}
\adjustbox{max width = 0.5\columnwidth}{
	\begin{tabular}{ccccc}
	\toprule
	Source & Value & $\tau=5$ & $\tau=10$ & $\tau=20$ \\
	\midrule
	& $0.1$ & $1.0 \pm 0.0$ & $1.0 \pm 0.0$ & $1.0 \pm 0.0$ \\
	Variance ($\sigma^{2}$) & $0.5$ & $0.8 \pm 0.28$ & $0.93 \pm 0.14$ & $1.0 \pm 0.0$ \\
	& $1.0$ & $0.65 \pm 0.32$ & $0.75 \pm 0.29$ & $0.91 \pm 0.13$ \\
	\midrule
	& $1$ & $0.4 \pm 0.15$ & $0.6 \pm 0.26$ & $0.94 \pm 0.08$ \\
	Drift Severity ($|\delta|$)& $2$ & $0.6 \pm 0.22$ & $0.8 \pm 0.27$ & $0.97 \pm 0.06$ \\
	& $3$ & $0.71 \pm 0.28$ & $0.87 \pm 0.24$ & $0.98 \pm 0.05$ \\
	\bottomrule
	\end{tabular}
	}
\end{table}
\subsection{MNIST}
In order mimic a streaming behaviour, we uniformly sample an initial class (without replacement) and select $K$ instances from the current class. We repeat this procedure and annotate the samples to introduce abrupt changes. Within the scope of the experiments for this paper, we generated $5$ distinct data sequences with $2,3,$and $4$ change points, where each class has 200 samples. We illustrate SWCPDs detection procedure for a sampled MNIST sequence with two change points at $t=200,400$ in \cref{fig:UpperBoundExample}. By calculating tracking the SW distance using a rolling window of $k=50$ observations, we obtain a one-dimensional signal with two significant spikes at $t_{1}=225$ and $t_{2}=425$ since the within similarity of the rolling window will be the largest when the first half samples belong to class prior to the drift and the second half to the class after the drift. We see, that using a propagated upper bound given the current state instead of purely relying on the distance as a signal, we can anticipate changes more reliable and faster. Moreover, the upper bound is adaptive such that there is no fine tuning or manually shifting the rolling window involved. 
\begin{figure}
    \centering
    \includegraphics[width = 0.75\textwidth]{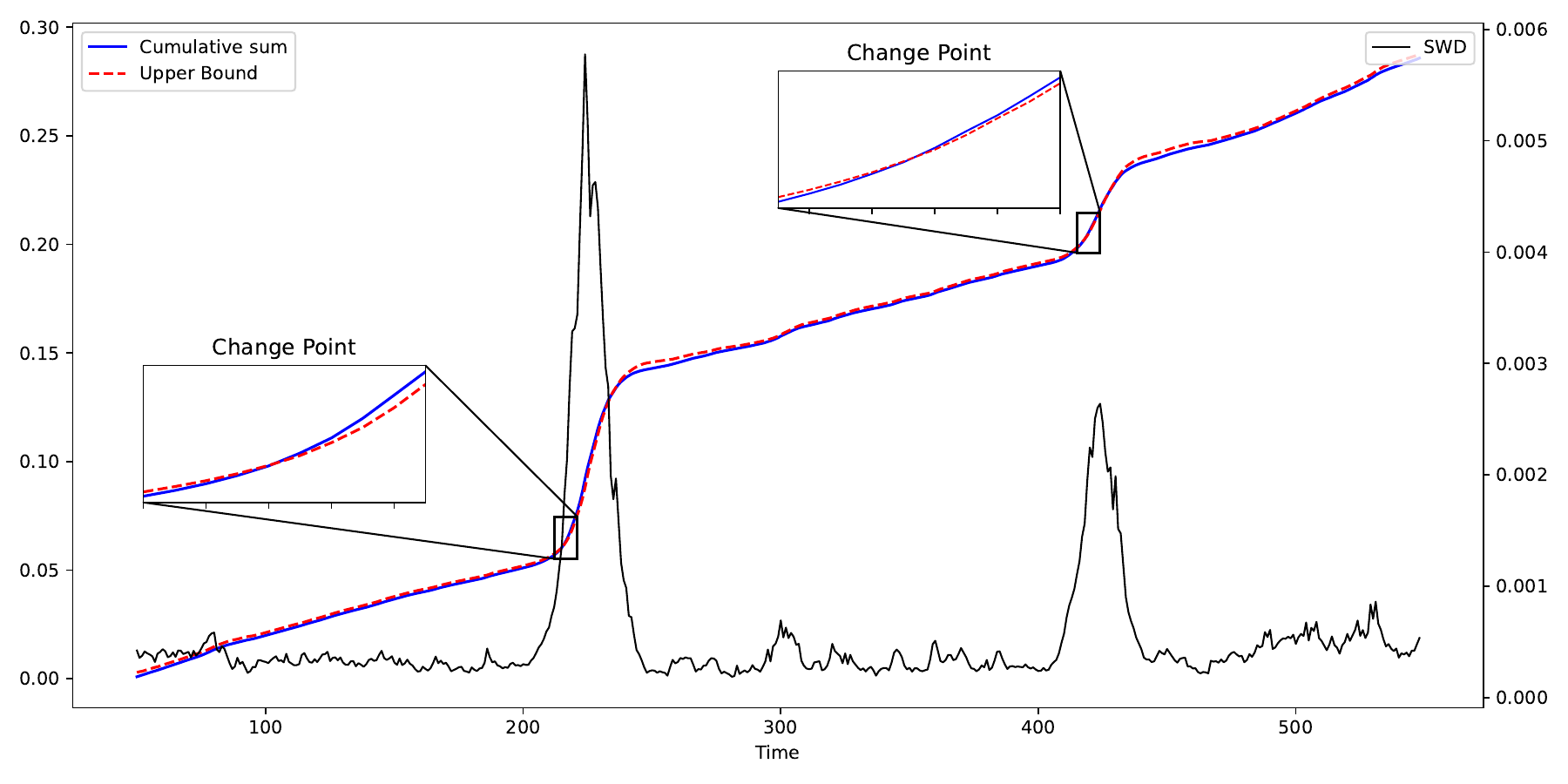}
    \caption{Visualizes our proposed detection method for MNIST data with two change points at $t=200,400$. Change points are indicated when the cumulative sum exceed the upper bound which is derived based on past SWDs.}
    \label{fig:UpperBoundExample}
\end{figure}
SWCPD is based on the Sliced Wasserstein distance which is a metric from Optimal Transport (OT). To contextualize the computational performance of our proposed method for other OT-based detection methods such as OT-CPD, and e-divisive, we report the average wall-clock time and standard deviation in \Cref{tab:MNIST_runtimes}. 
\begin{table}[!htb]
	\caption{Runtime comparison of SWCPD and OT-based CPD methods}
	\label{tab:MNIST_runtimes}
	\begin{subtable}{.35\linewidth}
		\centering
		\caption{Average runtimes and AUC scores for OT-baseline methods}
		\adjustbox{max width=0.99\columnwidth}{
		\begin{tabular}{lll}
			\toprule
			Method & Runtime (s) & AUC\\
			\midrule
			OT-CPD & $425 \pm 150 $ & $0.95 \pm 0.05$\\
			e-divisive & $5.9 \pm 3.1$& $0.96 \pm 0.05$\\
			\bottomrule
		\end{tabular}
	}
	\end{subtable}%
	\hspace{0.1cm}
	\begin{subtable}{.6\linewidth}
		\centering
		\caption{Average runtimes and AUC scores of SWCPD for different numbers of projections $L$}
		\adjustbox{max width=0.99\columnwidth}{
		\begin{tabular}{lllll}
			\toprule
			$L$ & Runtime (s) & AUC & vs. OT-CPD & vs. e-divisive \\
			\midrule
			$100$&$1.02 \pm 0.2$&$0.87\pm0.1$& $+41,979\%$& $+478\%$\\
			$500$&$2.81 \pm 0.6$&$0.95 \pm 0.1$&$+15,024\%$&$+109\%$ \\
			$1000$&$3.33 \pm 0.74$&$0.95 \pm 0.1$&$+12,662\%$&$+77\%$ \\
			$5000$&$6.21 \pm 1.3$&$0.97 \pm 0.07$&$+6,743\%$&$-5\%$ \\
			\bottomrule
		\end{tabular}
	}
	\end{subtable} 
\end{table}
\subsection{HAR}
HAR \citep{anguita2013public} was collected from $30$ volunteers who performed six daily activities (walking, sitting, etc.) while wearing a smartphone on their waist recording various measurements at $50$ Hz.
Naturally, the change points are given when an activity changes. In total, there are $10.299$ observation of $d=561$ features.
\subsection{HASC}\label{app:HASC}
The dataset consists of distinct multimodal multivariate time series monitoring human motion of different daily activities. The data was collected as part of the Human Activity Segmentation Challenge \citep{segmentation_challenge} using built-in smartphone sensors. In total, the dataset has 250 time series consisting of 12 different measurements sampled at 50 Hz, where the ground truth change points were independently annotated using video and sensor data. We selected 25 instances covering 17 indoor and 8 outdoor activities for various numbers of segments ranging from 1 to 6. We selected 8 instances with one segment, thus zero change points to asses the sensitivity and robustness of each method when the unknown underlying distribution does not change over time. Furthermore, we see that the average number of observations increases with more segments in the selected data see \cref{fig:Ablation_HASC}. \begin{figure}
    \centering
    \includegraphics[width=0.5\textwidth]{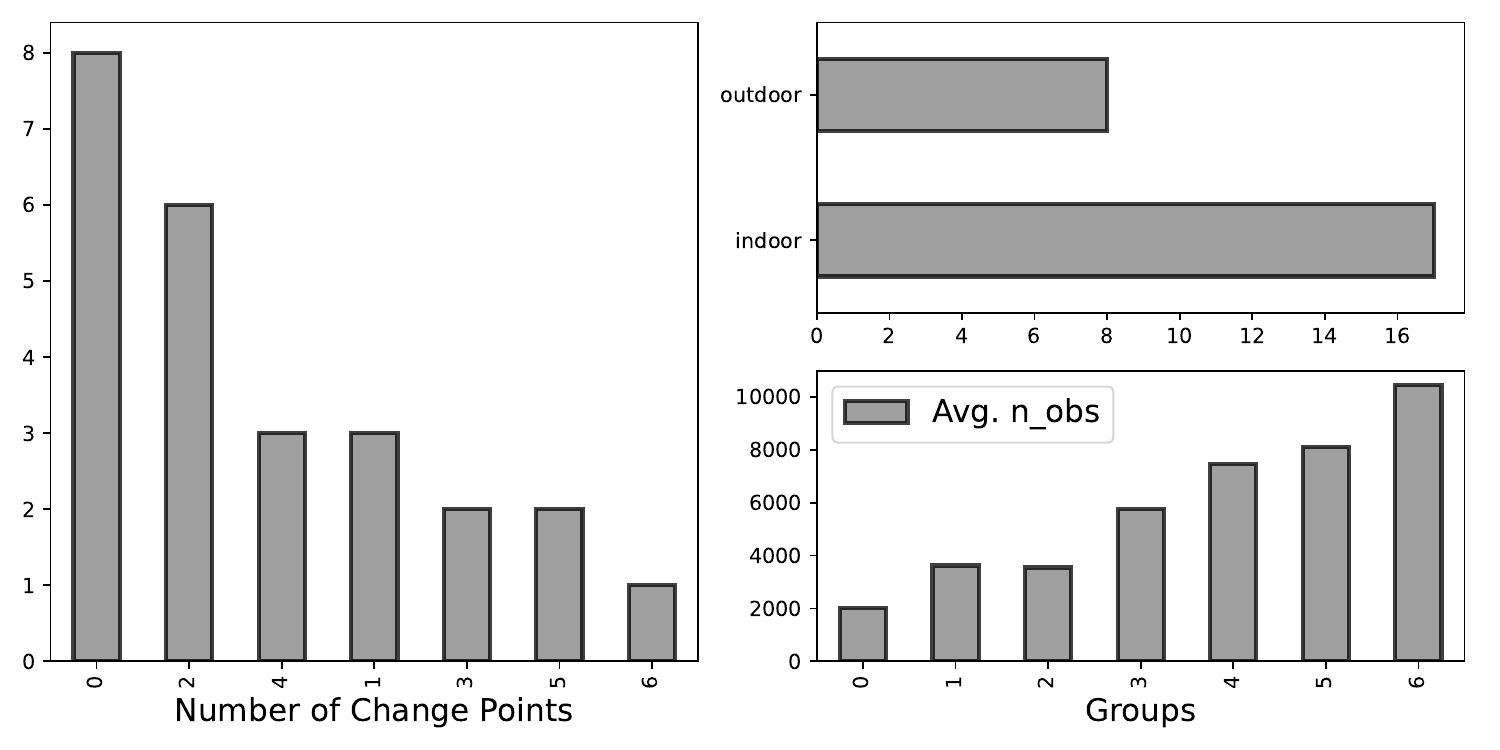}
    \caption{Summary of the data used for the change point detection experiments of HASC dataset.}
    \label{fig:Ablation_HASC}
\end{figure} 
We specifically considered instances with a single segment to assess each method's robustness to false positives. 
\Cref{fig:HASC_TS_example} illustrates the time series of an outdoor activity of a person. In this case, the person is performing three different stretches (standing adductor left, squat stretch for adductors, hamstring stretch right)
\begin{figure}
    \centering
    \includegraphics[width=0.9\textwidth]{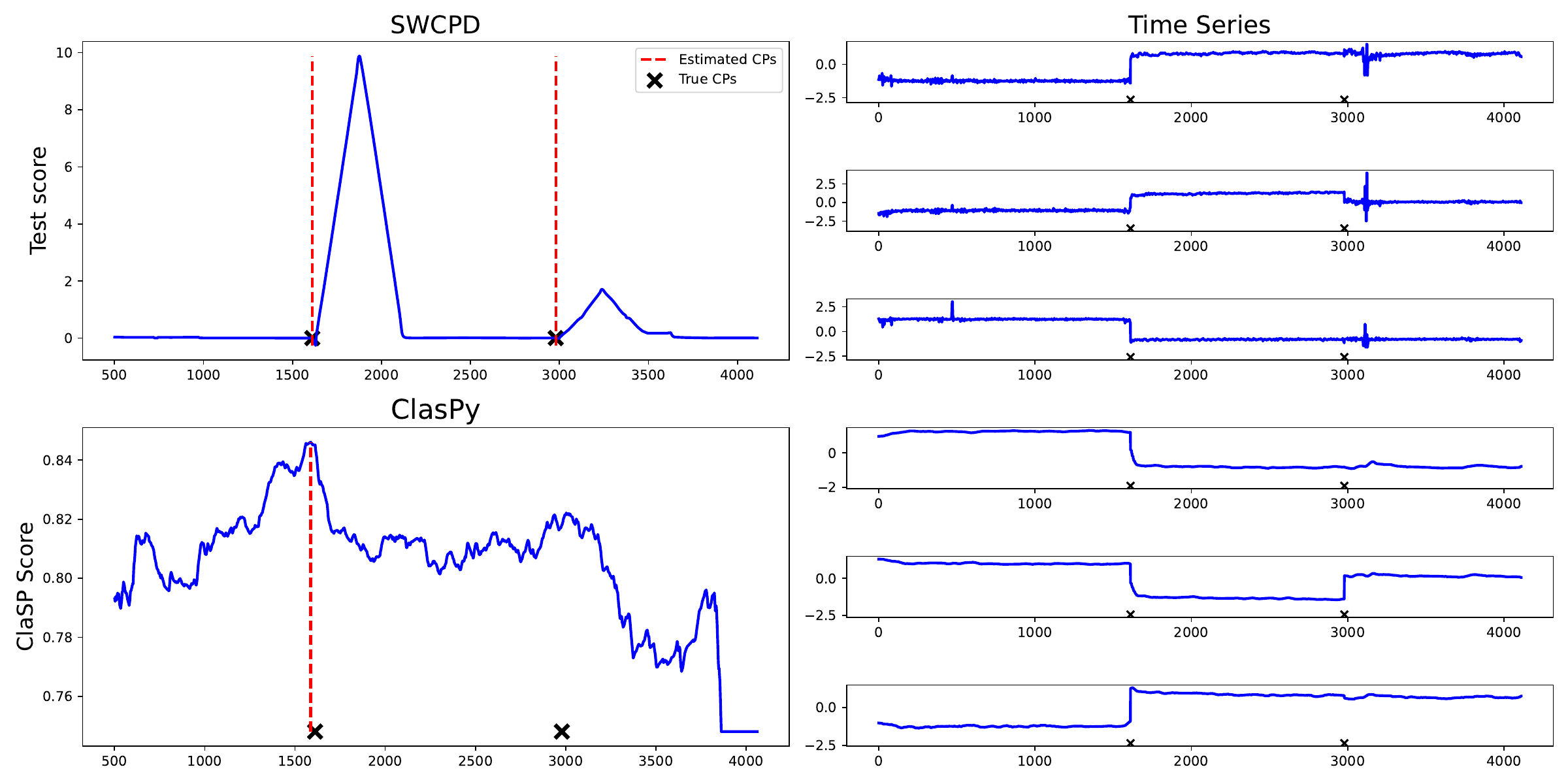}
    \caption{Comparison of Test scores obtained using SWCPD and ClaSP on subject number 243 (left hand side), and corresponding time series (right hand side).}
    \label{fig:HASC_TS_example}
\end{figure}
\Cref{fig:Results_TAU_SENSI_HAR} shows AUC scores of our proposed method and baseline methods for five different annotation margins $\tau \in [25,50,100,150,200]$, such that if the annotated change point is at least $\tau$ instances away, it is classified as true positive thus contribution to the AUC score. We see that SWCPD shows superior AUC scores for any $\tau$, see \Cref{fig:Results_TAU_SENSI_HAR}.
\begin{figure}
    \centering
    \includegraphics[width=0.5\linewidth]{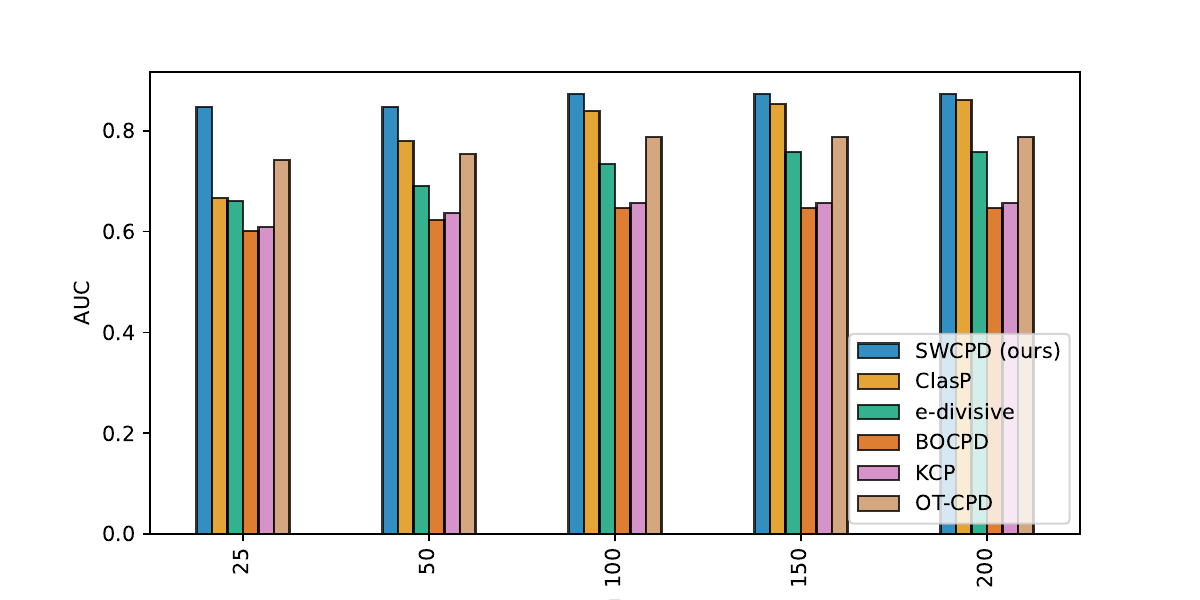}
    \caption{Shows average AUC scores for proposed method and baseline methods on the selected HAR data for different annotation margins $\tau$.}
    \label{fig:Results_TAU_SENSI_HAR}
\end{figure}
\subsection{Occupancy}\label{app:Occ}
 This dataset is commonly used for the evaluation of change point detection methods \citep{Benchmark_1}. Originally, it was introduced in \citep{Occupancy} and captures four different measurements: 1) temperature, 2) humidity level, 3) light, and 4) $\mathrm{CO_2}$.
 
SWCPD is based on the Sliced Wasserstein distance which is a metric from Optimal Transport (OT). To contextualize the computational performance of our proposed method for other OT-based detection methods such as OT-CPD, and e-divisive, we report the average wall-clock time and standard deviation in \Cref{tab:Occ_runtimes}. 
\begin{table}[!htb]
	\caption{Runtime comparison of SWCPD and OT-based CPD methods}
	\label{tab:Occ_runtimes}
	\begin{subtable}{.35\linewidth}
		\centering
		\caption{Average runtimes and AUC scores for OT-baseline methods}
		\adjustbox{max width=0.99\columnwidth}{
			\begin{tabular}{lll}
				\toprule
				Method & Runtime (s) & AUC\\
				\midrule
				OT-CPD & $96.2 \pm 0.23 $ & $0.41 \pm 0.00$\\
				e-divisive & $175.3 \pm 0.19$& $0.34 \pm 0.00$\\
				\bottomrule
			\end{tabular}
		}
	\end{subtable}%
	\hspace{0.1cm}
	\begin{subtable}{.6\linewidth}
		\centering
		\caption{Average runtimes and AUC scores of SWCPD for different numbers of projections $L$}
		\adjustbox{max width=0.99\columnwidth}{
			\begin{tabular}{lllll}
				\toprule
				$L$ & Runtime (s) & AUC & vs. OT-CPD & vs. e-divisive \\
				\midrule
				$100$&$28.2 \pm 0.8$&$0.48\pm0.0$& $+241\%$& $+519\%$\\
				$500$&$59.4 \pm 1.25$&$0.58 \pm 0.0$&$+62\%$&$+195\%$ \\
				$1000$&$66.6 \pm 1.55$&$0.59 \pm 0.0$&$+45\%$&$+163\%$ \\
				\bottomrule
			\end{tabular}
		}
	\end{subtable} 
\end{table}

\section{Additional Experiments}\label{secA1}
All experiments were conducted on a machine equipped with an AMD Ryzen 7 5700X CPU, 32 GB of RAM, and a RTX 3060 GPU.
\subsection{Explainability}
\subsubsection{MNIST}\label{VIT_details}

\textbf{Vision Transformer.}
We employ a Vision Transformer (ViT) model for image classification on the MNIST dataset. The model processes input images of size \(28 \times 28\) pixels, which are divided into non-overlapping patches of size \(4 \times 4\), resulting in 49 patches. Each patch is linearly embedded into a 64-dimensional feature space. The transformer consists of 6 layers, each employing multi-head self-attention with 8 heads and a feed-forward network with a hidden dimension of 128. We apply a dropout rate of 0.1 during the embedding and transformer layers to prevent overfitting. Since MNIST images are grayscale, the model is configured to accept single-channel input. The data was split into $90\%$ training set of which $10\%$ into the validation set, while we used the additional $10\%$ for testing. We use Adam with $\lambda=0.001$ for training over 15 epochs with a batch size of 64.
\begin{table}[t]
\caption{Parameter setting ViT}
\label{ViT Table}
\vskip 0.15in
\begin{center}
\begin{small}
\begin{sc}
\begin{tabular}{cccccccc}
\toprule
    Batch size & Epochs& LR & PatchSize & dim &depth &heads &MLP\\
\midrule
     64 & 15 & $1\times 10^{-4}$& 4 &64 & 6 & 8 & 128\\
     \bottomrule
\end{tabular}
\end{sc}
\end{small}
\end{center}
\vskip -0.1in
\end{table}
\begin{figure}
    \centering
    \includegraphics[scale=0.8]{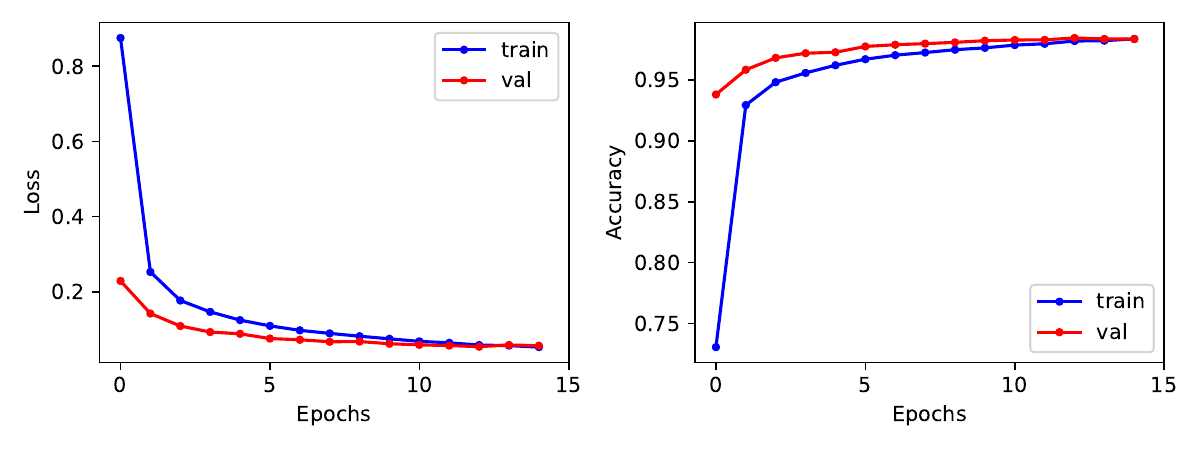}
    \caption{Illustrates Train and validation curves of loss and accuracy over 15 epochs for ViT model.}
    \label{fig:VIT_Experiment_metrics}
\end{figure}

\textbf{CNN.}
We use a simple LeNet-5 \citep{lenet5} as a benchmark CNN to investigate model explanations under drifts on MNIST. We use the same train-test split as for the ViT model and Adam optimizer with step size $\lambda=0.001$. We repeat the same procedure as for the ViT and introduce drifts and investigate the differences in the feature attrituions using SWD, and SoTA explanations methods IG, GS, and DL. From \cref{fig:CNN_Explainability}, we see that all reference methods align with feature attributions, and hence show the same pattern for differences of before and after drift. Although, all explanation methods align with the most significant feature changes, the pixelwise distance based approach (SWD) narrows them down the most. This can also be seen in \cref{Adv_examples_LENET}, which highlights the differences of adversarial examples changing the model output between two given classes, as SWD shows a strong alignment.  
\begin{figure}
    \centering
    \includegraphics[scale=0.4]{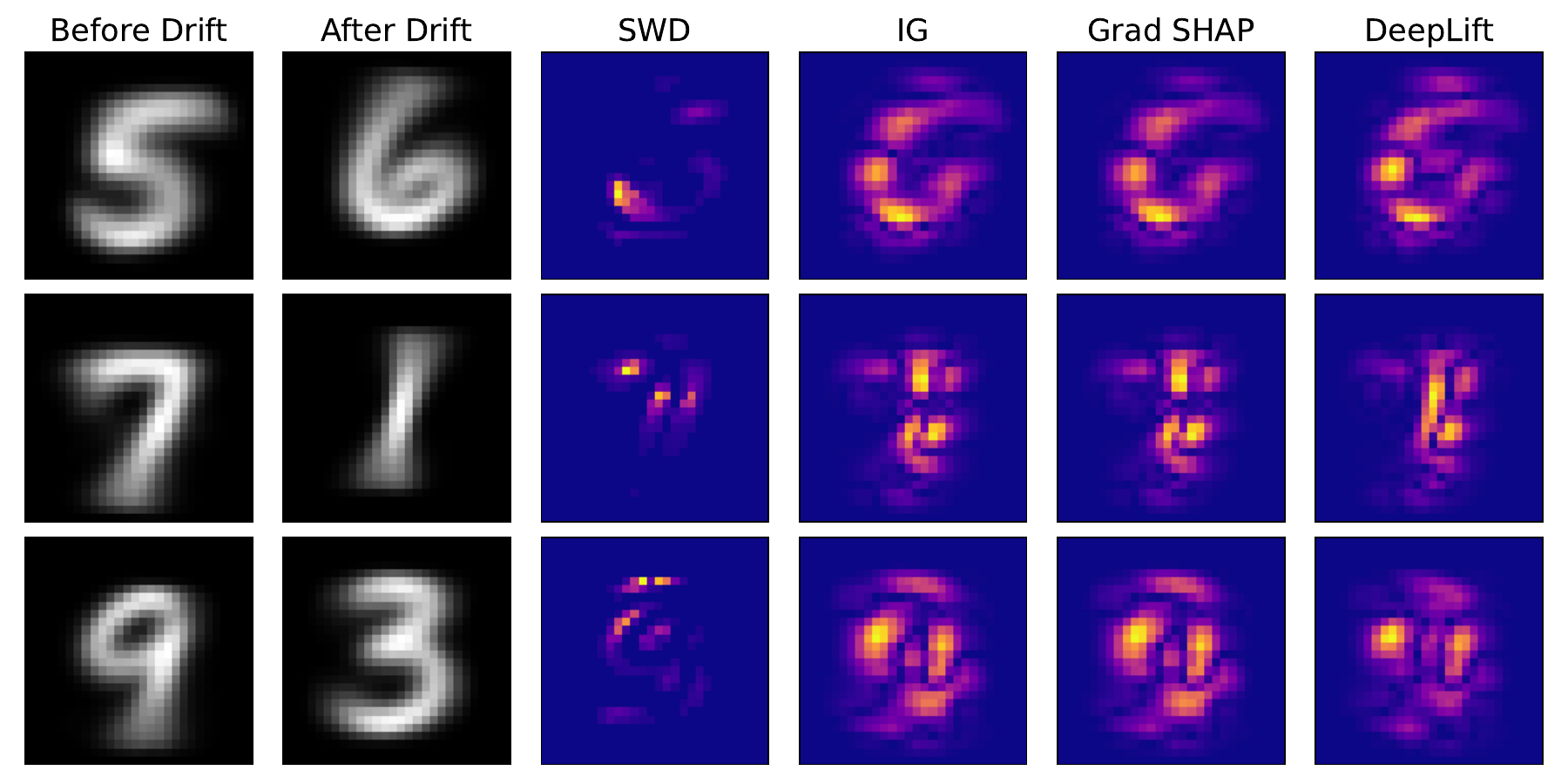}
    \caption{Shows the absolute difference of mean feature attributions for three different drifts and reference methods IG, GS, and DL.}
    \label{fig:CNN_Explainability}
\end{figure}
\begin{figure}
    \centering
    \includegraphics[scale=0.4]{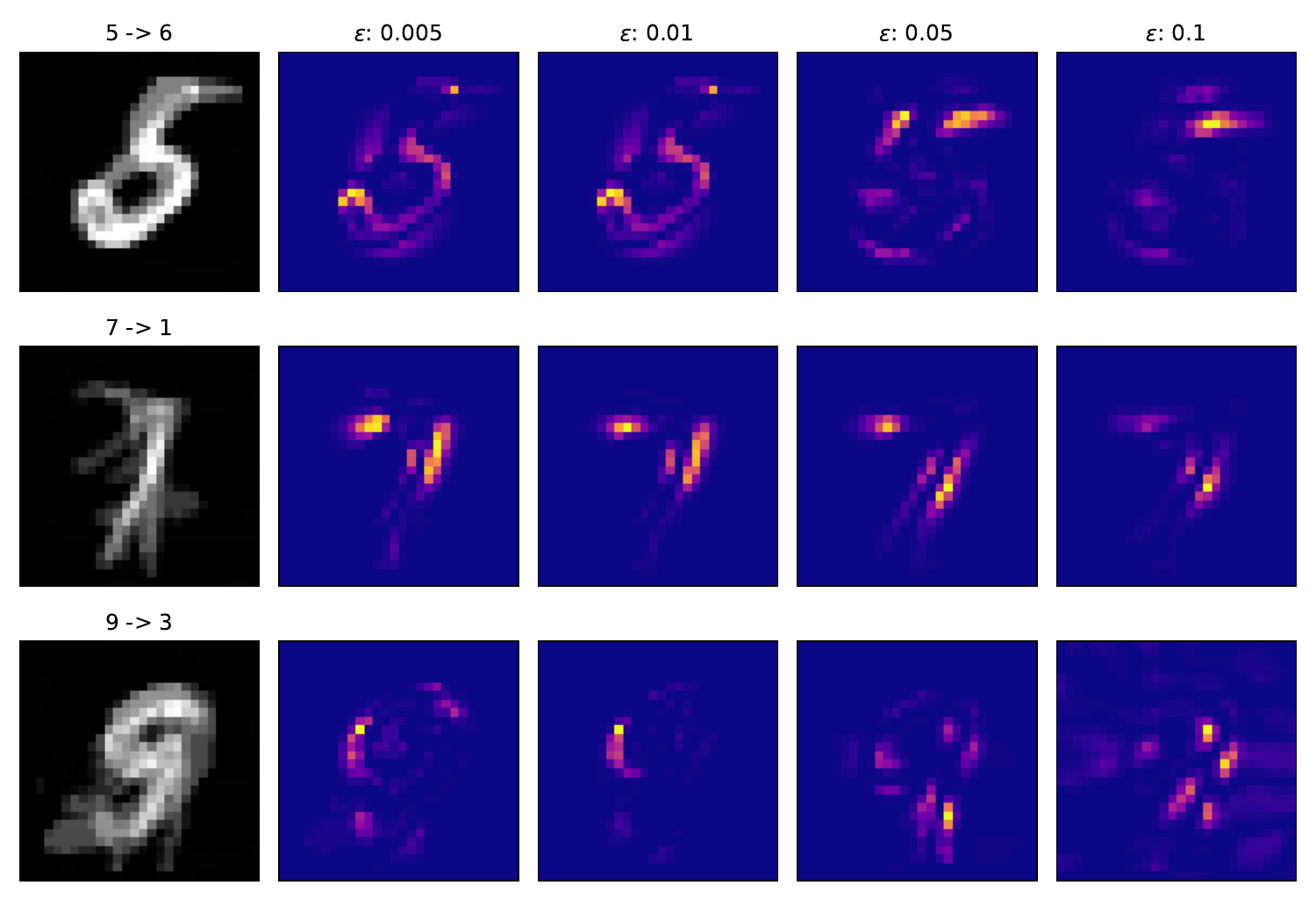}
    \caption{Shows mean adversarial examples (left) which changes the model (CNN) output from $5 \to 6$, $7 \to 1$, and $9 \to 3$ using FGSM for different $\epsilon$, and $L_{4}$-norm between mean adversarial example and non-adversarial example}
    \label{Adv_examples_LENET}
\end{figure}

\subsection{Uncertainty quantification}
We investigate the asymptotic behaviour of the confidence intervals obtained by \cref{prop:CI-Gamma} for $X \sim \Gamma(2,1)$ for various sample sizes and calculate the average confidence intervals for 30 different random samples $X_{n}$ with sample size $n$. For an increasing sample size, the confidence intervals for both parameters shrinks and is centered around the true parameters as expected since sample mean and variance are consistent, see \cref{fig:CI_Example_appendix}.
\begin{figure}
    \centering
    \includegraphics[scale=0.5]{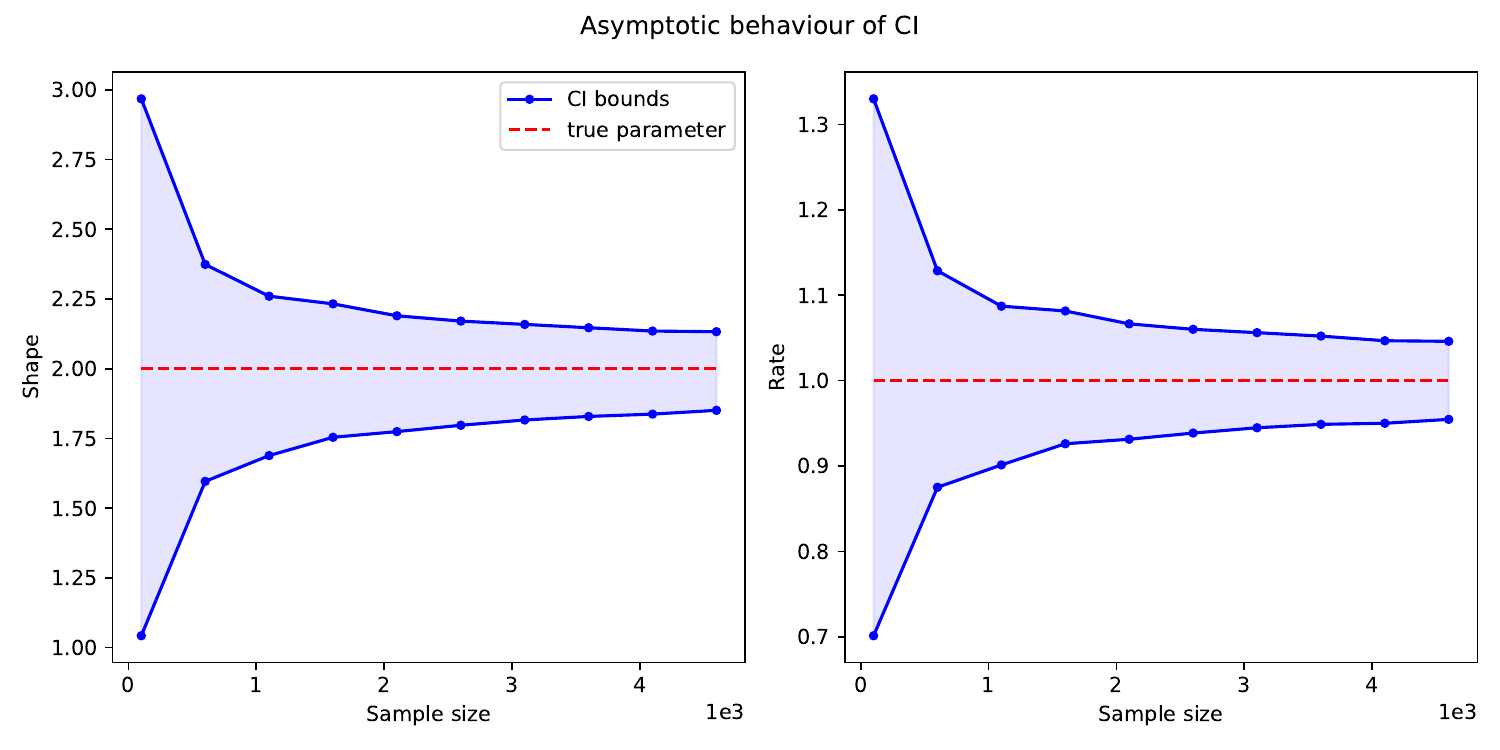}
    \caption{Shows the lower and upper bound of confidence interval (\cref{eq::CI-parameter}) for MoM estimator $\hat{\alpha},\hat{\beta}$ averaged over 30 experiments for equidistant sample sizes from $n=100,\dots,5000$.}
    \label{fig:CI_Example_appendix}
\end{figure}
\subsection{Distribution of random projections}
For the numerical study of the distribution of $w_{2}^{2}(\theta):\theta \mapsto \text{W}_{2}(\mathbb{P}^{\theta},\mathbb{Q}^{\theta})$, we consider two sample sets $X,Y$ each consisting of 200 MNIST samples with gray-scaled images from the same class respectively. For this example we set the class of each sample from $X$ to 1, and $Y$ to $7$. We calculated the SWD between both samples for different numbers of random projections ranging from $L=100,500,1000,5000$. We then constructed the MoM esitmates of a Gamma distribution based on the set of random projection obtained. Furthermore, we calculated a Kernel density estimation for the random projections itself. This shows that using a Gamma distribution indeed fits the data obtained. Additionally, we complared the sampled quantiles and the theoretical quantiles of the random projections and MoM fitted Gamma distribution to asses the goodness of fit. The result is summarize in \cref{fig:GAMMA_FIT_MNIST}, as expected, we see that as the number of projection increases, we obtain a better fit.
\begin{figure}
    \centering
    \includegraphics[scale=0.5]{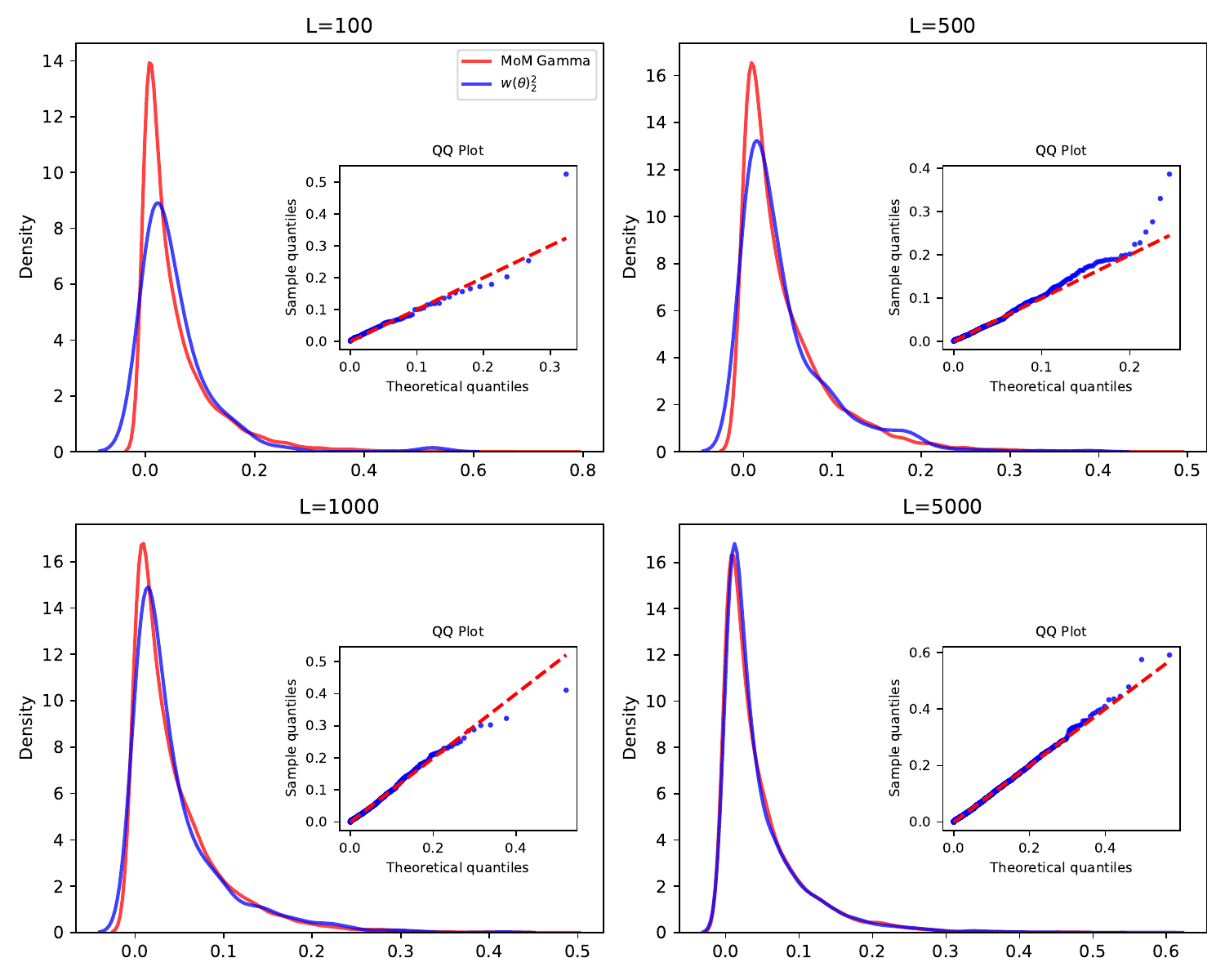}
    \caption{Shows a Kernel density estimation of a gamma density using the MoM estimated parameters (red line) for the random projection for various number of projections $L=100,500,1000,5000$, and the KDE of random projections (blue line) itself between two samples from MNIST.}
    \label{fig:GAMMA_FIT_MNIST}
\end{figure}
While \cref{fig:GAMMA_FIT_MNIST}, shows the asymptotic behaviour given by Corollary~\ref{cor:gamma} of the linear random projections of the Sliced Wasserstein distance, we observed that it also holds for lower-dimensional data, e.g. simulated synthetic data. Consider $x \in \mathbb{R}^{d}$, we fix a projection direction $\theta_{l} \sim \mathcal{U}(S^{d-1})$ and consider a sample set $X=(x_1,x_2,\dots,x_n)$. We set $z_{l}=\langle X,\theta_{l}\rangle$, where $z_{l}$ is normal due to the CLT for $d \to \infty$. We simulated $x$ according to $d$ independent exponential distributions $\lambda=1$ and applied the Sharpio-Wilk test \citep{shapiro1965analysis} to asses wheter the projected samples can be considered normal distributed. In \cref{tab:normality}, we report the average $p$-values projections obtained using $L \in [100,500,1000]$ for various dimensions $d$.
\begin{table*}[t]
\caption{Average $p$-values obtained using Sharpio-Wilk test}
    \vskip 0.1in
    \begin{center}
    \label{tab:normality}
\adjustbox{max width = 0.9\columnwidth}{
\begin{tabular}{cccc}
\toprule
 & \multicolumn{3}{c}{$L$}
\\\cmidrule(lr){2-4}
$d$ & 100 & 500 & 1000\\
\midrule
         10  & 0.44 (\checkmark)  & 0.065 (\checkmark) & 0.005 (-) \\
20 & 0.5 (\checkmark) & 0.3 (\checkmark) & 0.2 (\checkmark)\\
30 & 0.5 (\checkmark) & 0.4 (\checkmark) & 0.3 (\checkmark) \\
60 & 0.5 (\checkmark) & 0.5 (\checkmark) & 0.5 (\checkmark) \\
100 & 0.5 (\checkmark) & 0.5 (\checkmark) & 0.5 (\checkmark)\\
\bottomrule
\end{tabular}
}
\end{center}
\end{table*}

\noindent \textbf{Approximation Error:} We now report the Mean Absolute Error (MAE) between the theoretical quantiles and observed quantiles. The theoretical quantiles are derived from a Gamma distribution based on the MoM estimates from the random projections involved in the calculation of the Sliced Wasserstein distance. The observed quantiles are calculated based on the empirical distribution of the random projections. For each dimension, we simulate two independent datastreams, each consisting of $d$ independent Gaussian distributions with a uniformly sampled mean. We vary $d$ and $L$, use fixed random seeds, and report the results for 10 trials in \Cref{tab:MAE}.
 \begin{table}[!ht]
     \centering
      \caption{Shows MAE between theoretical and observed quantiles of a Gamma distribution derived from 2-Wasserstein distance between random projections.}
     \label{tab:MAE}
     \begin{tabular}{rccc}
     \toprule
     $d$ & $L=100$ & $L=1.000$ & $L=10.000$ \\
     \midrule
          $5$ &$1.12 \pm 0.17$ & $0.35 \pm 0.03$ & $0.37 \pm 0.01$  \\
          $10$ & $0.345 \pm 0.06$ & $0.31 \pm 0.06$ & $0.15 \pm 0.01$  \\
          $20$ &$0.401 \pm 0.07$ & $0.16 \pm 0.03$ & $0.02 \pm 0.01$ \\
          $100$ &$0.291 \pm 0.05$ & $0.11 \pm 0.01$ & $0.04 \pm 0.01$  \\
           $200$ &$0.298 \pm 0.05$ & $0.13 \pm 0.04$ & $0.04 \pm 0.01$ \\
    \bottomrule
     \end{tabular}
 \end{table}

\subsection{Stopping criterion}\label{stopping-criteria}
In \Cref{algo2}, we update the removed feature from $Y$ with samples $X$. Suppose, we have observations $X_1,\dots,X_{N} \sim P_X$, and $Y_{1},\dots,Y_{N} \sim P_Y$. Without any drifted components, we have $P_X=P_Y$ with \begin{align*}
    m&=\mathbb{E}[X]=\mathbb{E}[Y] \in \mathbb{R^{d}} \\
    \Sigma&=\text{Cov}(X)=\text{Cov}(Y) \in S^{d}_{+} \,
\end{align*} where $m_X=\frac{1}{N}\sum_{i}^{N}X_{i}$, and $m_Y=\frac{1}{N}\sum_{i=1}^{N}Y_{i}$ denote the sample means and $S^{d}_{+}$ denotes the set of symmetric p.s.d. $d\times d$ matrices. We consider \begin{equation*}
    ||D||=||m(X)-m(Y)||,
\end{equation*}
then 
\begin{equation*}
    \mathbb{E}[||D||] \leq \sqrt{\frac{2}{N}tr(\Sigma)}
\end{equation*}
Since we have $D\sim \mathcal{N}(0,\frac{2}{N}\Sigma)$, we can decompose $\Sigma=U \Lambda U^{T}$. Then with $Z=U^{T}D$, it follows $Z \sim \mathcal{N}(0,\frac{2}{N}\Lambda)$. Thus $||D||^{2}=\sum_{i=1}^{d}\frac{2}{N}\lambda_{i} \chi_{1}^{2}$, with $\chi_{1}^{2}$ denotes a chi-squared distribution with one degree of freedom. Note that $\text{tr}(\Sigma)=\sum_{i=1}^{d}\lambda_{i}$, where $\lambda_{i}$ is the $i$-th eigenvalue for $i=1,\dots,d$. Therefore, we have \begin{equation*}
    \mathbb{E}||D||^{2}=\frac{2}{N}tr(\Sigma),
\end{equation*}
applying Jensen inequality yields
\begin{equation*}
    \mathbb{E}[||D||] \leq \sqrt{\frac{2}{N}tr{(\Sigma})}.
\end{equation*}
\subsection{Sensitivity Analysis}\label{app:Sensitvity_Algo2/1}
In the following, we investigate the sensitivity of \Cref{algo2}. Shows the sensitivity of explanations from \Cref{algo2} to changes in the number of dimensions $d$, the number of samples $N$, the number of random projections $L$, and the quantile-level $q$. 

Default parameters are $N=500, L=1000,q=0.95,d=50$, the underlying stream consists of two Mixture distributions (Gaussian/Exponential 50/50) with means randomly sampled in $(-3,3)$ and variance $I_d$. We randomly select 5/10/15 components for which the mean is randomly changed by an offset uniformly sampled in $(-1,1)$.
In \Cref{Sensi_Figure}, we visualize the norm of the mean differences and the removed components and the iterative procedure of \Cref{algo1} for $20$ removals in the proposed \Cref{algo2}. All components in the gray area would not be selected under the proposed stopping criteria. All results are averaged over five different runs with fixed random seeds. The red line indicates the ground truth features which exhibit a drift starting with $5$ (left), $10$ (middle), and $15$ (right) for each subplot respectively.
\begin{figure}[h]
  \begin{subfigure}[t]{.49\textwidth}
    \centering
    \includegraphics[width=\linewidth]{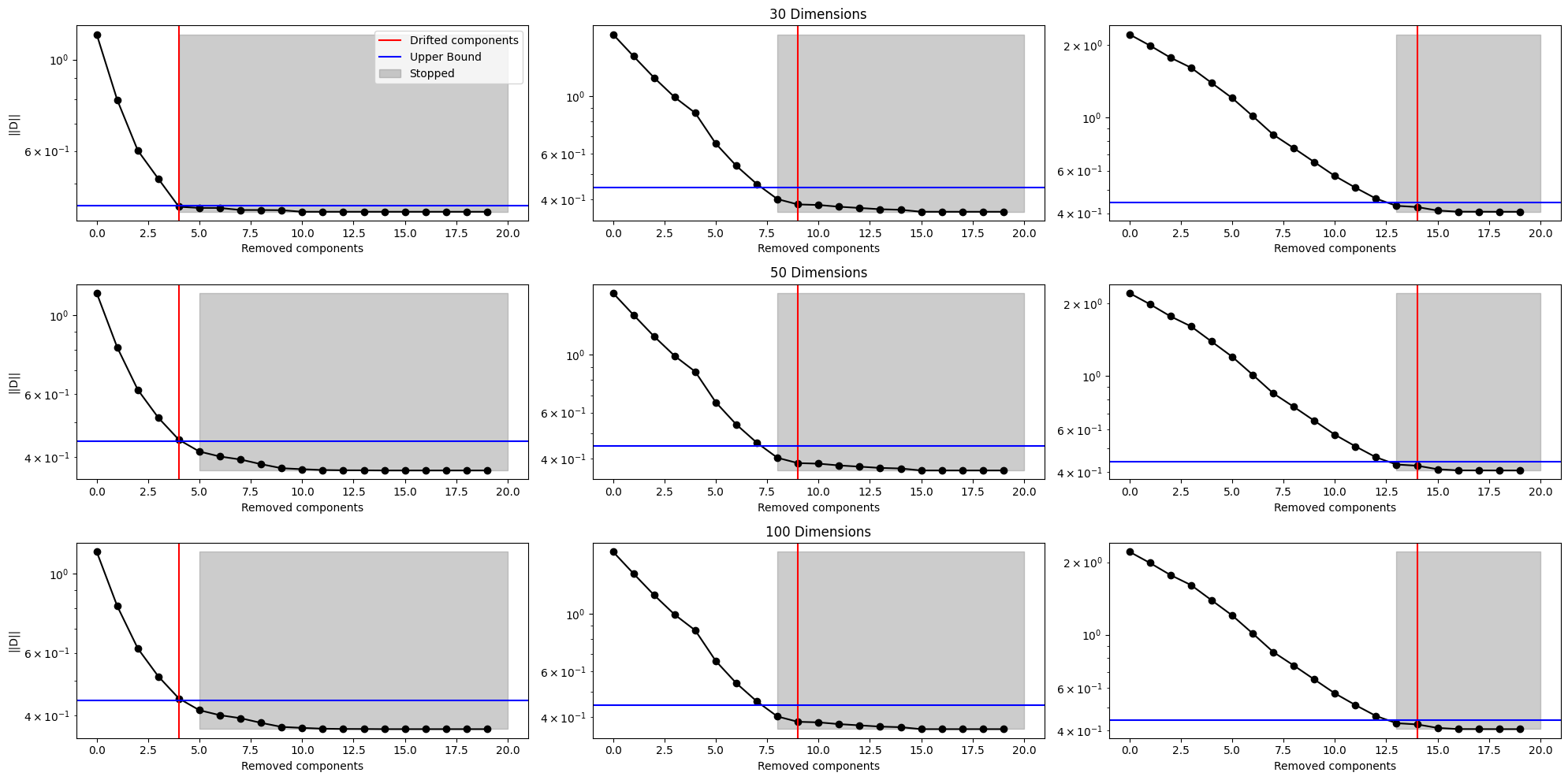}
    \caption{\textbf{Change in dimension}}
  \end{subfigure}
  \hfill
  \begin{subfigure}[t]{.49\textwidth}
    \centering
    \includegraphics[width=\linewidth]{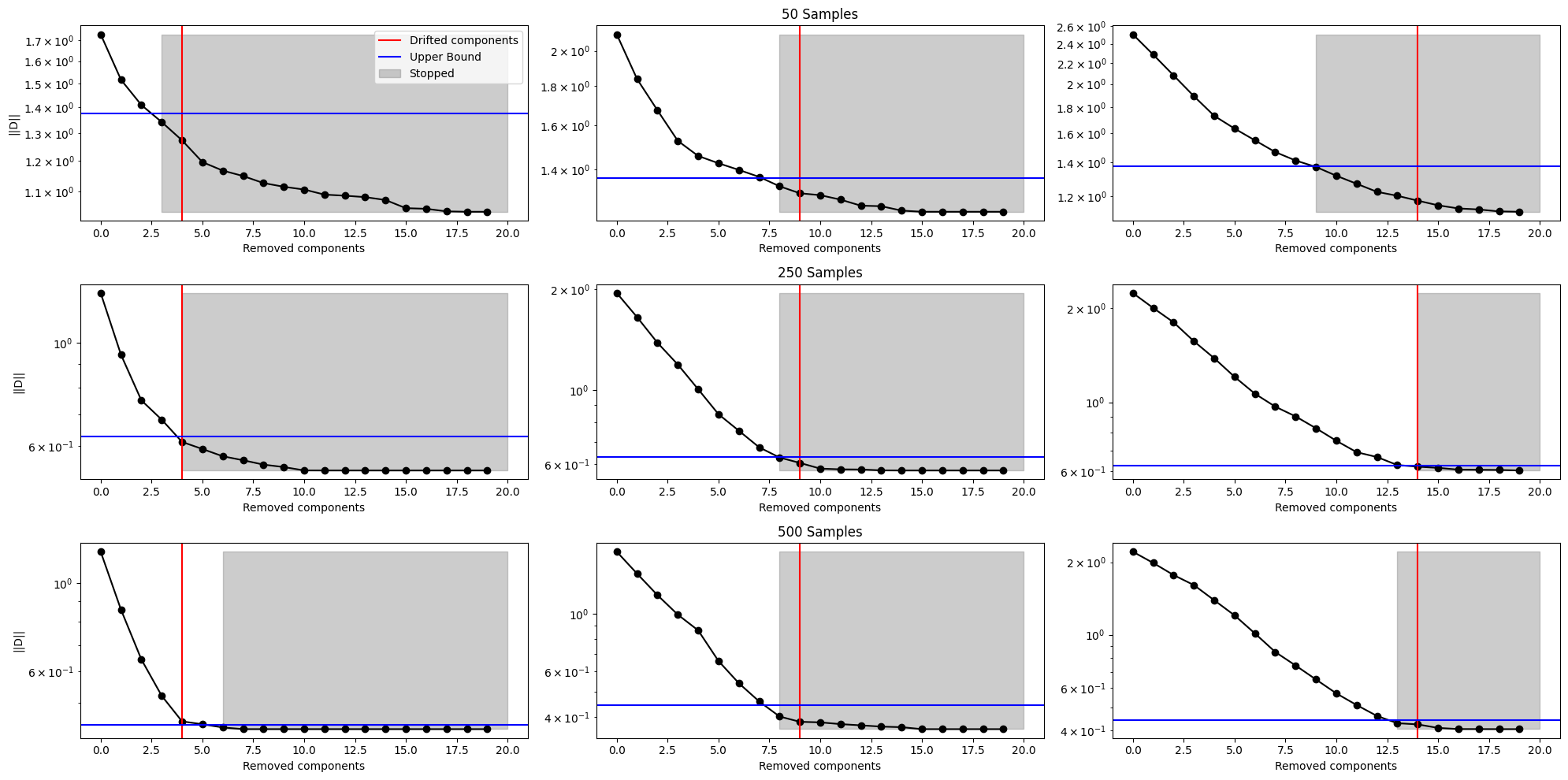}
    \caption{\textbf{Change in number of samples}}
  \end{subfigure}

  \medskip

  \begin{subfigure}[t]{.49\textwidth}
    \centering
    \includegraphics[width=\linewidth]{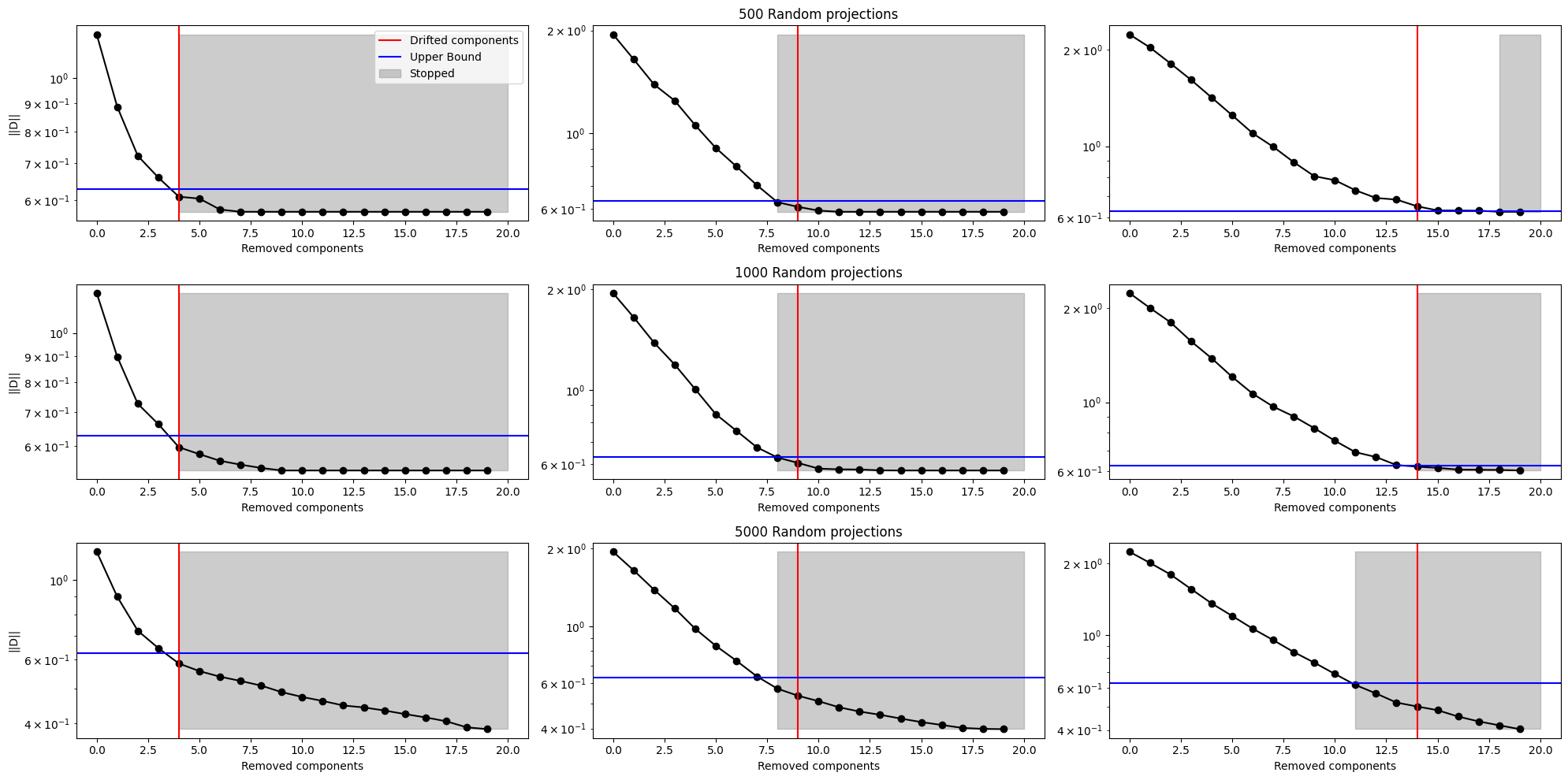}
    \caption{\textbf{Change in number of random projections}}
  \end{subfigure}
  \hfill
  \begin{subfigure}[t]{.49\textwidth}
    \centering
    \includegraphics[width=\linewidth]{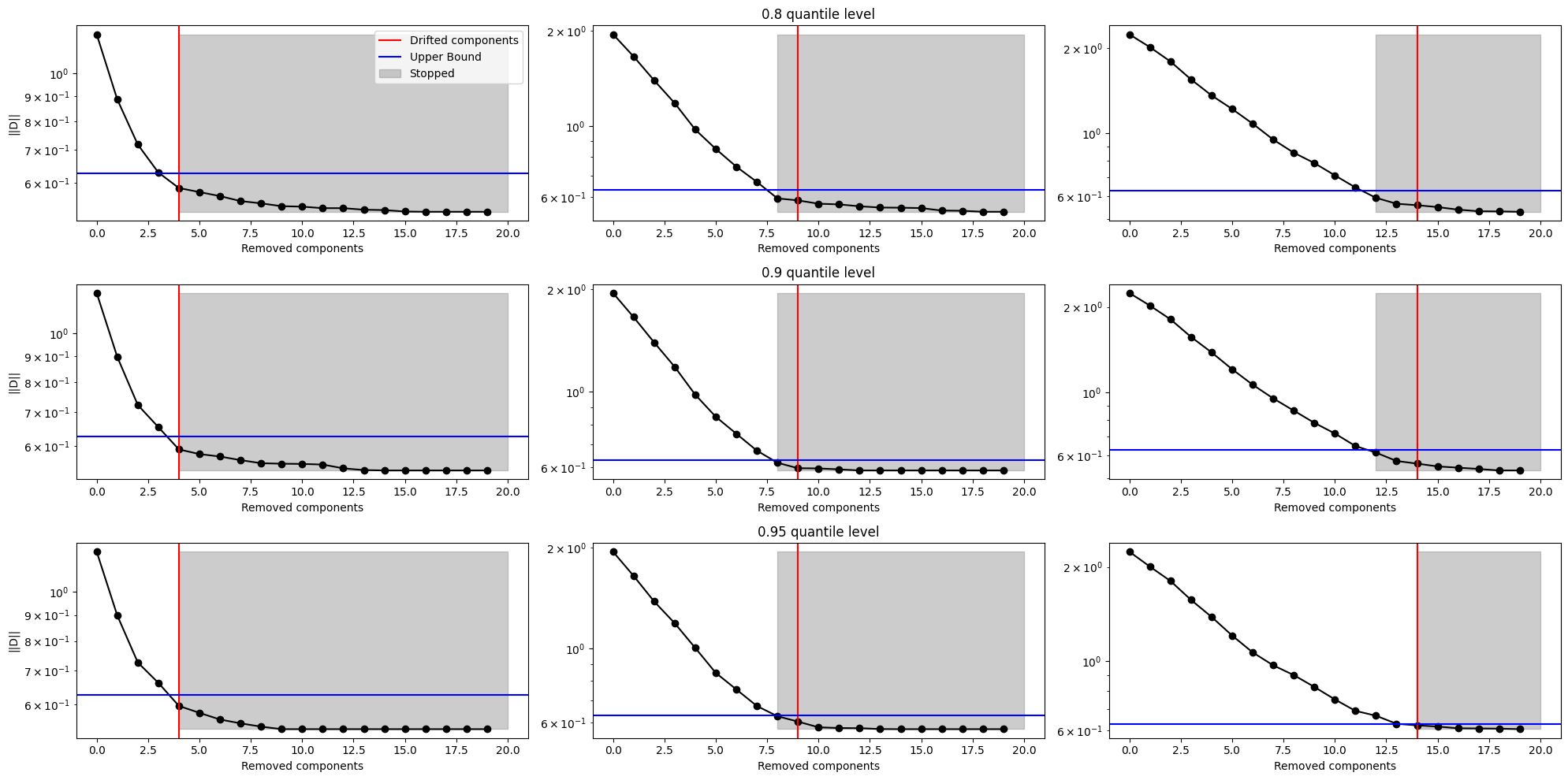}
    \caption{\textbf{Change in quantile-level $q$}}
  \end{subfigure}
  \caption{Sensitivity analysis of \Cref{algo2} for a change in (a) dimension, (b) number of samples, (c) number of random projections, and (d) quantile-level.}
    \label{Sensi_Figure}
\end{figure}
\newpage
\section{Omitted Proofs}\label{Sec:Proofs}
\firstthm*

\begin{proof}
Step 1 (Gaussian closed form).
Under (A2), for large $d$ the projected laws are well-approximated by Gaussians
$\mathcal{N}(m_P(\theta),v_P(\theta))$ and $\mathcal{N}(m_Q(\theta),v_Q(\theta))$.
For one-dimensional Gaussians, the squared 2-Wasserstein distance has the exact closed form
\[
W_2^2\!\left(\mathcal{N}(m_1,s_1^2),\mathcal{N}(m_2,s_2^2)\right)
=(m_1-m_2)^2 + (s_1-s_2)^2.
\]
Plugging $m_1=m_P(\theta)$, $m_2=m_Q(\theta)$, $s_1=\sqrt{v_P(\theta)}$, $s_2=\sqrt{v_Q(\theta)}$
yields \eqref{eq:W2-gauss-form} with an error term $r_d(\theta)$ that vanishes in probability
as $d\to\infty$ by the assumed projection-to-Gaussian approximation (A2). This proves \eqref{eq:W2-gauss-form}.
 
Step 2 (Asymptotics of the linear term).
Let $\delta=\mu_P-\mu_Q$. For $\theta\sim\mathrm{Unif}(\mathbb{S}^{d-1})$, one has
$\mathbb{E}[\theta]=0$ and $\mathrm{Cov}(\theta)=\frac{1}{d}I_d$, hence
$\mathbb{E}[\theta^\top\delta]=0$ and $\mathrm{Var}(\theta^\top\delta)=\|\delta\|_2^2/d$.
Under standard spherical CLT conditions (covered by (A4)),
$\sqrt{d}\,\theta^\top\delta \Rightarrow \mathcal{N}(0,\|\delta\|_2^2)$.
 
Step 3 (Asymptotics of the quadratic terms and delta method).
Let $v_P(\theta)=\theta^\top\Sigma_P\theta$ and $v_Q(\theta)=\theta^\top\Sigma_Q\theta$.
Under (A3)--(A4), the centered/scaled quadratic forms are jointly asymptotically normal:
\[
\sqrt{d}\left(
\begin{pmatrix}
v_P(\theta)\\ v_Q(\theta)
\end{pmatrix}
-
\begin{pmatrix}
\mathrm{tr}(\Sigma_P)/d\\ \mathrm{tr}(\Sigma_Q)/d
\end{pmatrix}
\right)
\Rightarrow
\mathcal{N}(0,\Xi)
\]
for some finite $2\times 2$ covariance matrix $\Xi$ (depending on the limiting traces
in (A3) and the cross-trace structure in (A4)).
Now consider the smooth map $h(a,b)=\sqrt{a}-\sqrt{b}$ on $(0,\infty)^2$.
By the multivariate delta method applied at $(\tau_P,\tau_Q)$,
\[
\sqrt{d}\left( \sqrt{v_P(\theta)}-\sqrt{v_Q(\theta)} - (\sqrt{\tau_P}-\sqrt{\tau_Q}) \right)
\Rightarrow
\mathcal{N}(0, \nabla h^\top \Xi \nabla h),
\]
where $\nabla h(\tau_P,\tau_Q)=\left(\frac{1}{2\sqrt{\tau_P}},-\frac{1}{2\sqrt{\tau_Q}}\right)^\top$.
Thus $\sqrt{d}\,u_{2,d}(\theta)$ is asymptotically normal with mean
$m_2=\sqrt{d}\,(\sqrt{\tau_P}-\sqrt{\tau_Q})$ if $\tau_P\ne\tau_Q$ (and $m_2=0$ if $\tau_P=\tau_Q$),
and finite variance given by the delta-method expression above.
 
Step 4 (Joint convergence and quadratic form).
By (A4), the linear form $u_{1,d}(\theta)$ is asymptotically independent of the quadratic-form
fluctuations that drive $u_{2,d}(\theta)$, hence the vector
$(\sqrt{d}u_{1,d}(\theta),\sqrt{d}u_{2,d}(\theta))^\top$ converges jointly to a bivariate normal.
Finally, \eqref{eq:W2-gauss-form} shows that
\[
d\,S_d(\theta) = (\sqrt{d}\,u_{1,d}(\theta))^2 + (\sqrt{d}\,u_{2,d}(\theta))^2 + d\,r_d(\theta).
\]
Since $d\,r_d(\theta)\to 0$ in probability (by (A2) and boundedness in (A3)),
Slutsky's lemma yields that $d\,S_d(\theta)$ converges to a quadratic form in a Gaussian vector,
i.e.\ a generalized chi-square random variable on $\mathbb{R}_+$.
\end{proof}

\begin{proposition}\label{prop:CI-Gamma}
    Suppose some i.i.d. samples $X_{n} = (x_{1},\dots,x_{n})$ with $ x_i \sim \Gamma(\alpha,\beta)$ for $i=1,\dots,n$ with sample mean $\overline{X}_{n}=\frac{1}{n}\sum_{i=1}^{n}x_{i}$ and sample variance $S_{n}^{2}=\frac{1}{n-1}\sum_{i=1}^{n}(x_{i}-\overline{X}_{n})^{2}$. Then, the two-tailed confidence intervals for confidence level $p$ of the Method of Moments (MoM) estimates $\widehat{\alpha},\widehat{\beta}$ are \begin{equation}\label{eq::CI-parameter}
    \begin{aligned}
        C_p(\widehat{\alpha}) &= \left[\widehat{\alpha}-z_{\frac{q}{2}}\cdot\sqrt{\text{\emph{Var}}(\widehat{\alpha})},\widehat{\alpha}+z_{\frac{q}{2}}\cdot\sqrt{\text{\emph{Var}}(\widehat{\alpha})}\right] \\
        C_p(\widehat{\beta}) &= \left[\widehat{\beta}-z_{\frac{q}{2}}\cdot\sqrt{\text{\emph{Var}}(\widehat{\beta})},\widehat{\beta}+z_{\frac{q}{2}}\cdot\sqrt{\text{\emph{Var}}(\widehat{\beta})}\right]
    \end{aligned}
    \end{equation}
     where $z_{\frac{q}{2}}$ is the z-value of a standard normal distribution for confidence level $q$, and
     \begin{align*}
        \text{\emph{Var}}(\hat{\alpha})\approx \frac{6\alpha^{2}}{n}, \quad
         \text{\emph{Var}}(\hat{\beta})\approx \frac{\beta^{2}+2\alpha\beta^{2}}{n\alpha}
     \end{align*}
\end{proposition}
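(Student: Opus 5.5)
The plan is a first-order delta method on the moment map $(\overline{X}_n,S_n^2)\mapsto(\widehat\alpha,\widehat\beta)=(\overline{X}_n^2/S_n^2,\ \overline{X}_n/S_n^2)$, followed by Wald intervals. The approximations behind ``$\approx$'' are taken to be those of normal theory, which is the only reading under which the stated constants come out. Write $\mu=\alpha/\beta$ and $\sigma^2=\alpha/\beta^2$. By the CLT and consistency of $\overline{X}_n$ and $S_n^2$, both $\widehat\alpha$ and $\widehat\beta$ are asymptotically normal around $(\alpha,\beta)$. Plugging consistent estimates into the variances and invoking Slutsky then gives intervals of the displayed form $\widehat\alpha\pm z_{q/2}\sqrt{\mathrm{Var}(\widehat\alpha)}$ and $\widehat\beta\pm z_{q/2}\sqrt{\mathrm{Var}(\widehat\beta)}$. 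So everything reduces to producing the two variance approximations.

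For the variances I will linearise on the log scale:
\[
\log\widehat\alpha=2\log\overline{X}_n-\log S_n^2,\qquad \log\widehat\beta=\log\overline{X}_n-\log S_n^2 .
\]
This gives $\mathrm{Var}(\widehat\alpha)/\alpha^2\approx 4\,\mathrm{Var}(\overline{X}_n)/\mu^2+\mathrm{Var}(S_n^2)/\sigma^4-4\,\mathrm{Cov}/(\mu\sigma^2)$, and the analogous expression for $\widehat\beta$ with coefficient $1$ in place of $4$ and $-2\,\mathrm{Cov}/(\mu\sigma^2)$ for the cross term. Next I impose the normal-theory simplifications: $\mathrm{Cov}(\overline{X}_n,S_n^2)\approx0$ and $\mathrm{Var}(S_n^2)\approx 2\sigma^4/n$. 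Together with the exact identity $\mathrm{Var}(\overline{X}_n)/\mu^2=\sigma^2/(n\mu^2)=1/(n\alpha)$, these give
\[
\mathrm{Var}(\widehat\beta)\approx\beta^2\Big(\frac{1}{n\alpha}+\frac{2}{n}\Big)=\frac{\beta^2+2\alpha\beta^2}{n\alpha},
\]
which is exactly the stated $\widehat\beta$ formula. The same route yields $\mathrm{Var}(\widehat\alpha)\approx\alpha^2\big(\tfrac{4}{n\alpha}+\tfrac{2}{n}\big)=\tfrac{2\alpha^2+4\alpha}{n}$.

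The main obstacle is reconciling this with the stated $6\alpha^2/n$. The two expressions coincide exactly at $\alpha=1$, where $4/(n\alpha)$ equals $4/n$. Moreover $2\alpha^2+4\alpha\le 6\alpha^2$ for all $\alpha\ge1$, so $6\alpha^2/n$ is a conservative surrogate in that regime: the resulting interval $C_p(\widehat\alpha)$ has asymptotic coverage at least the nominal level. I would therefore state explicitly that the $\widehat\alpha$ formula is read as this conservative, normal-theory approximation, replacing $\mathrm{Var}(\overline{X}_n)/\mu^2$ by its value $1/n$ at $\alpha=1$. That reading is the only sense in which the ``$\approx$'' holds.

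I would also remark that the exact Gamma fourth moment, $\mu_4=3\alpha(\alpha+2)/\beta^4$, and the nonzero covariance $\mathrm{Cov}(\overline{X}_n,S_n^2)=\mu_3/n=2\alpha/(n\beta^3)$ would change the constants. The stated ones are therefore not the sharp asymptotic variances. The proof should flag this as the key modelling step rather than present it as an exact identity.
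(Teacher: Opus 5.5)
Your route is the paper's route. Both run a first-order delta method on $(\overline{X}_n,S_n^2)\mapsto(\overline{X}_n^2/S_n^2,\,\overline{X}_n/S_n^2)$, with $\mathrm{Cov}(\overline{X}_n,S_n^2)$ set to zero and $\mathrm{Var}(S_n^2)\approx 2\sigma^4/n=2\alpha^2/(n\beta^4)$; your log-scale linearisation is equivalent. Your $\widehat\beta$ computation reproduces the paper's.

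For $\widehat\alpha$, your $(2\alpha^2+4\alpha)/n$ is what those assumptions actually give. The first term is $(2\mu/\sigma^2)^2\,\sigma^2/n=4\beta^2\cdot\alpha/(n\beta^2)=4\alpha/n$. The paper reaches $6\alpha^2/n$ only by writing this term as $4\alpha^2/n$. That amounts to using $\mathrm{Var}(\overline{X}_n)=\mu^2/n$, which holds only at $\alpha=1$, and it is exactly the substitution you reverse-engineered. So not obtaining $6\alpha^2/n$ is not a missing idea on your side: that constant does not follow from the argument at all.

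The unsound part of your proposal is the \emph{reconciliation}. Evaluating one term at $\alpha=1$ is not an approximation valid for general $\alpha$. Your conservativeness argument also compares against the normal-theory variance, which is not the asymptotic variance under Gamma sampling.

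With the actual Gamma moments, $\mu_3=2\alpha/\beta^3$ and $\mu_4=3\alpha(\alpha+2)/\beta^4$, you get $\mathrm{Cov}(\overline{X}_n,S_n^2)=\mu_3/n$ and $n\,\mathrm{Var}(S_n^2)\to\mu_4-\sigma^4=(2\alpha^2+6\alpha)/\beta^4$. The gradients $(2\beta,-\beta^2)$ and $(\beta^2/\alpha,-\beta^3/\alpha)$ then give asymptotic variances $2\alpha(\alpha+1)/n$ for $\widehat\alpha$ and $\beta^2(2\alpha+3)/(n\alpha)$ for $\widehat\beta$.

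Hence the interval built with $6\alpha^2/n$ has asymptotic coverage at least nominal exactly when $\alpha\ge 1/2$. Your $\alpha\ge 1$ claim survives only because $2\alpha^2+2\alpha\le 2\alpha^2+4\alpha$. By contrast, the stated $\widehat\beta$ variance $\beta^2(2\alpha+1)/(n\alpha)$ is strictly smaller than the true one for every $\alpha$, so that Wald interval under-covers asymptotically. Your opening claim, that plugging in consistent estimates and invoking Slutsky yields nominal-level intervals of the displayed form, is therefore false for $\widehat\beta$.

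The defensible write-up is to present both normal-theory expressions and say plainly that the $\widehat\beta$ one matches the statement while the $\widehat\alpha$ one does not. Then either replace them with the sharp variances above, or label the proposition explicitly as a heuristic, not something that can be proved as stated.
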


\begin{proof}
    Suppose, we have i.i.d. samples $x_{1},\dots,x_{n} \sim \Gamma(\alpha,\beta)$ which we denote as $X_{n}$. For a Gamma distribution with shape $\alpha$ and rate $\beta$, we have $\mu=\frac{\alpha}{\beta}$ and $\sigma^{2} = \frac{\alpha}{\beta^{2}}$. We write $\overline{X}_{n}=\frac{1}{n}\sum_{i=1}^{n}x_{i}$ for the sample mean and $S_{n}^{2}=\frac{1}{n-1}\sum_{i=1}^{n}(x_{i}-\overline{X}_{n})^{2}$ for the sample variance. Then, we have the following Method of Moment estimates for $\alpha$ and $\beta$
    \[\widehat{\alpha}=\frac{\overline{X}^{2}_{n}}{S_{n}^{2}}, \quad \widehat{\beta}=\frac{\overline{X}_{n}}{S_{n}^{2}}.\]
    By the Central Limit Theorem, we know that for large $n$, the sample mean and variance converges to a normal distribution, with
    \begin{align*}
        \sqrt{n}\left(\widehat{\alpha}\widehat{\beta}^{-1} - \mu \right) &\overset{d}{\longrightarrow} \mathcal{N}\left(0,\sigma^{2}\right)\\
        \sqrt{n}\left(S_{n}^{2} - \sigma^{2} \right) &\overset{d}{\longrightarrow} \mathcal{N}\left(0,\text{Var}(S_{n}^{2})\right)
    \end{align*}
    where, with \emph{Theorem 1} from \cite{cho2008variance}, $\text{Var}(S_{n}^{2})\approx n^{-1}(3\sigma^{2}+2\sigma^{2}\mu{2}-\sigma^{4})=\frac{2\alpha^{2}}{n\beta^{4}}$ for $n \to \infty$. We use the asymptotic normality of sample mean and variance and apply the delta method to derive an approximation of the variance of $\hat{\alpha},\hat{\beta}$. For a smooth differentiable function $g(\theta)$ and a sequence of random variables $\theta_{n}$, if $\sqrt{n}(\theta_{n} - \theta) \overset{d}{\longrightarrow} \mathcal{N}(0,\Sigma)$, then $\sqrt(n)(g(\theta_{n})-g(\theta)) \overset{d}{\longrightarrow} \mathcal{N}\left(0,\nabla g(\theta)^{T}\Sigma \nabla g(\theta)\right)$. Beginning with the estimate for $\alpha$, we set \[g(\overline{X}_{n},S_{n}^{2}) = \frac{\overline{X}_{n}^{2}}{S_{n}^{2}},\] with 
    \[ \nabla g\left(\overline{X}_{n}^{2},S_{n}^{2}\right)^{T} = \left( 2 \frac{\overline{X}_{n}}{S_{n}^{2}}, -\frac{\overline{X}_{n}^{2}}{(S_{n}^{2})^{2}}\right).\] The covariance matrix $\Sigma$ consists of $\text{Var}(\overline{X}_{n})$ and $\text{Var}(S_{n}^{2})$ on the diagonal and $0$ on the off diagonal elements due to the fact that for large $n$ sample mean and variance are uncorrelated. Therefore, we have 
    \[\text{Var}(\hat{\alpha})\approx \left( \frac{2\overline{X}_{n}}{S_{n}^{2}}\right)^{2}\cdot\text{Var}(\overline{X}_{n}) + \left( \frac{\overline{X}_{n}^{2}}{(S_{n}^{2})^{2}}\right)^{2}\cdot\text{Var}(S_{n}^{2}),\]
    and plugging the estimator for sample mean and variance in, we may simplify the expression to 
    \[\text{Var}(\hat{\alpha})\approx \frac{4\alpha^{2}}{n} + \beta^{4} \cdot\text{Var}(S_{n}^{2})=\frac{6\alpha^{2}}{n}.\]
    For the estimator of $\beta$, we set \[g(\overline{X}_{n},S_{n}^{2}) = \frac{\overline{X}_{n}}{S_{n}^{2}},\] repeating the steps from above leads to,
    \[\text{Var}(\hat{\beta})\approx \left( \frac{1}{S_{n}^{2}}\right)^{2}\cdot\text{Var}(\overline{X}_{n}) + \left( \frac{\overline{X}_{n}}{(S_{n}^{2})^{2}}\right)^{2}\cdot\text{Var}(S_{n}^{2}),\] which we simplify to 
    \[\text{Var}(\hat{\beta})\approx \frac{\beta^{2}}{n\cdot\alpha} + \frac{\beta^{6}}{\alpha^{2}}\cdot\text{Var}(S_{n}^{2}).\]
\end{proof}

\end{document}

%% file: math_commands.tex
\usepackage{amsmath,amsfonts,bm}

\def\eqref#1{equation~\ref{#1}}

\def\1{\bm{1}}

\DeclareMathAlphabet{\mathsfit}{\encodingdefault}{\sfdefault}{m}{sl}
\SetMathAlphabet{\mathsfit}{bold}{\encodingdefault}{\sfdefault}{bx}{n}

